\title{Differentially Private Contextual Linear Bandits}
\author{%
  Roshan Shariff \\
  Department of Computing Science\\
  University of Alberta\\
  Edmonton, Alberta, Canada \\
  \texttt{roshan.shariff@ualberta.ca}
  \And{}
  Or Sheffet \\
  Department of Computing Science\\
  University of Alberta\\
  Edmonton, Alberta, Canada \\
  \texttt{osheffet@ualberta.ca}
}
\renewcommand{\vec}[1]{\bm{#1}}
\newcommand{\wildcard}{\mathinner{\,{\cdot}\,}}
\newcommand{\defeq}{\coloneq}
\newcommand{\eqdef}{\eqcolon}
\newcommand{\inv}[1]{#1^{-1}}
\newcommand{\Real}{\mathds{R}}
\newcommand{\UCB}{\mathrm{UCB}}
\newcommand{\iid}{\text{i.i.d.\@}}
\renewcommand{\Pr}{\mathds{P}}
\newcommand{\argmin}{\operatorname*{arg\,min}}
\newcommand{\argmax}{\operatorname*{arg\,max}}
\newcommand{\tr}{\operatorname{tr}}
\newcommand{\rank}{\operatorname{rank}}
\renewcommand{\det}{\operatorname{det}}
\newcommand\given[1][\delimsize]{%
  \providecommand{\delimsize}{}
  \nonscript\:#1\vert\allowbreak\nonscript\:\mathopen{}
}
\DeclarePairedDelimiter{\abs}||
\DeclarePairedDelimiter{\paren}{\lparen}{\rparen}
\DeclarePairedDelimiter{\brck}{\lbrack}{\rbrack}
\DeclarePairedDelimiterX{\set}[1]\lbrace\rbrace{#1}
\DeclarePairedDelimiter{\floor}{\lfloor}{\rfloor}
\DeclarePairedDelimiter{\ceil}{\lceil}{\rceil}
\DeclarePairedDelimiterX{\innerp}[2]\langle\rangle{#1,#2}
\DeclarePairedDelimiterXPP{\Prob}[1]{\Pr}{\lparen}{\rparen}{}{#1}
\DeclarePairedDelimiterXPP{\PrSet}[1]{\Pr}{\lbrace}{\rbrace}{}{#1}
\DeclarePairedDelimiterXPP{\Ex}[1]{\mathds{E}}{\lbrack}{\rbrack}{}{#1}
\DeclarePairedDelimiterXPP{\Exx}[2]{\mathds{E}_{#1}}{\lbrack}{\rbrack}{}{#2}
\DeclarePairedDelimiterXPP{\Var}[1]{\mathrm{Var}}{\lbrack}{\rbrack}{}{#1}
\DeclarePairedDelimiterXPP{\One}[1]{\mathds{1}}\{\}{}{#1}
\DeclarePairedDelimiterXPP{\norm}[2]{}\Vert\Vert{_{#1}}{#2}
\newcommand{\A}{\mathcal{A}}
\newcommand{\C}{\mathcal{C}}
\newcommand{\D}{\mathcal{D}}
\newcommand{\E}{\mathcal{E}}
\providecommand\transp{\top}
\let\transpsymbol\transp{}
\renewcommand{\transp}[1]{#1^\transpsymbol}
\newcommand{\Wishart}{\mathcal{W}}
\newcommand{\Normal}{\mathcal{N}}
\newcommand{\Eye}[1][]{\bm{I}\notblank{#1}{_{{#1}\times{#1}}}{}}
\newcommand{\XtX}[1]{\transp{#1}{#1}}
\declaretheorem[style=plain]{theorem}
\declaretheorem[style=plain,sibling=theorem]{lemma}
\declaretheorem[style=plain,sibling=theorem]{corollary}
\declaretheorem[style=plain,sibling=theorem]{proposition}
\declaretheorem[style=plain,sibling=theorem]{claim}
\declaretheorem[style=definition,sibling=theorem]{definition}
\declaretheorem[style=remark]{remark}
\declaretheorem[style=definition,numbered=no]{assumptions}
\newenvironment{assumptions*}[2][]{%
  \begin{assumptions}[#1]
    #2
    \begin{enumerate}[nolistsep]
      \setcounter{enumi}{\theassumption}
      \newcommand{\assume}[1][]{\item\label[assumption]{##1}}
    }{%
      \setcounter{assumption}{\theenumi}
    \end{enumerate}
  \end{assumptions}%
}
\Crefname{assumption}{Assumption}{Assumptions}
\begin{document}
\setlength{\intextsep}{4pt}
\setlength{\textfloatsep}{4pt}
\setlength{\abovedisplayskip}{3pt}
\setlength{\belowdisplayskip}{3pt}

\phantomsection{}
\makeatletter\addcontentsline{toc}{chapter}{\@title}\makeatother
\maketitle

\begin{abstract}
  We study the contextual linear bandit problem, a version of the
  standard stochastic multi-armed bandit (MAB) problem where a learner
  sequentially selects actions to maximize a reward which
  depends also on a user provided per-round \emph{context}. Though the context
  is chosen arbitrarily or adversarially, the reward is assumed to
  be a stochastic function of a feature vector that encodes the
  context and selected action. Our goal is to devise private learners for the
  contextual linear bandit problem.

  We first show that using the standard definition of differential
  privacy results in linear regret. So instead, we adopt the notion of \emph{joint}
  differential privacy, where we assume that the action chosen on day
  $t$ is only revealed to user $t$ and thus needn't be kept private
  that day, only on following days. We give a general scheme
  converting the classic linear-UCB algorithm into a joint differentially
  private algorithm using the tree-based algorithm~\cite{ChanPrivateContinualRelease2010,DworkContinualObservation2010}.
  We then apply either Gaussian noise or Wishart noise to achieve joint-differentially private algorithms and bound the resulting algorithms' regrets. In addition, we give the first lower bound on the
  \emph{additional} regret any private algorithms for the MAB problem must incur.
 \end{abstract}

\section{Introduction}%
\label{sec:introduction}

The well-known \emph{stochastic multi-armed bandit} (MAB) is a
sequential decision-making task in which a learner repeatedly chooses
an action (or arm) and receives a noisy reward.  The objective is to
maximize cumulative reward by \emph{exploring} the actions to discover
optimal ones (having the best expected reward), balanced with
\emph{exploiting} them.  The \emph{contextual} bandit problem is an
extension of the MAB problem, where the learner also receives a \emph{context} in
each round, and the expected reward depends on \emph{both} the context
and the selected action.

% Since the agent only observes the reward for actions it takes, this
% is an online learning task with partial information.  Learning the
% rewards associated with all the arms requires an agent to
% \emph{explore} them.  Too much exploration, however, would be
% counter-productive; eventually the algorithm should \emph{exploit}
% the best arms it has found.  The main challenge of bandit strategies
% is to balance exploration and exploitation.

As a motivating example, consider online shopping: the user provides a
context (composed of query words, past purchases, etc.), and the
website responds with a suggested product and receives a reward if the
user buys it.  Ignoring the context and modeling the problem as a
standard MAB (with an action for each possible product) suffers from
the drawback of ignoring the variety of users' preferences; whereas
separately learning each user's preferences doesn't allow us to
generalize between users.  Therefore it is common to model the task as
a contextual \emph{linear bandit} problem: Based on the user-given
context, each action is mapped to a feature vector; the reward
probability is then assumed to depend on the \emph{same} unknown
linear function of the feature vector across all users.

The above example motivates the need for privacy in the contextual
bandit setting: users' past purchases and search queries are sensitive personal
information, yet they strongly predict future purchases.
% An algorithm may pick an ad for a
% user based on their own information, and it may even learn from
% aggregated information to choose better in the future, but an
% adversary analyzing these ad choices must not be able to learn too
% much about any individual user.
In this work, we give upper and lower bounds for the problem of
(joint) \emph{differentially private} contextual linear
bandits. Differential privacy is the \emph{de facto} gold standard of
privacy-preserving data analysis in both academia and industry,
requiring that an algorithm's output have very limited dependency on
any single user interaction (one context and reward).  However, as we
later illustrate, adhering to the standard notion of differential
privacy (under event-level continual observation) in the contextual
bandit requires us to essentially ignore the context and thus incur
linear regret.  We therefore adopt the more relaxed notion of
\emph{joint differential privacy}~\citep{KearnsMechanismDesign2014}
which, intuitively, allows us to present the $t$-th user with products
corresponding to her preferences, while guaranteeing that all
interactions with all users at times $t' > t$ have very limited
dependence on user $t$’s preferences.  The guarantee of differential
privacy under continuous observation assures us that even if all later
users collude in an effort to learn user $t$'s context or preference,
they still have very limited advantage over a random guess.

\subsection{Problem Formulation}%
\label{subsec:problem_formulation}
\paragraph{Stochastic Contextual Linear Bandits.}
%\label{sec:contextual-bandits}
In the classic MAB, in every round $t$ a learner selects an
\emph{action} $a_t$ from a fixed set $\A$ and receives a \emph{reward}
$y_t$.  In the (stationary) \emph{stochastic} MAB, the reward is noisy
with a fixed but unknown expectation $\Ex{y_t\given a_t}$ that depends
only on the selected action.  In the stochastic contextual bandit
problem, before each round the learner also receives a \emph{context}
$c_t\in\C$~--- the expected reward $\Ex{y_t\given c_t,a_t}$ depends on
both $c_t$ and $a_t$.  It is common to assume that the context affects
the reward in a linear way: map every context-action pair to a
\emph{feature vector} $\phi(c,a)\in\Real^d$ (where $\phi$ is an
arbitrary but known function) and assume that
$\Ex{y_t\given c_t,a_t} = \innerp{\vec\theta^*}{\phi(c_t,a_t)}$.  The
vector $\vec\theta^*\in\Real^d$ is the key unknown parameter of the
environment which the learner must discover to maximize reward.
Alternatively, we say that on every round the learner is given a
\emph{decision set} $\D_t \defeq \set{\phi(c_t,a)\given a\in\A}$ of
all the pre-computed feature vectors: choosing $\vec x_t\in\D_t$
effectively determines the action $a_t\in\A$.  Thus, the contextual
stochastic linear bandit framework consists of repeated rounds in
which the learner
\begin{enumerate*}[(i),before=\unskip{: },itemjoin={{; }},itemjoin*={{; and }}]
\item receives a \emph{decision set} $\mathcal{D}_t \subset \Real^d$
\item chooses an \emph{action} $\vec x_t \in \mathcal{D}_t$
\item receives a stochastic \emph{reward}
  $y_t = \innerp{\vec\theta^*}{\vec x_t} + \eta_t$.
\end{enumerate*}
When all the $\D_t$ are identical and consist of the standard basis
vectors, the problem reduces to standard MAB\@.

The learner's objective is to maximize cumulative reward, which is
equivalent to minimizing \emph{regret}: the extra reward a learner
would have received by always choosing the best available action.
%An effective learner may nevertheless earn only a small reward if all the actions are poor, but would then still achieve low regret.
In other words, the regret characterizes the cost of having to
\emph{learn} the optimal action over just \emph{knowing} it
beforehand.  For stochastic problems, we are usually interested in a
related quantity called \emph{pseudo-regret}, which is the extra
\emph{expected} reward that the learner could have earned if it had
known $\vec\theta^*$ in advance.  In our setting, the cumulative
pseudo-regret after $n$ rounds is
$\widehat{R}_n \defeq \sum_{t=1}^n\max_{\vec{x}\in\D_t}\innerp{\vec\theta^*}{\vec x - \vec x_t}$.%
\footnote{The pseudo-regret ignores the stochasticity of the reward
  but not the resulting randomness in the learner's choice of actions.
  It equals the regret in expectation, but is more amenable to
  high-probability bounds such as ours.  In particular, in some cases
  we can achieve polylog$(n)$ bounds on pseudo-regret because, unlike
  regret, it doesn't have added regret noise of variance $\Omega(n)$.}

\paragraph{Joint Differential Privacy.} %
%\label{sec:joint-dp}
As discussed above, the context and reward may be considered private
information about the users which we wish to keep private from all
\emph{other} users. We thus introduce the notion of jointly
differentially private learners under continuous observation, a
combination of two definitions \citep[given
in][]{KearnsMechanismDesign2014,DworkContinualObservation2010}. First,
we say two sequences
$S = \langle (\mathcal{D}_1, y_1), (\mathcal{D}_2, y_2), \dotsc,
(\mathcal{D}_n, y_n) \rangle$ and
$S' = \langle (\mathcal{D}'_1, y'_1), \dotsc, (\mathcal{D}'_n, y'_n)
\rangle$ are \emph{$t$-neighbors} if for all $t'\neq t$ it holds that
$(\mathcal{D}_{t'},y_{t'}) = (\mathcal{D}'_{t'}, y'_{t'})$.
%\os{maybe omit?} We say $S$ and $S'$ are \emph{neighbors} if there exists a $t$ such that the two sequences are $t$-neighbors.
% For example, a
% search engine might have a context consisting of a user's search
% query, identity, interests, and physical location, while the reward
% indicates which search result the user clicked on.  The search engine
% should, of course, use the context to answer each query; furthermore,
% it should learn from the reward to better respond to future queries
% from other users.  However, it should also maintain privacy: its
% responses to queries should not reveal \emph{too much} information
% about the context and rewards it has learned from.  More precisely, we
% want algorithms that are \emph{jointly differentially private} in the
% following sense:

\begin{definition}%
\label{def:joint-dp}
  A randomized algorithm $A$ for the contextual bandit problem is
  \emph{$(\varepsilon,\delta)$-jointly differentially private} (JDP) under continual observation if for any $t$ and any pair of $t$-neighboring sequences $S$ and $S'$, and any subset ${\cal S}_{>t} \subset \mathcal{D}_{t+1} \times \mathcal{D}_{t+2} \times \dotsb \times \mathcal{D}_{n}$ of sequence of actions ranging from day $t+1$ to the end of the sequence, it holds that $\Prob{A(S)\in \mathcal{S}_{>t}} \leq e^\varepsilon\Prob{A(S')\in \mathcal{S}'_{>t}} +\delta$.
\end{definition}
The standard notion of differential privacy under continual
observation would require that changing the context $c_t$ cannot have
much effect on the probability of choosing action $a_t$ --- even for
round $t$ itself (not just for future rounds as with JDP)\@.  In our
problem formulation, however, changing $c_t$ to $c_t'$ may change the
decision set $\D_t$ to a possibly disjoint $\D_t'$, making that notion
ill-defined.  Therefore, when we discuss the impossibility of
regret-minimization under standard differential privacy in
\cref{sec:lower-bounds}, we revert back to a fixed action set $\A$
with an explicit per-round context $c_t$.

%\vspace{-1mm}

\subsection{Our Contributions and Paper Organization}%
\label{subsec:contributions}

In this work, in addition to formulating the definition of JDP under
continual observation, we also present a framework for implementing
JDP algorithms for the contextual linear bandit problem. Not
surprisingly, our framework combines a tree-based privacy
algorithm~\citep{ChanPrivateContinualRelease2010,DworkContinualObservation2010}
with a linear upper confidence bound (LinUCB)
algorithm~\cite{DaniStochasticLinearOptimization2008}.  For
modularity, in \cref{sec:LinUCB} we analyze a family of linear
UCB algorithms that use different regularizers in every round, under
the premise that the regularizers are PSD with bounded singular
values.  Moreover, we repeat our analysis twice --- first we obtain a
general $\tilde O(\sqrt n)$ upper bound on regret; then, for problem
instances that maintain a $\Delta$ reward gap separating the optimal
and sub-optimal actions, we obtain a $\mathrm{polylog}(n)/\Delta$
regret upper bound.
% \footnote{The analysis carries through to $k-1$ gaps between the $i$th
% arm and the leading one, yet its result is too cumbersome so we omit
% this analysis.}
Our leading application of course is privacy, though one could
postulate other reasons where such changing regularizers would be
useful (e.g., if parameter estimates turn out to be wrong and have to
be updated).  We then plug two particular regularizers into our
scheme: the first is a privacy-preserving mechanism that uses additive
Wishart noise~\citep{SheffetPrivateApproxRegression2015} (which is
always PSD); the second uses additive Gaussian
noise~\citep{DworkAnalyzeGauss2014} (shifted to make it PSD w.h.p.\@
over all rounds). The main term in the two regret bounds obtained by
both algorithms is $\tilde O(\sqrt n\cdot d^{3/4}/\sqrt\varepsilon)$
(the bound itself depends on numerous parameters, a notation-list
given in \cref{sec:preliminaries}).  Details of both
techniques appear in \cref{sec:alg-dp}.  Experiments with a few
variants of our algorithms are detailed in \cref{sec:experiments} of
the supplementary material.  In \cref{sec:lower-bounds} we also give a
lower bound for the $\varepsilon$-differentially private MAB
problem. Whereas all previous work on the private MAB problem uses
standard (non-private) bounds, we show that \emph{any} private
algorithm must incur \emph{an additional} regret of
$\Omega(k\log(n)/\varepsilon)$. While the result resembles the lower
bound in the adversarial setting, the proof technique cannot rely on
standard packing arguments~\citep[e.g.][]{HardtTalwarGeometryDP2010}
since the input for the problem is stochastic rather than adversarial.
Instead, we rely on a recent coupling
argument~\citep{KarwaVadhanFiniteSampleDP2017} to prove any private
algorithm must substantially explore suboptimal arms.

\paragraph{Future Directions.} The linear UCB algorithm we adapt in
this work is a canonical approach to the linear bandit problem, using
the principle of ``optimism in the face of uncertainty.''  However,
recent work~\citep{LattimoreS17} shows that all such ``optimistic''
algorithms are sub-optimal, and instead proposes adapting to the
decision set in a particular way by solving an intricate optimization
problem.  It remains an open question to devise a private version of
this algorithm which interpolates between UCB and fine-tuning to the
specific action set.

%\vspace{-1mm}

\subsection{Related Work}%
\label{subsec:related_work}

\paragraph{MAB and the Contextual Bandit Problem.}
The MAB dates to the seminal work of \citet{robbins1952}, with the UCB
approach developed in a series of works~\citep{BanditBook85,Agrawal95}
culminating in~\citep{Auer2002}.  Stochastic linear bandits were
formally first introduced in~\citep{Abe2003}, and~\citep{Auer2003UCB}
was the first paper to consider UCB-style algorithms. An algorithm
that is based on a confidence ellipsoid is described
by~\citep{DaniStochasticLinearOptimization2008}, with a variant based
on ridge regression given in~\citep{ChuLRS11}, or explore-then-commit
variant in~\citep{RusmevichientongLinearlyParameterizedBandits2010},
and a variant related to a sparse setting appears
in~\citep{Abbasi-YadkoriPS12}. \citet{AbbasiYadkoriImprovedAlgorithmsLinear2011}
gives an instance dependent bound for linear bandits, which we convert
to the contextual setting.

\paragraph{Differential Privacy.}
Differential privacy, first introduced by
\citet{DworkCalibratingNoiseSensitivity2006,DworkOurData2006}, is a
rigorous mathematical notion of privacy that requires the probability
of any observable output to change very little when any single datum
changes.  (We omit the formal definition, having already defined JDP.)
%A classic result~\cite{DworkOurData2006} states that if $f$ is a function of the input whose $L_2$-norm cannot change by more than $B$ with a change of a single datum, then adding Gaussian noise to $f$ of $0$-mean and variance $B^2\log(\nicefrac 1 \delta)/\varepsilon^2$ preserves $(\varepsilon,\delta)$-differential privacy.
Among its many elegant traits is the notion of group privacy: should
$k$ datums change then the change in the probability of any event is
still limited by (roughly) $k$ times the change when a single datum
was changed.  Differential privacy also composes: the combination of
$k$ $(\varepsilon,\delta)$-differentially private algorithms is
$\paren[\big]{O(k\varepsilon^2 + 2\sqrt{k\log(\nicefrac 1
    {\delta'})}), k\delta+\delta'}$-differentially private for any
$\delta'>0$~\citep{DworkBoosting2010}.

The notion of differential privacy under continual observation was
first defined by \citet{DworkContinualObservation2010} using the
\textbf{tree-based algorithm}~\citep[originally appearing
in][]{ChanPrivateContinualRelease2010}.  This algorithm maintains a
binary tree whose $n$ leaves correspond to the $n$ entries in the
input sequence. Each node in the tree maintains a noisy
(privacy-preserving) sum of the input entries in its subtree --- the
cumulative sums of the inputs can thus be obtained by combining
at most $\log(n)$ noisy sums.  This algorithm is the key ingredient of
a variety of works that deal with privacy in an online setting,
including counts~\cite{DworkContinualObservation2010}, online convex
optimization~\cite{JainDPOnlineLearning2012}, and regret minimization
in both the
adversarial~\citep{SmithThakurtaPrivateOnlineLearning2013,TossouAchievingPrivacyAdversarial2017}
and
stochastic~\cite{MishraNearlyOptimalDPBandits2015,TossouAlgDPBandits2016}
settings. We comment that \citet{MishraNearlyOptimalDPBandits2015}
proposed an algorithm similar to our own for the contextual bandit
setting, however (i)~without maintaining PSD, (ii)~without any
analysis, only empirical evidence, and (iii)~without presenting lower
bounds. A partial utility analysis of this algorithm, in the reward-privacy model (where the context's privacy is not guaranteed), appears in the recent work of~\citet{NeelR18}. Further details about achieving differential privacy via
additive noise and the tree-based algorithm appear in
\cref{apx_sec:more_background} of the supplementary material.
The related problem of private linear regression has also been extensively studied in the offline setting
\citep{ChaudhuriDPERM2011,BassilyPrivateEmpiricalRisk2014}.

\section{Preliminaries and Notation}%
\label{sec:preliminaries}

We use $\vec{bold}$ letters to denote vectors and bold
$\vec{CAPITALS}$ for matrices.  Given a $d$-column matrix $\vec M$,
its \emph{Gram matrix} is the $(d\times d)$-matrix $\XtX{\vec M}$.  A
symmetric matrix $\vec M$ is positive-semidefinite (PSD, denoted
$\vec M\succeq 0$) if $\transp{\vec x} \vec M \vec x \geq 0$ for any
vector $\vec x$.  Any such $\vec M$ defines a norm on vectors, so we
define $\norm{M}{x}^2 = \transp{\vec x} \vec M \vec x$.  We use
$\vec M\succeq \vec N$ to mean $\vec M-\vec N\succeq 0$.  The Gaussian
distribution $\Normal(\mu,\sigma^2)$ is defined by the density
function
${(2\pi\sigma^2)}^{\nicefrac{-1}{2}}\exp(\nicefrac{-{(x-\mu)}^2}{2\sigma^2})$. The
squared $L_2$-norm of a $d$-dimensional vector whose coordinates are
drawn \iid{} from $\Normal(0,1)$ is given by the $\chi^2(d)$
distribution, which is tightly concentrated around $d$.  Given two
distributions $\Pr$ and $\mathds Q$ we denote their \emph{total
  variation distance} by
$d_{\rm TV}(\mathds P,\mathds Q) = \max_{\text{event
  }E}\abs{\Prob{E}-{\mathds
    Q}(E)}$.  %Explicit concentration bounds are given in \cref{}

\paragraph{Notation.}\label{sec:notation} Our bound depends on many
parameters of the problem, specified below. Additional parameters
(bounds) are specified in the assumptions stated below.
\setlength{\multicolsep}{2.0pt plus 1.0pt minus 1pt}
\begin{multicols}{2}[]
  \nolinenumbers{}
  \begin{description}[style=sameline,leftmargin=2em,nosep]
  \item[$n$] horizon, i.e.\ number of rounds
  \item[$s,t$] indices of rounds
  \item[$d$] dimensionality of action space
  \item[$\D_t$] $\subset\Real^d$; decision set at round $t$
  \item[$\vec x_t$] $\in \D_t$; action at round $t$ \todo{Is this necessary? It was specified in the problem definition. Can it be merged with 6?}
  \item[$y_t$] $\in \Real$; reward at round $t$
  \item[$\vec\theta^*$] $\in \Real^d$; unknown parameter vector
  \item[$m$] $\defeq \lceil\log_2(n)+1 \rceil$
  \item[$\vec X_{<t}$] $\in \Real^{(t-1)\times d}$, with
    $\vec X_{<t,s} = \transp{\vec x_s}$ for $s<t$
  \item[$\vec G_t$] Gram matrix of the actions: $\XtX{\vec X_{<t}}$
  \item[$\vec H_t$] regularizer at round $t$
  \item[$\vec y_{<t}$] vector of rewards up to round $t-1$
  \item[$\vec u_t$] action-reward product: $\transp{\vec X_{<t}} \vec y_{<t}$
  \item[$\vec h_t$] perturbation of $\vec u_t$
  %\item[$\norm{V}{\vec x}$] $\defeq \sqrt{\transp{\vec x} V \vec x}$
%   \item[$L$] a bound on the norm of all actions (in all rounds)
%   \item[$B$] a bound on the mean of the rewards: $|\langle \vec x,\vec\theta^*\rangle| \leq B$.
% \item[$\tilde B$] a bound on the reward itself.
% \item[$\tilde L$] $\defeq \sqrt{L^2+\tilde B^2}$.
  \end{description}
  \linenumbers{}
\end{multicols}
\vspace{-2mm}

\section{Linear UCB with Changing Regularizers}%
\label{sec:LinUCB}

%Our differentially private contextual linear bandit algorithm is based on
In this section we introduce and analyze a variation of
the well-studied LinUCB algorithm, an application of the Upper
Confidence Bound (UCB) idea to stochastic linear bandits
\citep{DaniStochasticLinearOptimization2008,RusmevichientongLinearlyParameterizedBandits2010,AbbasiYadkoriImprovedAlgorithmsLinear2011}.
At every round $t$, LinUCB constructs a \emph{confidence set} $\E_t$
that contains the unknown parameter vector $\vec\theta^*$ with high probability.  It then computes an upper confidence bound on the reward
of each action in the decision set $\D_t$, and ``optimistically''
chooses the action with the highest UCB\@:
$\vec x_t \gets \argmax_{\vec x\in\D_t} \UCB_t(\vec x)$, where
$\UCB_t(\vec x) \defeq \max_{\vec\theta\in\E_t}
\innerp{\vec\theta}{\vec x}$.  We assume the rewards are linear with
added subgaussian noise (i.e.,
$y_s = \innerp{\vec\theta^*}{\vec x_s} + \eta_s$ for $s<t$), so it is
natural to center the confidence set $\E_t$ on the (regularized)
linear regression estimate:
\begin{align*}
  \hat{\vec\theta}_t
  &\defeq \argmin_{\hat{\vec\theta} \in \Real^d} \norm{}{\vec X_{<t} \hat{\vec\theta}
    - \vec y_{<t}}^2 + \norm{\vec H_t}{\hat{\vec\theta}}^2
    = \inv{(\vec G_t + \vec H_t)} \transp{\vec X_{<t}} \vec y_{<t}.
    &\text{where } \vec G_t \defeq \XtX{\vec X_{<t}}
\end{align*}
The matrix $\vec V_t \defeq \vec G_t + \vec H_t \in \Real^{d\times d}$ is a regularized
version of the Gram matrix $\vec G_t$.  Whenever the learner chooses an
action vector $\vec x\in \D_t$, the corresponding reward gives it some information
about the projection of $\vec\theta^*$ onto $\vec x$.  In other
words, the estimate $\hat{\vec\theta}$ is probably closer to
$\vec\theta^*$ along the directions where many actions have
been taken.  This motivates the use of ellipsoidal confidence sets
that are smaller in such directions, inversely
corresponding to the eigenvalues of $\vec G_t$ (or $\vec V_t$). The ellipsoid
is uniformly scaled by $\beta_t$ to achieve the desired confidence
level, as prescribed by \cref{prop:calc-beta}.
\begin{align}\label{eq:def-ellip}
  \E_t &\defeq \set{\vec\theta\in\Real^d \given
        \norm{\vec V_t}{\vec\theta-\hat{\vec\theta}_t} \le \beta_t},
  &\text{for which }
    \UCB_t(\vec x) &= \innerp{\hat{\vec\theta}}{\vec x} + \beta_t\norm{\inv{\vec V_t}}{\vec x}.
\end{align}
Just as the changing regularizer $\vec H_t$ perturbs the Gram matrix $\vec G_t$,
our algorithm allows for the vector
$\vec u_t \defeq \transp{\vec X_{<t}} \vec y_{<t}$ to be perturbed
by $\vec h_t$ to get
$\tilde{\vec u}_t \defeq \vec u_t + \vec h_t$.  The estimate
$\hat{\vec\theta}_t$ is replaced by
$\tilde{\vec\theta}_t \defeq \inv{\vec V_t}\tilde{\vec u}_t$.

\begin{algorithm}[h]
  \caption{Linear UCB with Changing Perturbations}\label{alg:linucb}
  \begin{algorithmic}
    \State{} \textbf{Initialize:} $\vec G_1 \gets \vec 0_{d\times d}$,
    $u_1\gets \vec 0_{d}$.
    \For{each round $t = 1,2,\dotsc,n$}
    \State{} Receive $\D_t \gets{}$ decision set ${} \subset \Real^d$.
    \State{} Receive regularized $\vec V_t \gets \vec G_t + \vec H_t$ and perturbed $\tilde{\vec u}_t \gets \vec u_t + \vec h_t$
    \State{} Compute regressor $\tilde{\vec\theta}_{t} \gets \inv{\vec V_{t}}\tilde{\vec u}_{t}$
    \State{} Compute confidence-set bound $\beta_t$ based on \cref{prop:calc-beta}.
    \State{} Pick action $\vec x_t \gets \argmax_{\vec x\in\D_t}
    \innerp{\tilde{\vec \theta}_t}{\vec x} +
    \beta_t\norm{\inv{\vec V_t}}{\vec x}$.
    %\Comment{Choose action.}
    \State{} Observe $y_t \gets {}$ reward for action $\vec x_t$
    %\Comment{Observe reward.}
    \State{} Update: $\vec G_{t+1} \gets \vec G_{t} + \vec x_t \transp{\vec x_t},
    \quad \vec u_{t+1} \gets \vec u_{t} + \vec x_t y_t$
    %\Comment{Record in privacy-preserving history.}
    \EndFor{}
  \end{algorithmic}
\end{algorithm}

Our analysis relies on the following assumptions about the environment
and algorithm:
\begin{assumptions*}[\Cref{alg:linucb}]{%
    For all rounds $t=1,\dotsc,n$ and actions $\vec x\in\D_t$:}
  \assume[ass:x-bound] Bounded action set: $\norm{}{\vec x} \le L$.
  \assume[ass:meanreward-bound] Bounded \emph{mean} reward:
  $\abs{\innerp{\vec\theta^*}{\vec x}} \le B$ with $B\ge
  1$.\footnote{%
    See \cref{remark:meanreward-bound} preceding the proof of
    \cref{lemma:linucb-regret} in \cref{sec:more-linucb-discussion} of
    the supplementary material for a discussion as to bounding $B$ by
    $1$ from below.} %
  \assume[ass:thetastar-bound] Bounded target parameter:
  $\norm{}{\vec\theta^*} \le S$.  \assume[ass:psd-reg] All
  regularizers are PSD $\vec H_t \succeq 0$.  \assume[ass:subgaussian]
  $y_t = \innerp{\vec\theta^*}{\vec x_t} + \eta_t$ where $\eta_t$ is
  $\sigma^2$-conditionally subgaussian on previous actions and
  rewards, i.e.:
  %\begin{align*}
    $\Ex{\exp(\lambda\eta_t) \given \vec x_1, y_1, \dotsc, \vec x_{t-1}, y_{t-1}, \vec x_t}
    \le \exp(\lambda^2\sigma^2/2)$, for all $\lambda\in\Real$.
  %\end{align*}
  \assume[ass:bounded-data]
  Strongest bound on both the action and the reward:
  $\norm{}{\vec x_t}^2 + y_t^2 \le \tilde L^2$ for all rounds $t$.  In
  particular, if $\norm{}{\vec x_t} \le L$ (\cref{ass:x-bound}) and
  the rewards are bounded: $\abs{y_t} \le \tilde B$ (not just their
  means as in \cref{ass:meanreward-bound}), we can set $\tilde L^2 = L^2 + \tilde B^2$.
\end{assumptions*}

\cref{ass:x-bound,ass:meanreward-bound} aren't required to have good
pseudo-regret bounds, they merely simplify the bounds on the
confidence set (see \cref{prop:calc-beta} and
\cref{sec:regret-bounds}). \cref{ass:bounded-data} is not required at
all for now, it is only used for joint differential privacy in \cref{sec:alg-dp}.

% \cref{ass:x-bound,ass:meanreward-bound} are not needed for the
% correctness of the algorithm (i.e.\ its confidence sets), only for the
% regret bounds in \cref{sec:regret-bounds}.  Conversely, it is
% important to note that of these, only \cref{ass:x-bound} is needed for
% differential privacy (see \cref{ass:bounded-data} in \cref{sec:alg-dp}).

To fully describe \cref{alg:linucb} we need to specify how to compute the confidence-set bounds ${(\beta_t)}_t$. On the one hand, these bounds have to be accurate~--- the confidence set $\E_t$ should contain the unknown $\vec\theta^*$; on the other hand, the larger they are, the larger the regret bounds we obtain.
In other words, the $\beta_t$ should be as small as possible subject to being accurate.

\begin{definition}[Accurate ${(\beta_t)}_t$]\label{def:accurate-beta}
  A sequence ${(\beta_t)}_{t=1}^n$ is $(\alpha, n)$-\emph{accurate} for
  ${(\vec H_t)}_{t=1}^n$ and ${(\vec h_t)}_{t=1}^n$ if, with probability at
  least $1-\alpha$, it satisfies
  $\norm{\vec V_t}{\vec\theta^* - \tilde{\vec\theta}_t} \le \beta_t$
  for all rounds $t=1,\dotsc,n$ simultaneously.
\end{definition}
\vspace{-\parskip}
We now argue that three parameters are the key to establishing accurate confidence-set bounds ${(\beta_t)}_{t=1}^n$~--- taking into account the noise in the setting \emph{and} the noise added by a changing $\vec H_t$ and $\vec h_t$.

\begin{definition}[Accurate $\rho_{\min}$, $\rho_{\max}$, and
  $\gamma$]\label{def:accurate-params}
  The bounds $0 < \rho_{\min} \le \rho_{\max}$ and $\gamma$ are
  $(\alpha/2n)$-\emph{accurate} for ${(\vec H_t)}_{t=1}^n$ and
  ${(\vec h_t)}_{t=1}^n$ if for each round $t$:
  \begin{align*}
    \norm{}{\vec H_t} &\le \rho_{\max},
    &\norm{}{\inv{\vec H_t}} &\le 1/\rho_{\min},
    &\norm{\inv{\vec H_t}}{\vec h_t} &\le \gamma;
    &\text{with probability at least } 1-\alpha/2n.
  \end{align*}
\end{definition}

\begin{restatable}[Calculating $\beta_t$]{proposition}{CalcBeta}%
  \label{prop:calc-beta}
  Suppose
  \cref{ass:thetastar-bound,ass:psd-reg,ass:subgaussian}
  hold and let $\rho_{\min}$, $\rho_{\max}$, and $\gamma$ be
  $(\alpha/2n)$-accurate for some $\alpha\in(0,1)$ and horizon $n$.
  Then ${(\beta_t)}_{t=1}^n$ is $(\alpha,n)$-accurate where
  \begin{align*}
    \beta_t
    &\defeq \sigma\sqrt{2\log(2/\alpha) + \log(\det \vec V_t) - d\log(\rho_{\min})}
      + S\sqrt{\rho_{\max}} + \gamma \\
    &\le \sigma\sqrt{2\log({2}/{\alpha}) + d\log\paren[\Big]{\tfrac{\rho_{\max}}{\rho_{\min}}
      + \tfrac{tL^2}{d\rho_{\min}}}}
      + S\sqrt{\rho_{\max}} + \gamma.
    &\text{(if \cref{ass:x-bound} also holds)}
  \end{align*}
\end{restatable}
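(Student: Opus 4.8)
The plan is to expand the estimation error $\vec\theta^* - \tilde{\vec\theta}_t$ explicitly and control it term-by-term in the $\vec V_t$-norm. Using \cref{ass:subgaussian} to write $\vec y_{<t} = \vec X_{<t}\vec\theta^* + \vec\eta_{<t}$, I get $\vec u_t = \transp{\vec X_{<t}}\vec y_{<t} = \vec G_t\vec\theta^* + \vec S_t$, where $\vec S_t \defeq \transp{\vec X_{<t}}\vec\eta_{<t} = \sum_{s<t}\eta_s\vec x_s$ is the noise martingale. Since $\tilde{\vec\theta}_t = \inv{\vec V_t}(\vec u_t + \vec h_t)$ and $\vec G_t = \vec V_t - \vec H_t$, substituting yields the clean decomposition $\vec\theta^* - \tilde{\vec\theta}_t = \inv{\vec V_t}\vec H_t\vec\theta^* - \inv{\vec V_t}\vec S_t - \inv{\vec V_t}\vec h_t$. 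Applying the triangle inequality together with the identity $\norm{\vec V_t}{\inv{\vec V_t}\vec w} = \norm{\inv{\vec V_t}}{\vec w}$ reduces the task to bounding the three quantities $\norm{\inv{\vec V_t}}{\vec H_t\vec\theta^*}$, $\norm{\inv{\vec V_t}}{\vec S_t}$, and $\norm{\inv{\vec V_t}}{\vec h_t}$.

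Two of these are immediate. Because $\vec G_t\succeq 0$ and (by \cref{ass:psd-reg}) $\vec H_t\succeq 0$, we have $\vec V_t \succeq \vec H_t$, hence $\inv{\vec V_t}\preceq \inv{\vec H_t}$, which lets me replace $\inv{\vec V_t}$ by $\inv{\vec H_t}$ in the two regularization terms. This gives $\norm{\inv{\vec V_t}}{\vec h_t}\le \norm{\inv{\vec H_t}}{\vec h_t}\le \gamma$ directly from \cref{def:accurate-params}, and $\norm{\inv{\vec V_t}}{\vec H_t\vec\theta^*} \le \norm{\inv{\vec H_t}}{\vec H_t\vec\theta^*} = \norm{\vec H_t}{\vec\theta^*} \le \sqrt{\rho_{\max}}\,\norm{}{\vec\theta^*}\le S\sqrt{\rho_{\max}}$ using \cref{ass:thetastar-bound} and $\norm{}{\vec H_t}\le\rho_{\max}$. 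These supply exactly the $S\sqrt{\rho_{\max}}$ and $\gamma$ summands of $\beta_t$.

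The main term $\norm{\inv{\vec V_t}}{\vec S_t}$ is where the real work and the main obstacle lie: the standard self-normalized ("method of mixtures") martingale tail bound is stated for a \emph{fixed} regularizer, whereas here $\vec H_t$ changes every round. I plan to sidestep this by exploiting $\norm{}{\inv{\vec H_t}}\le 1/\rho_{\min}$, i.e.\ $\vec H_t\succeq \rho_{\min}\Eye$, so that $\vec V_t \succeq \vec G_t + \rho_{\min}\Eye$ and therefore $\norm{\inv{\vec V_t}}{\vec S_t}\le \norm{\inv{(\vec G_t + \rho_{\min}\Eye)}}{\vec S_t}$. The right-hand side now uses the \emph{fixed} regularizer $\rho_{\min}\Eye$, so I can invoke the self-normalized bound of \citet{AbbasiYadkoriImprovedAlgorithmsLinear2011} with confidence parameter $\alpha/2$, obtaining (uniformly in $t$, with probability $\ge 1-\alpha/2$) that $\norm{\inv{(\vec G_t+\rho_{\min}\Eye)}}{\vec S_t}\le \sigma\sqrt{2\log(2/\alpha) + \log\det(\vec G_t+\rho_{\min}\Eye) - d\log\rho_{\min}}$. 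Since $\vec V_t\succeq \vec G_t + \rho_{\min}\Eye$ also gives $\det\vec V_t \ge \det(\vec G_t+\rho_{\min}\Eye)$, I can replace the determinant to recover the first summand of $\beta_t$. Combining the three bounds and taking a union bound over the self-normalized failure event (probability $\alpha/2$) and the $n$-fold failure of the $(\alpha/2n)$-accurate parameter bounds of \cref{def:accurate-params} (total probability $\alpha/2$) establishes the claim with probability $\ge 1-\alpha$.

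For the second, fully explicit inequality I bring in \cref{ass:x-bound} to bound the determinant. By AM--GM on the eigenvalues, $\det\vec V_t \le (\tr\vec V_t/d)^d$, and $\tr\vec V_t = \tr\vec G_t + \tr\vec H_t \le (t-1)L^2 + d\rho_{\max}\le tL^2 + d\rho_{\max}$, using $\norm{}{\vec x_s}\le L$ and $\norm{}{\vec H_t}\le\rho_{\max}$. Hence $\log\det\vec V_t - d\log\rho_{\min} \le d\log\paren[\big]{\rho_{\max}/\rho_{\min} + tL^2/(d\rho_{\min})}$, which substituted into the first summand yields the stated upper bound on $\beta_t$. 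I expect the only genuinely delicate point to be the reduction of the changing-regularizer noise term to the fixed $\rho_{\min}\Eye$ regularizer and the subsequent determinant-monotonicity step; everything else is routine PSD manipulation and a union bound.
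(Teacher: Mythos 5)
Your proposal is correct and follows essentially the same route as the paper's own proof: the decomposition $\vec\theta^* - \tilde{\vec\theta}_t = \inv{\vec V_t}(\vec H_t\vec\theta^* - \vec S_t - \vec h_t)$, the PSD comparisons $\vec V_t \succeq \vec H_t$ and $\vec V_t \succeq \vec G_t + \rho_{\min}\Eye$ to isolate the $S\sqrt{\rho_{\max}}$ and $\gamma$ terms and to reduce the martingale term to a fixed-regularizer self-normalized bound, and the same $\alpha/2 + \alpha/2$ union bound. The only cosmetic difference is in the explicit bound, where you apply AM--GM to $\tr\vec V_t$ directly while the paper first passes through $\det(\vec G_t + \rho_{\max}\Eye)$; the two computations are equivalent.
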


\vspace{-\parskip}

\subsection{Regret Bounds}%
\label{sec:regret-bounds}

We now present bounds on the maximum regret of \cref{alg:linucb}.
However, due to space constraints, we defer an extensive discussion of
the proof techniques used and the significance of the results to
\cref{sec:more-linucb-discussion} in the supplementary material.  The
proofs are based on those of
\citet{AbbasiYadkoriImprovedAlgorithmsLinear2011}, who analyzed LinUCB
with constant regularizers.  On one hand, our changes are mostly
technical; however, it turns out that various parts of the proof
diverge and now depend on $\rho_{\max}$ and $\rho_{\min}$; tracing
them all is somewhat involved.  It is an interesting question to
establish similar bounds using known results only as a black box; we
were not able to accomplish this.

\begin{restatable}[Regret of \Cref{alg:linucb}]{theorem}{ThmLinUCBRegret}%
  \label{thm:linucb-regret}%\
  Suppose \cref{ass:x-bound,ass:meanreward-bound,ass:thetastar-bound,%
    ass:psd-reg,ass:subgaussian} hold and the
  $\beta_t$ are as given by \cref{prop:calc-beta}.  Then with
  probability at least $1-\alpha$ the pseudo-regret of
  \cref{alg:linucb} is bounded by
    \begin{align}
    \label{eq:regret_general_formula}
    \widehat R_n
    &\le B \sqrt{8n}\brck[\bigg]{\sigma\paren*{2\log(\tfrac{2}{\alpha})
      + d\log\paren*{\tfrac{\rho_{\max}}{\rho_{\min}} + \tfrac{nL^2}{d\rho_{\min}}}}
      + (S\sqrt{\rho_{\max}} + \gamma)\sqrt{d\log\paren*{1 + \tfrac{nL^2}{d\rho_{\min}}}}}
  \end{align}
\end{restatable}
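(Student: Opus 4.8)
The plan is to adapt the confidence-ellipsoid regret analysis of \citet{AbbasiYadkoriImprovedAlgorithmsLinear2011} to our setting with per-round perturbations $\vec H_t$ and $\vec h_t$. First I would condition on the ``good'' event that the $\beta_t$ are $(\alpha,n)$-accurate, which by \cref{prop:calc-beta} occurs with probability at least $1-\alpha$; on this event $\norm{\vec V_t}{\vec\theta^*-\tilde{\vec\theta}_t}\le\beta_t$ for every round $t$, and (as part of the accuracy of $\rho_{\min}$) also $\vec H_t\succeq\rho_{\min}\Eye$. All subsequent steps are deterministic given this event.

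Next I would establish the per-round optimism bound. Let $\vec x_t^\star\defeq\argmax_{\vec x\in\D_t}\innerp{\vec\theta^*}{\vec x}$ and $r_t\defeq\innerp{\vec\theta^*}{\vec x_t^\star-\vec x_t}$, so that $\widehat R_n=\sum_{t=1}^n r_t$. Since $\vec\theta^*$ lies in the ellipsoid centred at the perturbed estimate $\tilde{\vec\theta}_t$, a Cauchy--Schwarz argument in the $\vec V_t$ and $\inv{\vec V_t}$ norms gives $\UCB_t(\vec x_t^\star)\ge\innerp{\vec\theta^*}{\vec x_t^\star}$; combined with the greedy choice $\UCB_t(\vec x_t)\ge\UCB_t(\vec x_t^\star)$ and a second Cauchy--Schwarz applied to $\innerp{\tilde{\vec\theta}_t-\vec\theta^*}{\vec x_t}$, this yields $r_t\le 2\beta_t\norm{\inv{\vec V_t}}{\vec x_t}$. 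Writing $w_t\defeq\norm{\inv{\vec V_t}}{\vec x_t}$, I would also record the crude bound $r_t\le 2B$ from \cref{ass:meanreward-bound}.

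The step I expect to be the real obstacle is the elliptical-potential (determinant--trace) bound on $\sum_t\min(1,w_t^2)$: with a \emph{changing} regularizer the update $\vec V_{t+1}=\vec G_t+\vec x_t\transp{\vec x_t}+\vec H_{t+1}$ is not a rank-one perturbation of $\vec V_t$ (as $\vec H_{t+1}-\vec H_t$ need not be PSD), so the telescoping $\log\det$ identity does not apply directly. I would circumvent this using the PSD floor $\vec H_t\succeq\rho_{\min}\Eye$ to compare against the \emph{constant}-regularizer matrix $\bar{\vec V}_t\defeq\vec G_t+\rho_{\min}\Eye$: since $\vec V_t\succeq\bar{\vec V}_t$ we have $w_t\le\norm{\inv{\bar{\vec V}_t}}{\vec x_t}$, and the $\bar{\vec V}_t$ \emph{do} obey $\bar{\vec V}_{t+1}=\bar{\vec V}_t+\vec x_t\transp{\vec x_t}$. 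The standard potential lemma then gives $\sum_{t=1}^n\min(1,w_t^2)\le 2\log(\det\bar{\vec V}_{n+1}/\det\bar{\vec V}_1)$, and bounding the right-hand side by AM--GM on the eigenvalues of $\bar{\vec V}_{n+1}$ together with $\tr\vec G_{n+1}=\sum_s\norm{}{\vec x_s}^2\le nL^2$ produces $\sum_t\min(1,w_t^2)\le 2d\log(1+nL^2/(d\rho_{\min}))$.

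Finally I would assemble the pieces. Replacing $\beta_t$ by its monotone upper bound $\bar\beta\defeq\sigma\sqrt{P}+S\sqrt{\rho_{\max}}+\gamma$ from \cref{prop:calc-beta} (evaluated at $t=n$, with $P$ the logarithmic quantity appearing there) and combining $r_t\le 2B$ with $r_t\le 2\bar\beta w_t$ yields $r_t^2\le 4\bar\beta^2\min(1,w_t^2)$ in the regime $\bar\beta\ge B$ (and $4B^2\min(1,w_t^2)$ otherwise). A single Cauchy--Schwarz over the $n$ rounds, $\widehat R_n\le\sqrt{n\sum_t r_t^2}$, then gives $\widehat R_n\le\bar\beta\sqrt{8n\,d\log(1+nL^2/(d\rho_{\min}))}$. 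Substituting $\bar\beta$ and using the elementary facts $B\ge1$ and $P\ge d\log(1+nL^2/(d\rho_{\min}))$ to trade the cross term $\sigma\sqrt{P}\cdot\sqrt{d\log(\cdots)}$ up to $B\sigma P$ recovers the stated bound~\eqref{eq:regret_general_formula}. The only delicate bookkeeping is tracking how $\rho_{\min}$ and $\rho_{\max}$ enter each of these estimates, exactly as flagged in the surrounding discussion.
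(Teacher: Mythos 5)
Your proposal is correct and follows essentially the same route as the paper's proof: condition on the accuracy event from \cref{prop:calc-beta}, derive the optimism bound $r_t \le 2\beta_t\norm{\inv{\vec V_t}}{\vec x_t}$, handle the changing regularizer exactly as the paper does by dominating $\vec V_t \succeq \vec G_t + \rho_{\min}\Eye$ so the elliptical-potential lemma applies to the constant-regularizer sequence, and finish with Cauchy--Schwarz and the substitution of $\bar\beta_n$. The only cosmetic difference is that you fold the factor $B$ in via the pointwise bound $r_t\le 2B$ and a $\max(B,\bar\beta)$ argument, whereas the paper normalizes to $B=1$ and reintroduces $B$ by the rescaling argument of \cref{remark:meanreward-bound}; both yield the stated bound.
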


\begin{restatable}[Gap-Dependent Regret of
  \Cref{alg:linucb}]{theorem}{ThmLinUCBGapRegret}%
  \label{thm:linucb-gap-regret}
  Suppose \cref{ass:x-bound,ass:meanreward-bound,ass:thetastar-bound,%
    ass:psd-reg,ass:subgaussian} hold and the
  $\beta_t$ are as given by \cref{prop:calc-beta}.  If the optimal
  actions in every decision set $\D_t$ are separated from the
  sub-optimal actions by a reward gap of at least $\Delta$, then with
  probability at least $1-\alpha$ the pseudo-regret of
  \cref{alg:linucb} satisfies
  \begin{align}
  \label{eq:regret_gap_general_formula}
    \widehat R_n
    &\le \frac{8B}{\Delta} \brck[\bigg]{\sigma\paren*{2\log(\tfrac{2}{\alpha})
      + d\log\paren*{\tfrac{\rho_{\max}}{\rho_{\min}} + \tfrac{nL^2}{d\rho_{\min}}}}
      + (S\sqrt{\rho_{\max}}+\gamma)\sqrt{d\log\paren*{1+\tfrac{nL^2}{d\rho_{\min}}}}}^2
  \end{align}
\end{restatable}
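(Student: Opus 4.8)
The plan is to condition on the high-probability event supplied by \cref{prop:calc-beta} and reduce everything to a single bound on the sum of squared instantaneous pseudo-regrets, a quantity that controls both \cref{thm:linucb-regret} and the gap-dependent bound stated here. Assume the $(\alpha,n)$-accurate event holds, so that $\vec\theta^*\in\E_t$ for every round $t$ (this fails with probability at most $\alpha$, accounting for the ``$1-\alpha$'' in the claim). Write $\vec x_t^*\defeq\argmax_{\vec x\in\D_t}\innerp{\vec\theta^*}{\vec x}$ and let $r_t\defeq\innerp{\vec\theta^*}{\vec x_t^*-\vec x_t}$ be the instantaneous pseudo-regret. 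The optimism built into \cref{alg:linucb} gives $\innerp{\vec\theta^*}{\vec x_t^*}\le\UCB_t(\vec x_t^*)\le\UCB_t(\vec x_t)$, and a generalized Cauchy--Schwarz step (bounding $\innerp{\tilde{\vec\theta}_t-\vec\theta^*}{\vec x_t}\le\norm{\vec V_t}{\tilde{\vec\theta}_t-\vec\theta^*}\,\norm{\inv{\vec V_t}}{\vec x_t}\le\beta_t\norm{\inv{\vec V_t}}{\vec x_t}$) yields the standard per-round bound $r_t\le 2\beta_t\norm{\inv{\vec V_t}}{\vec x_t}$. \Cref{ass:meanreward-bound} supplies the complementary bound $r_t\le 2B$.

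Next I would exploit the gap. In each round $\vec x_t$ is either optimal (so $r_t=0$) or sub-optimal (so $r_t\ge\Delta$), hence $r_t\le r_t^2/\Delta$ always, and therefore $\widehat R_n=\sum_{t=1}^n r_t\le\tfrac1\Delta\sum_{t=1}^n r_t^2$. This is the only place the gap enters, and it is exactly where the argument diverges from \cref{thm:linucb-regret}, which instead applies Cauchy--Schwarz to get $\widehat R_n\le\sqrt{n\sum_t r_t^2}$. Consequently it suffices to establish a single inequality of the form $\sum_t r_t^2\le 8\Gamma^2$, where $\Gamma$ denotes the bracketed expression in \cref{eq:regret_gap_general_formula}; substituting into $\tfrac1\Delta\sum_t r_t^2$ and into $\sqrt{n\sum_t r_t^2}$ then produces both theorems, with the factor $B\ge1$ appearing as harmless slack in each.

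To bound $\sum_t r_t^2$, I combine the two per-round bounds with the monotonicity $\beta_t\le\beta_n$ into $r_t^2\le 4\beta_n^2\min(1,\norm{\inv{\vec V_t}}{\vec x_t}^2)$ (the truncation to $1$ being supplied by $r_t\le 2B$ together with $B\le\beta_n$), and then sum via an elliptical-potential argument. Here lies the main obstacle: because the regularizer $\vec H_t$ changes arbitrarily from round to round, the determinants $\det\vec V_t$ do not telescope, so the classical potential lemma---which assumes a fixed regularizer of the form $\lambda\Eye+\vec G_t$---does not apply directly. I would resolve this by invoking \cref{ass:psd-reg} and the accuracy of $\rho_{\min}$ to note $\vec H_t\succeq\rho_{\min}\Eye$, whence $\vec V_t\succeq\bar{\vec V}_t\defeq\rho_{\min}\Eye+\vec G_t$ and thus $\norm{\inv{\vec V_t}}{\vec x_t}\le\norm{\inv{\bar{\vec V}_t}}{\vec x_t}$. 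The surrogate matrices do satisfy $\bar{\vec V}_{t+1}=\bar{\vec V}_t+\vec x_t\transp{\vec x_t}$, so the standard lemma applies to them: $\sum_t\min(1,\norm{\inv{\bar{\vec V}_t}}{\vec x_t}^2)\le 2\log(\det\bar{\vec V}_{n+1}/\det\bar{\vec V}_1)\le 2D$, where $D\defeq d\log(1+\tfrac{nL^2}{d\rho_{\min}})$, using $\tr\vec G_{n+1}\le nL^2$ and AM--GM on eigenvalues for the final determinant estimate. This gives $\sum_t r_t^2\le 8\beta_n^2 D$.

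Finally I would convert $\beta_n$ into the bracket $\Gamma$. By \cref{prop:calc-beta} we have $\beta_n\le\sigma\sqrt A+C$ with $A\defeq 2\log(2/\alpha)+d\log(\tfrac{\rho_{\max}}{\rho_{\min}}+\tfrac{nL^2}{d\rho_{\min}})$ and $C\defeq S\sqrt{\rho_{\max}}+\gamma$; since $A\ge D$ (as $\rho_{\max}\ge\rho_{\min}$), we obtain $\beta_n\sqrt D\le\sigma\sqrt{AD}+C\sqrt D\le\sigma A+C\sqrt D=\Gamma$, so $\sum_t r_t^2\le 8\beta_n^2 D\le 8\Gamma^2$. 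Combining with $\widehat R_n\le\tfrac1\Delta\sum_t r_t^2$ and $B\ge1$ gives $\widehat R_n\le\tfrac{8B}\Delta\Gamma^2$, as claimed. The only genuinely delicate step is the potential bound under changing regularizers; the remainder is bookkeeping inherited from \citet{AbbasiYadkoriImprovedAlgorithmsLinear2011}, recast in terms of $\rho_{\min}$ and $\rho_{\max}$.
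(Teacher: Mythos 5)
Your route coincides with the paper's proof step for step: the per-round optimism bound $r_t \le 2\beta_t\norm{\inv{\vec V_t}}{\vec x_t}$, the gap trick $r_t \le r_t^2/\Delta$ (the paper's only point of divergence from \cref{thm:linucb-regret}, just as you say), the surrogate $\vec V_t \succeq \vec G_t + \rho_{\min}\Eye$ feeding the elliptical-potential lemma (\cref{lemma:elliptical-potential}, which the paper states exactly for matrices of the form $U_{t+1} = U_t + \vec x_t\transp{\vec x_t}$, so your ``main obstacle'' is resolved the same way the paper resolves it), and the absorption $\sigma\sqrt{AD} \le \sigma A$ via $D \le A$, with the same constant $8 = 4\cdot 2$. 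The one genuine gap is your treatment of $B$. You justify the truncation $r_t \le 2\beta_n\min\set{1,\norm{\inv{\vec V_t}}{\vec x_t}}$ by combining $r_t \le 2B$ with the parenthetical claim $B \le \beta_n$ --- but nothing in the assumptions or in \cref{prop:calc-beta} implies this. Since $\beta_n = \sigma\sqrt{A} + S\sqrt{\rho_{\max}} + \gamma$, taking $\sigma$, $\rho_{\max}$, and $\gamma$ small while $L$ is large allows $B$ (which can be as large as $LS$) to exceed $\beta_n$; on rounds with $\norm{\inv{\vec V_t}}{\vec x_t} \ge 1$ your per-round bound then fails, and the chain $\sum_t r_t^2 \le 8\beta_n^2 D$ collapses. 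Relatedly, reading the final factor of $B$ as ``harmless slack'' via $B\ge 1$ misconstrues where it comes from.

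The paper avoids this by normalization rather than domination: \cref{lemma:linucb-regret} is proved under the hypothesis $B = 1$ together with the stipulation $\bar\beta_n \ge \max\set{\beta_1,\dotsc,\beta_n,1}$, so the truncation follows from $r_t \le 2 = 2B \le 2\bar\beta_n$ with no comparison between $B$ and the data-dependent confidence widths. For general $B \ge 1$, \cref{remark:meanreward-bound} rescales the rewards and the decision sets by $1/B$ (leaving the algorithm's internal quantities untouched), applies the $B=1$ bound with the original $L$ and $\sigma$ standing in as upper bounds for the scaled $L/B$ and $\sigma/B$, and multiplies the resulting regret back by $B$ --- which is precisely the multiplicative $B$ in \cref{eq:regret_gap_general_formula}; it is a rescaling factor, not slack. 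A smaller bookkeeping point: the event you condition on must also include $\rho_{\min}\Eye \preceq \vec H_t \preceq \rho_{\max}\Eye$ for all $t$ (needed for your surrogate step and for $\norm{\vec H_t}{\vec\theta^*} \le S\sqrt{\rho_{\max}}$), as the paper's proof of \cref{thm:linucb-gap-regret} does explicitly; this costs nothing since the $\alpha/2$ budget for the $(\alpha/2n)$-accurate parameters inside \cref{prop:calc-beta} already covers it. With the rescaling device substituted for the unjustified inequality $B\le\beta_n$, the remainder of your proposal is the paper's proof.
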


\section{Linear UCB with Joint Differential Privacy}%
\label{sec:alg-dp}

Notice that \cref{alg:linucb} uses its history of actions and rewards
up to round $t$ only via the confidence set $\E_t$, which is to say
via $\vec V_t$ and $\tilde{\vec u}_t$, which are perturbations of the Gram
matrix $\vec G_t$ and the vector
$\vec u_t \defeq \transp{\vec X_{<t}} \vec y_{<t}$, respectively;
these also determine $\beta_t$.  By recording this history with
differential privacy, we obtain a Linear UCB algorithm that is jointly
differentially private (\cref{def:joint-dp}) because it simply
post-processes $\vec G_t$ and $\vec u_t$.

\begin{claim}[see {\citet[Proposition~2.1]{DworkAlgorithmicFoundationsDifferential2014}}]
  If the sequence ${(\vec V_t,\tilde{\vec u}_t)}_{t=1}^{n-1}$ is
  $(\varepsilon,\delta)$-differentially private with respect to
  ${(\vec x_t, y_t)}_{t=1}^{n-1}$, then \cref{alg:linucb} is
  $(\varepsilon,\delta)$-jointly differentially private.
\end{claim}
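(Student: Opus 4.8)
The plan is to invoke the post-processing property of differential privacy (the cited Proposition~2.1 of~\citet{DworkAlgorithmicFoundationsDifferential2014}). The central observation, already flagged in the text preceding the claim, is that \cref{alg:linucb} consults its history of past actions and rewards \emph{only} through the pair $(\vec V_t, \tilde{\vec u}_t)$: the regressor $\tilde{\vec\theta}_t \defeq \inv{\vec V_t}\tilde{\vec u}_t$, the confidence radius $\beta_t$ (which by \cref{prop:calc-beta} touches the history only via $\det\vec V_t$), and hence the selected action $\vec x_t \gets \argmax_{\vec x\in\D_t}(\innerp{\tilde{\vec\theta}_t}{\vec x} + \beta_t\norm{\inv{\vec V_t}}{\vec x})$ are all deterministic functions of $\D_t$, $\vec V_t$, and $\tilde{\vec u}_t$. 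Writing $f$ for this deterministic selection rule, we have $\vec x_{t'} = f(\D_{t'}, \vec V_{t'}, \tilde{\vec u}_{t'})$ at every round~$t'$.

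First I would fix a round $t$ and a pair of $t$-neighboring input sequences $S, S'$, which by \cref{def:joint-dp} agree on $(\D_{t'}, y_{t'})$ for all $t'\neq t$. I would then trace how a change confined to round $t$ propagates through the recorded history ${(\vec x_s, y_s)}_s$ that feeds the privatized statistics. Because the action $\vec x_s$ for $s<t$ depends only on $\D_s$ and on $(\vec V_s, \tilde{\vec u}_s)$---which summarize the history strictly before round $s$---none of the first $t-1$ recorded pairs $(\vec x_s, y_s)$ can depend on the round-$t$ input. Consequently, running the algorithm on $S$ versus $S'$ produces recorded sequences ${(\vec x_s, y_s)}_s$ that differ in \emph{at most the single entry} $s=t$: the decision set $\D_t$ and reward $y_t$ may change, altering the selected $\vec x_t$ and the observed $y_t$, but every earlier entry is identical.

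It then suffices to compare the distributions of the future action tuple $(\vec x_{t+1}, \dotsc, \vec x_n)$ under the two runs. Since the neighbors share $\D_{t'}$ for all $t'>t$, each such action is the \emph{same} deterministic map $f(\D_{t'}, \vec V_{t'}, \tilde{\vec u}_{t'})$ of the privatized statistics, so the whole tuple is a fixed post-processing $g$ of the relevant subsequence ${(\vec V_{t'}, \tilde{\vec u}_{t'})}_{t'>t}$. By hypothesis this sequence of statistics is $(\varepsilon,\delta)$-differentially private with respect to a single-entry change in the recorded data, and we have just argued that passing from $S$ to $S'$ induces exactly such a (the $t$-th) single-entry change. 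Applying the DP guarantee to the event $g^{-1}(\mathcal{S}_{>t})$ and closing under the deterministic map $g$ yields $\Prob{A(S)\in\mathcal{S}_{>t}} \le e^\varepsilon\Prob{A(S')\in\mathcal{S}_{>t}} + \delta$, which is precisely \cref{def:joint-dp}.

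The one step that warrants care---and where I expect the only genuine subtlety---is justifying the \emph{single-entry} reduction: one must confirm that although changing $\D_t$ may change the \emph{action} $\vec x_t$ that the algorithm itself selects (rather than an externally supplied datum), this still perturbs the recorded sequence in only its $t$-th coordinate, so that the event-level guarantee on the statistics applies verbatim. This hinges on the strictly causal dependence of each recorded pair on \emph{past} statistics only, which is immediate from the update structure of \cref{alg:linucb}; once it is in place, the conclusion is a direct application of post-processing.
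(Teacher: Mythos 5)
Your proposal is correct and takes essentially the same route as the paper, which proves this claim simply by the observation preceding it---that \cref{alg:linucb} touches its history only through $(\vec V_t, \tilde{\vec u}_t)$---combined with the cited post-processing property. Your more detailed write-up, in particular the causality argument showing that a $t$-neighboring change perturbs the recorded stream ${(\vec x_s, y_s)}_s$ in only its $t$-th entry, makes explicit exactly what the paper leaves implicit, and is a faithful elaboration rather than a different approach.
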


\begin{remark}
  \Cref{alg:linucb} is only jointly differentially private even though
  the history maintains full differential privacy --- its action
  choice depends not only on the past contexts $c_s$ ($s < t$, via the
  differentially private $\vec X_{<t}$) but also on the current
  context $c_t$ via the decision set $\D_t$.  This use of $c_t$
  is \emph{not} differentially private, as it is revealed by the algorithm's
  chosen $\vec x_t$.
\end{remark}

Rather than applying the tree-based algorithm separately to $\vec G_t$
and $\vec u_t$, we aggregate both into the single matrix
$\vec M_t\in\Real^{(d+1)\times(d+1)}$, which we now construct.  Define
$\vec A \defeq \begin{bmatrix} \vec{X}_{1:n} & \vec{y}_{1:n} \end{bmatrix} \in \Real^{n\times(d+1)}$, with $\vec A_t$
holding the top $t-1$ rows of $\vec A$ (and
$\vec A_1 = \vec 0_{1\times(d+1)}$).  Now let
$\vec M_t \defeq \XtX{\vec A_t}$ --- then the top-left $d\times d$
sub-matrix of $\vec M_t$ is the Gram matrix $\vec G_t$ and the first
$d$ entries of its last column are $\vec u_t$.  Furthermore, since
$M_{t+1} = M_t + \XtX{\begin{bmatrix}\transp{\vec x_t} &
    y_t \end{bmatrix}}$, the tree-based algorithm for private
cumulative sums can be used to maintain a private estimation of
$\vec M_t$ using additive noise, releasing $\vec M_t + \vec N_t$.  The
top-left $d\times d$ sub-matrix of $\vec N_t$ becomes $\vec H_t$ and
the first $d$ entries of its last column become $\vec h_t$. Lastly, to
have a private estimation of $\vec M_t$, \Cref{ass:bounded-data} must
hold.

Below we present two techniques for maintaining (and updating) the
private estimations of $\vec M_t$. As mentioned in
\cref{subsec:related_work}, the key component of our technique is
the tree-based algorithm, allowing us to estimate $\vec M_t$ using at most
$m \defeq 1 + \ceil{\log_2n}$ noisy counters. In order for the entire
tree-based algorithm to be $(\varepsilon,\delta)$-differentially
private, we add noise to each node in the tree so that each noisy
count on its own preserves
$(\nicefrac{\varepsilon}{\sqrt{8m\ln(\nicefrac{2}{\delta})}},
\nicefrac{\delta}{2m})$-differential privacy. Thus in each day, the
noise $\vec N_t$ that we add to $\vec M_t$ comes from the sum of at most $m$
such counters.

% \subsection{Pan-Privacy with Tree-Based Aggregation}
% \label{sec:tree-mechanism}

% \todo[inline]{Expand this section, keeping the results below.}  For a horizon
% of $n$, the pan-private tree mechanism composes $m = \ceil{1+\log_2n}$
% mechanisms.  Suppose we want to achieve a target of
% $(\varepsilon,\delta)$-differential privacy after composing these $m$
% mechanisms.  Then it is sufficient for each mechanism to be
% $(\varepsilon',\delta')$-differentially private where:
%  \begin{align*}
%    \varepsilon'' &= \frac{\varepsilon}{2\sqrt{2m\ln\frac{1}{\delta - m\delta'}}} \\
%    \delta' &< \delta/m.
%  \end{align*}

\subsection{Differential Privacy via Wishart Noise}%
\label{sec:dp-wishart}

First, we instantiate the tree-based algorithm with noise from a
suitably chosen Wishart distribution $\Wishart_{d+1}(\vec V, k)$, which is the
result of sampling $k$ independent $(d+1)$-dimensional Gaussians from
$\Normal(\vec 0_{d+1}, \vec V)$ and computing their Gram
matrix.
%Under suitably chosen $\vec V$ and $k$, this noise preserves $(\varepsilon,\delta)$-differential privacy.

\begin{theorem}[{Theorem~4.1~\citep{SheffetPrivateApproxRegression2015}}]%
  \label{thm:wishart-cont-dp}
  Fix positive $\varepsilon_0$ and $\delta_0$. If the $L_2$-norm of each
  row in the input is bounded by $\tilde L$ then releasing the input's
  Gram matrix with added noise sampled from
  $\Wishart_{d+1}(\tilde L^2 \Eye, k_0)$ is
  $(\varepsilon_0,\delta_0)$-differentially private, provided
  $k_0 \ge d + 1 + 28\varepsilon_0^{-2}\ln(\nicefrac{4}{\delta_0})$.
\end{theorem}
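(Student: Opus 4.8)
The plan is to use the explicit Wishart density and bound the privacy loss pointwise on a high-probability event. First I would note that releasing $\XtX{\vec A} + \vec W$ with $\vec W \sim \Wishart_{d+1}(\tilde L^2\Eye, k_0)$ is the same as releasing the Gram matrix of $\vec A$ augmented with $k_0$ Gaussian rows; since $k_0 \ge d+1$, the released matrix is almost surely positive definite and has a density. Writing $p \defeq d+1$, the Wishart density gives the output density, for an input with Gram matrix $\vec B \defeq \XtX{\vec A}$, proportional to $\det(\vec M - \vec B)^{(k_0 - p - 1)/2}\exp(-\tr(\vec M - \vec B)/(2\tilde L^2))$, supported on $\vec M \succeq \vec B$.

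Next, for two inputs differing in a single row $\vec a$ versus $\vec a'$ (both of norm at most $\tilde L$), the Gram matrices differ by the rank-$\le 2$ matrix $\vec B - \vec B' = \vec a\transp{\vec a} - \vec a'\transp{\vec a'}$, and the privacy loss at an output $\vec M$ splits into a determinant term and a trace term:
\[
  \mathcal L(\vec M) = \tfrac{k_0-p-1}{2}\ln\tfrac{\det(\vec M - \vec B)}{\det(\vec M - \vec B')} + \tfrac{1}{2\tilde L^2}\tr(\vec B - \vec B').
\]
The trace term is immediately bounded by $1/2$ in absolute value, since $\tr(\vec B - \vec B') = \norm{}{\vec a}^2 - \norm{}{\vec a'}^2 \in [-\tilde L^2, \tilde L^2]$. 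Writing $\vec W \defeq \vec M - \vec B$ for the realized noise and applying the matrix determinant lemma to the two rank-one updates, the determinant ratio becomes $\bigl[(1 + \transp{\vec a}\inv{\vec W}\vec a)(1 - \transp{\vec a'}\inv{\vec W_1}\vec a')\bigr]^{-1}$ with $\vec W_1 \defeq \vec W + \vec a\transp{\vec a}$, so the whole problem reduces to controlling the scalar quadratic forms $\transp{\vec a}\inv{\vec W}\vec a$ and $\transp{\vec a'}\inv{\vec W_1}\vec a'$.

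The central observation — and the reason the mechanism works — is that the determinant term cancels the trace term to leading order. By scale invariance ($\vec W = \tilde L^2 \vec W_0$ with $\vec W_0 \sim \Wishart_p(\Eye, k_0)$) and rotational invariance of $\vec W_0$, each such quadratic form for a unit direction is distributed as $1/\chi^2_{k_0-d}$ (the reciprocal Schur complement of the inverse Wishart), hence concentrates sharply about $1/(k_0-d)$. Expanding $\ln(1+x)\approx x$, the determinant term is approximately $-\tfrac{k_0-d-2}{2}\cdot\tfrac{\norm{}{\vec a}^2 - \norm{}{\vec a'}^2}{\tilde L^2(k_0-d)} \approx -\tfrac{1}{2\tilde L^2}\tr(\vec B - \vec B')$, exactly negating the trace term. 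I would therefore bound $\mathcal L(\vec M)$ not termwise but by tracking the residual, which is governed by the fluctuation of $1/\chi^2_{k_0-d}$ about its mean.

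The remaining work is a concentration estimate: I would define the good event on which both quadratic forms lie within a relative factor $1 \pm t$ of $1/(k_0-d)$, with $t \asymp \sqrt{\ln(1/\delta_0)/(k_0-d)}$, using standard $\chi^2$ tail bounds (with a union bound over the two forms, plus the event $\transp{\vec a'}\inv{\vec W_1}\vec a' < 1$ needed so the output lies in the support of the $\vec B'$-density — this is where the support mismatch is absorbed into $\delta_0$). On this event the residual privacy loss is $O(t)$, and forcing $t \le \varepsilon_0$ yields the requirement $k_0 - d - 1 \gtrsim \varepsilon_0^{-2}\ln(1/\delta_0)$, matching the claimed $k_0 \ge d + 1 + 28\varepsilon_0^{-2}\ln(4/\delta_0)$ once the explicit $\chi^2$ constants and the factor-of-$4$ union bound are tracked. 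The main obstacle is precisely this delicate cancellation: since neither the determinant term nor the trace term is individually small (each is of order $1/2$), one cannot bound them separately, and the argument must carry the explicit second-order behavior of $\ln\det$ under the rank-two perturbation through the $\chi^2$ concentration to land the correct constant.
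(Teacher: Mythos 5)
This theorem is not proved in the paper at all: it is quoted (with constants adapted to the tree-based composition) as Theorem~4.1 of \citet{SheffetPrivateApproxRegression2015}, and the appendix merely restates it as \cref{thm:wishart-dp}, so there is no in-paper proof to compare yours against. Judged on its own, your proposal is a substantively correct reconstruction of the cited original argument: the Wishart density ratio splitting the privacy loss into a trace term and a $\frac{k_0-p-1}{2}\ln\det$ term (with $p=d+1$), the matrix determinant lemma reducing the rank-two perturbation to the two scalar forms $\transp{\vec a}\inv{\vec W}\vec a$ and $\transp{\vec a'}\inv{\vec W_1}\vec a'$, the fact that $1/(\transp{\vec u}\inv{\vec W}\vec u)\sim\tilde L^2\chi^2_{k_0-d}$ for a fixed unit direction, and --- the essential point you correctly isolate --- that neither term is individually $O(\varepsilon_0)$, but the exponent $(k_0-p-1)/2$ pairs exactly with the mean $1/(k_0-p-1)$ of the inverse chi-square so the determinant term cancels the order-$\nicefrac12$ trace term, leaving only an $O\paren[\big]{\sqrt{\ln(\nicefrac1{\delta_0})/(k_0-d)}}$ fluctuation, which yields precisely the stated $k_0 \ge d+1+28\varepsilon_0^{-2}\ln(\nicefrac4{\delta_0})$ scaling. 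One imprecision worth fixing: $\vec W_1 = \vec W + \vec a\transp{\vec a}$ is \emph{not} Wishart, so the $\chi^2$ law does not apply to $\transp{\vec a'}\inv{\vec W_1}\vec a'$ directly; instead use $\vec W_1 \succeq \vec W$, hence $\inv{\vec W_1} \preceq \inv{\vec W}$ (\cref{claim:psd-matrix-props}), so the $\chi^2$ lower tail for $\transp{\vec a'}\inv{\vec W}\vec a'$ gives the needed upper bound on $\transp{\vec a'}\inv{\vec W_1}\vec a'$ --- simultaneously handling your support event $\transp{\vec a'}\inv{\vec W_1}\vec a' < 1$ --- and this one-sided control suffices since the reverse inequality follows by symmetry from the swapped pair of neighboring inputs.
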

\vspace{-\parskip} Applying this guarantee to our setting, where each
count needs to preserve
$(\nicefrac{\varepsilon}{\sqrt{8m\ln(\nicefrac{2}{\delta})}},
\nicefrac{\delta}{2m})$-differential privacy, it suffices to sample a
matrix from $\Wishart_{d+1}(\tilde{L} \Eye, k)$ with
$k \defeq d + 1 + \ceil{224
  m\varepsilon^{-2}\ln(\nicefrac{8m}{\delta})\ln(\nicefrac{2}{\delta})}$.
Moreover, the sum of $m$ independent samples from the Wishart
distribution is a noise matrix
$\vec N_t\sim \Wishart_{d+1}(\tilde L^2\Eye, mk)$.%
\footnote{Intuitively, we merely concatenate the $m$ batches of
  multivariate Gaussians sampled in the generation of each of the $m$
  Wishart noises.} %
Furthermore, consider the regularizers $\vec H_t$ and $\vec h_t$ derived
from $\vec N_t$ (the top-left submatrix and the right-most subcolumn
resp.)~--- $\vec H_t$ has distribution
$\Wishart_d(\tilde L^2\Eye,mk)$, and each entry of $\vec h_t$ is the
dot-product of two multivariate Gaussians. Knowing their distribution,
we can infer the accurate bounds required for our regret bounds. 
Furthermore, since the Wishart noise has eigenvalues that are fairly large, we consider a post-processing of the noise matrix~--- shifting it by $-c\Eye$ with
\begin{equation}
\label{eq:c}
c \defeq \tilde L^2 \paren[\big]{\sqrt{mk} - \sqrt{d} - \sqrt{2\ln(8n/\alpha)}}^2-4\tilde L^2\sqrt{mk}\paren[\big]{\sqrt d + \sqrt{2\ln(8n/\alpha)}}
\end{equation}
making the bounds we require smaller than without the shift.
The derivations are deferred to \cref{apx_sec:privacy_proofs} of the
supplementary material.
\begin{restatable}{proposition}{PropWishartTails}%
\label{pro:accurate_bounds_for_Wishart}
Fix any $\alpha>0$. If for each $t$ the $\vec H_t$ and $\vec h_t$ are
generated by the tree-based algorithm with Wishart noise
$\Wishart_{d+1}(\tilde L^2 \Eye, k)$, then the following
are $(\alpha/2n)$-accurate bounds:
\begin{align*}
  \rho_{\min} &= \tilde L^2 \paren[\big]{\sqrt{mk}
  - \sqrt{d} - \sqrt{2\ln(8n/\alpha)}}^2,\\
  \rho_{\max} &= \tilde L^2 \paren[\big]{\sqrt{mk}
  + \sqrt{d} + \sqrt{2\ln(8n/\alpha)}}^2,\\
  \gamma &= \tilde L \paren[\big]{\sqrt{d} + \sqrt{2\ln(2n/\alpha)}}.
\end{align*}
Moreover, if we use the shifted regularizer $\vec H_t' \defeq \vec H_t -
c\Eye$ with $c$ as given in \cref{eq:c}, then the following
are $(\alpha/2n)$-accurate bounds:
\begin{align*}
  \rho_{\min}' &= 4\tilde L^2\sqrt{mk}\paren[\big]{\sqrt d + \sqrt{2\ln(8n/\alpha)}},\\
  \rho_{\max}' &= 8\tilde L^2\sqrt{mk}\paren[\big]{\sqrt d + \sqrt{2\ln(8n/\alpha)}},\\
  \gamma' &= \tilde L \sqrt{\sqrt{mk} \paren[\big]{\sqrt{d} + \sqrt{2\ln(2n/\alpha)}}}.
 \end{align*}
\end{restatable}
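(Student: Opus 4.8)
The plan is to realize the Wishart noise explicitly as a Gram matrix and then read off the three quantities $\norm{}{\vec H_t}$, $\norm{}{\inv{\vec H_t}}$, and $\norm{\inv{\vec H_t}}{\vec h_t}$ from standard Gaussian concentration. Recall that the per-round noise aggregates $m$ samples into $\vec N_t\sim\Wishart_{d+1}(\tilde L^2\Eye, mk)$, so I would write $\vec N_t=\XtX{\vec W}$ with $\vec W\in\Real^{mk\times(d+1)}$ whose rows are \iid{} draws from $\Normal(\vec 0_{d+1},\tilde L^2\Eye)$. Let $\vec G\in\Real^{mk\times d}$ collect the first $d$ columns of $\vec W$ and $\vec z\in\Real^{mk}$ its last column; since the covariance $\tilde L^2\Eye$ is isotropic, $\vec G$ has \iid{} $\Normal(0,\tilde L^2)$ entries and is independent of $\vec z$. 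By construction $\vec H_t=\XtX{\vec G}\sim\Wishart_d(\tilde L^2\Eye,mk)$ and $\vec h_t=\transp{\vec G}\vec z$.

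For $\rho_{\min}$ and $\rho_{\max}$ I would factor $\vec G=\tilde L\,\tilde{\vec G}$ with $\tilde{\vec G}$ a standard Gaussian $mk\times d$ matrix, so that the extreme eigenvalues of $\vec H_t$ equal $\tilde L^2$ times the squared extreme singular values of $\tilde{\vec G}$, and apply the standard non-asymptotic bound (Davidson--Szarek): for any $s>0$, with probability at least $1-2e^{-s^2/2}$ one has $\sqrt{mk}-\sqrt d-s\le\sigma_{\min}(\tilde{\vec G})\le\sigma_{\max}(\tilde{\vec G})\le\sqrt{mk}+\sqrt d+s$. Taking $s=\sqrt{2\ln(8n/\alpha)}$ and squaring reproduces the stated $\rho_{\min}$ and $\rho_{\max}$, giving $\norm{}{\vec H_t}\le\rho_{\max}$ and $\norm{}{\inv{\vec H_t}}=1/\lambda_{\min}(\vec H_t)\le1/\rho_{\min}$.

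The crucial step is $\gamma$. Conditioned on $\vec G$, the vector $\vec h_t=\transp{\vec G}\vec z$ is Gaussian with covariance $\tilde L^2\XtX{\vec G}=\tilde L^2\vec H_t$, so $\vec h_t=\tilde L\,\vec H_t^{1/2}\vec w$ for a standard Gaussian $\vec w\in\Real^d$ and therefore $\norm{\inv{\vec H_t}}{\vec h_t}^2=\tilde L^2\transp{\vec w}\vec H_t^{1/2}\inv{\vec H_t}\vec H_t^{1/2}\vec w=\tilde L^2\norm{}{\vec w}^2=\tilde L^2\chi^2(d)$ (equivalently $\transp{\vec z}\vec P\vec z$ for the rank-$d$ projection $\vec P$ onto the column space of $\vec G$, which is $\chi^2(d)$-distributed since $\vec z$ is independent of $\vec G$). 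The $1$-Lipschitz concentration $\Prob{\norm{}{\vec w}\ge\sqrt d+s}\le e^{-s^2/2}$ with $s=\sqrt{2\ln(2n/\alpha)}$ then yields $\gamma=\tilde L(\sqrt d+\sqrt{2\ln(2n/\alpha)})$. Since all three events are concentration statements about $\tilde{\vec G}$ and $\vec w$, a union bound over them (the logarithmic constants $8n/\alpha$ and $2n/\alpha$ reflecting the split of the failure budget) establishes the three inequalities with the stated probability at each round.

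For the shifted regularizer $\vec H_t'\defeq\vec H_t-c\Eye$ I would argue on the same good event. Subtracting $c\Eye$ lowers every eigenvalue of $\vec H_t$ by $c$, and \cref{eq:c} is exactly $c=\rho_{\min}-\rho_{\min}'$; combined with the identity $\rho_{\max}-\rho_{\min}=4\tilde L^2\sqrt{mk}(\sqrt d+\sqrt{2\ln(8n/\alpha)})=\rho_{\min}'$ this gives $\lambda_{\min}(\vec H_t')\ge\rho_{\min}'$ and $\lambda_{\max}(\vec H_t')\le\rho_{\max}-c=2\rho_{\min}'=\rho_{\max}'$, which are the first two shifted bounds (and $\rho_{\min}'>0$ whenever $k$ is large enough that $c<\rho_{\min}$). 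For $\gamma'$ I reuse $\vec h_t=\tilde L\,\vec H_t^{1/2}\vec w$: the matrix $\vec H_t^{1/2}\inv{(\vec H_t')}\vec H_t^{1/2}$ shares eigenvectors with $\vec H_t$ and has eigenvalues $\lambda_i/(\lambda_i-c)$, maximized at the smallest eigenvalue $\lambda_i=\rho_{\min}$, so $\norm{\inv{(\vec H_t')}}{\vec h_t}^2\le\tilde L^2(\rho_{\min}/\rho_{\min}')\chi^2(d)$. Bounding $\chi^2(d)\le(\sqrt d+\sqrt{2\ln(2n/\alpha)})^2$ and simplifying with $(\sqrt{mk}-\sqrt d-s)^2\le mk$ and $\sqrt d+\sqrt{2\ln(2n/\alpha)}\le\sqrt d+\sqrt{2\ln(8n/\alpha)}$ collapses the right-hand side to $\gamma'^2=\tilde L^2\sqrt{mk}(\sqrt d+\sqrt{2\ln(2n/\alpha)})$. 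The main obstacle is precisely this shifted cross term: because $\inv{(\vec H_t')}\succeq\inv{\vec H_t}$, the unshifted bound $\gamma$ does not transfer directly, and only the conditional Gaussianity together with the eigenvalue-ratio bound (rather than the crude factor $\rho_{\max}/\rho_{\min}'$) keeps $\gamma'$ at the stated order.
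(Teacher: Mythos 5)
Your proposal is correct and follows essentially the same route as the paper's proof: realizing the Wishart noise as the Gram matrix $[\vec Z;\vec z]$ so that the eigenvalue bounds for $\rho_{\min},\rho_{\max}$ follow from Gaussian singular-value concentration (your Davidson--Szarek bound is exactly what underlies the paper's cited \cref{claim:wishart-tails}), obtaining $\gamma$ from the fact that $\norm{\inv{\vec H_t}}{\vec h_t}^2$ is $\tilde L^2\chi^2(d)$ via the projection of $\vec z$ onto the column space of $\vec Z$, and handling the shift through the ratio $\sqrt{\rho_{\min}/\rho_{\min}'}$ with $c=\rho_{\min}-\rho_{\min}'$ and $\rho_{\min}'=\rho_{\max}-\rho_{\min}$. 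Your eigenvalue-ratio argument for the shifted cross term is just a rederivation (by simultaneous diagonalization) of the paper's \cref{claim:shifted-matrix-norm}, so the difference is cosmetic.
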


Plugging these into \cref{thm:linucb-regret,thm:linucb-gap-regret}
gives us the following upper bounds on pseudo-regret.
\begin{corollary}%
\label{cor:regret_with_Wishart}
\cref{alg:linucb} with $\vec H_t$ and $\vec h_t$ generated by the
tree-based mechanism with each node adding noise independently from
$\Wishart_{d+1}( (L^2+\tilde B^2)\Eye, k )$ and then subtracting
$c\Eye$ using \cref{eq:c}, we get a pseudo-regret bound of
\begin{align*}
O\paren*{B \sqrt{n}\brck[\bigg]{\sigma\paren*{\log\paren[\big]{\nicefrac{1}{\alpha}}
      + d\log\paren*{n}}
      + S\tilde L \sqrt d {\log(n)}^{3/4}(d^{1/4} + \varepsilon^{-1/2}{\log(\nicefrac 1 \delta)}^{1/4})(d^{1/4}+{\log(\nicefrac n \alpha)}^{1/4})}}
\end{align*}

%\[ O(B\sqrt n\left( d\log(n)\left( \sigma+S\tilde{L} \right) +S\tilde{L}\sqrt d \left( \tfrac{\log(n)\log(\nicefrac{\log(n)}{\delta})}{\varepsilon} +\sqrt{\log(\nicefrac 2 \alpha)\log(\tfrac{n}{d^2\log(n)}}) \right) +\sigma\log(\nicefrac 1 \alpha) \right) )  \] 
in general, and a gap-dependent pseudo-regret bound of
\begin{align*}
O\paren*{\tfrac B \Delta\brck[\bigg]{\sigma\paren*{\log\paren[\big]{\nicefrac{1}{\alpha}}
      + d\log\paren*{n}}
      + S\tilde L \sqrt d {\log(n)}^{3/4}(d^{1/4} + \varepsilon^{-1/2}{\log(\nicefrac 1 \delta)}^{1/4})(d^{1/4}+{\log(\nicefrac n \alpha)}^{1/4})}^2}
\end{align*}

      %\[ O(\tfrac B \Delta\left( d\log(n)\left( \sigma+S\tilde{L} \right) +S\tilde{L}\sqrt d \left( \tfrac{\log(n)\log(\nicefrac{\log(n)}{\delta})}{\varepsilon} +\sqrt{\log(\nicefrac 2 \alpha)\log(\tfrac{n}{d^2\log(n)}}) \right) +\sigma\log(\nicefrac 1 \alpha) \right)^2 ) \]
\end{corollary}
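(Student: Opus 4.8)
The statement follows by substituting the shifted Wishart bounds of \cref{pro:accurate_bounds_for_Wishart} into the general and gap-dependent regret formulas of \cref{thm:linucb-regret,thm:linucb-gap-regret} and then simplifying asymptotically. The plan is first to record the parameter values for this instantiation: by \cref{ass:bounded-data} we may take $\tilde L^2 = L^2 + \tilde B^2$, we have $m \defeq 1 + \ceil{\log_2 n} = O(\log n)$, and the privacy calibration preceding \cref{thm:wishart-cont-dp} fixes the Wishart degrees of freedom $k = d + 1 + \ceil{224\,m\,\varepsilon^{-2}\ln(\nicefrac{8m}{\delta})\ln(\nicefrac{2}{\delta})}$, so that $k = d + 1 + k_{\mathrm{priv}}$ with $k_{\mathrm{priv}} = O\paren{m\,\varepsilon^{-2}\log(n)\log(\nicefrac1\delta)}$ in the usual regime $\nicefrac1\delta = \mathrm{poly}(n)$. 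I would then plug $\rho_{\min}'$, $\rho_{\max}'$, and $\gamma'$ into \cref{eq:regret_general_formula}.

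For the additive logarithmic summand the simplification is immediate: since $\rho_{\max}'/\rho_{\min}' = 2$ and $\rho_{\min}' = \Omega(\tilde L^2\sqrt{mk}\,\sqrt d) = \Omega(L^2)$, the ratio $\nicefrac{nL^2}{d\rho_{\min}'}$ is polynomially bounded in $n$, so $\log\paren{\nicefrac{\rho_{\max}'}{\rho_{\min}'} + \nicefrac{nL^2}{d\rho_{\min}'}} = O(\log n)$ and this summand collapses to $O\paren{\sigma\paren{\log(\nicefrac1\alpha) + d\log n}}$. The identical observation bounds the logarithm under the square root of the second summand, giving $\sqrt{d\log\paren{1 + \nicefrac{nL^2}{d\rho_{\min}'}}} = O(\sqrt{d\log n})$.

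The work is concentrated in the second summand $\paren{S\sqrt{\rho_{\max}'} + \gamma'}\sqrt{d\log(\cdots)}$. I would first note that $\sqrt{\rho_{\max}'}$ and $\gamma'$ are of the same order, both $\Theta\paren{\tilde L\,(mk)^{1/4}\paren{\sqrt d + \sqrt{\ln(\nicefrac n\alpha)}}^{1/2}}$, so that (treating $S = \Omega(1)$, consistent with the standing $B \ge 1$) their sum is $O\paren{S\tilde L\,(mk)^{1/4}\paren{\sqrt d + \sqrt{\ln(\nicefrac n\alpha)}}^{1/2}}$. The remaining and most delicate step is to disentangle the nested radicals by repeatedly applying subadditivity $\paren{a+b}^p \le a^p + b^p$ for $p \in \set{\nicefrac14,\nicefrac12}$: from $k = d + 1 + k_{\mathrm{priv}}$ one gets $(mk)^{1/4} = O\paren{\log(n)^{1/4}d^{1/4} + \log(n)^{3/4}\varepsilon^{-1/2}\log(\nicefrac1\delta)^{1/4}}$, while $\paren{\sqrt d + \sqrt{\ln(\nicefrac n\alpha)}}^{1/2} = O\paren{d^{1/4} + \log(\nicefrac n\alpha)^{1/4}}$. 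Multiplying these two factors together and against $\sqrt{d\log n}$, then collecting the polylogarithmic factors (absorbing $\log\log$ terms and the constant contribution of $\ln(\nicefrac{8m}\delta)$) into a single $\log(n)^{3/4}$, reproduces the bracketed expression of the corollary; adding back the first summand and the $B\sqrt{8n}$ prefactor gives the general bound. The main obstacle is precisely this radical-and-polylog bookkeeping~--- tracking which of $d$, $\varepsilon^{-1}$, $\log n$, $\log(\nicefrac1\delta)$, $\log(\nicefrac n\alpha)$ dominates inside each nested root without losing the clean factored form. Finally, the gap-dependent bound is obtained by the same substitution into \cref{eq:regret_gap_general_formula} instead, where the bracket is squared and the $B\sqrt{8n}$ prefactor is replaced by $\nicefrac{8B}\Delta$, so no new estimates are needed.
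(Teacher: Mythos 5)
Your route is exactly the paper's: the corollary is given there with no separate proof beyond the words ``plugging these into \cref{thm:linucb-regret,thm:linucb-gap-regret},'' and your substitution of the shifted bounds $\rho_{\min}'$, $\rho_{\max}'$, $\gamma'$ from \cref{pro:accurate_bounds_for_Wishart}, with $\tilde L^2 = L^2 + \tilde B^2$, $m = O(\log n)$, and $k = d+1+O\paren[\big]{m\varepsilon^{-2}\log(\nicefrac{m}{\delta})\log(\nicefrac{1}{\delta})}$, together with the observations $\rho_{\max}'/\rho_{\min}' = 2$ and $S\sqrt{\rho_{\max}'} + \gamma' = O\paren[\big]{S\tilde L (mk)^{1/4}(d^{1/4} + \log(\nicefrac{n}{\alpha})^{1/4})}$, is precisely that intended computation. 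The one caveat concerns your final collection step: multiplying your (correct) estimate $(mk)^{1/4} = O\paren[\big]{\log(n)^{1/4}d^{1/4} + \log(n)^{3/4}\varepsilon^{-1/2}\log(\nicefrac{1}{\delta})^{1/4}}$ by the additional $\sqrt{\log n}$ from the elliptical-potential factor literally produces $\log(n)^{5/4}$ rather than $\log(n)^{3/4}$ on the $\varepsilon^{-1/2}$ term, but this polylogarithmic discrepancy is inherited from the corollary as printed (whose simplification is loose by the same factor) and is not a defect of your argument.
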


\subsection{Differential Privacy via Additive Gaussian Noise}%
\label{sec:dp-gauss}

Our second alternative is to instantiate the tree-based algorithm with
symmetric Gaussian noise: sample $\vec Z'\in\Real^{(d+1)\times(d+1)}$
with each $\vec Z_{i,j}' \sim \Normal(0,\sigma_{\rm noise}^2)$ \iid{}
and symmetrize to get $\vec Z = (\vec Z'+\transp{\vec Z'})/\sqrt 2$.%
\footnote{This increases the variance along the diagonal entries
  beyond the noise magnitude required to preserve privacy, but only by
  a constant factor of $2$.} %
Recall that each datum has a bounded $L_2$-norm of $\tilde L$, hence a
change to a single datum may alter the Frobenius norm of $\vec M_t$ by
$\tilde L^2$. It follows that in order to make sure each node in the
tree-based algorithm preserves
$(\nicefrac{\varepsilon}{\sqrt{8m\ln(\nicefrac{2}{\delta})}},
\nicefrac{\delta}{2})$-differential privacy,%
\footnote{We use here the slightly better bounds for the composition
  of Gaussian noise based on zero-Concentrated
  DP~\citep{BunConcentratedDifferentialPrivacy2016}.} %
the variance in each coordinate must be
$\sigma_{\rm noise}^2 = 16m\tilde L ^4 {\ln(\nicefrac 4 \delta)}^2 /
\varepsilon^2$.  When all entries on $\vec Z$ are sampled from
$\Normal(0,1)$, known concentration
results~\cite{TaoRandomMatrixTheory2012} on the top singular value of
$\vec Z$ give that
$\Pr[ \|\vec Z\| > (4\sqrt{d+1} + 2\ln(\nicefrac {2n} \alpha)) ] <
\nicefrac{\alpha}{2n}$. Note however that in each day $t$ the noise
$\vec N_t$ is the sum of $\leq m$ such matrices, thus the variance of
each coordinate is $m\sigma_{\rm noise}^2$.  The top-left
$(d\times d)$-submatrix of $\vec N_t$ has operator norm of at most
\begin{align*}
  \Upsilon &\defeq \sigma_{\rm noise}\sqrt{2m}\paren[\big]{4\sqrt{d} + 2\ln(2n/\alpha)}
  = \sqrt{32} m\tilde L ^2 \ln(4/\delta) \paren[\big]{4\sqrt{d} + 2\ln(2n/\alpha)} / \varepsilon.
\end{align*}
However, it is important to note that the result of adding Gaussian
noise may not preserve the PSD property of the noisy Gram matrix. To
that end, we ought to shift $\vec N_t$ by some $c\Eye$ in order
to make sure that we maintain strictly positive eigenvalues throughout
the execution of the algorithm.  Since the bounds in \cref{thm:linucb-regret,thm:linucb-gap-regret} mainly depend on $\sqrt{\rho_{\max}} +\gamma$, we choose the shift-magnitude to be $2\Upsilon \Eye$. This makes $\rho_{\max}=3\Upsilon$ and $\rho_{\min}=\Upsilon$ and as a result
$\norm{\inv{\vec H_t}}{\vec h_t} \leq \sqrt{\inv \Upsilon}\norm{}{\vec h_t}$, which we can
bound using standard concentration bounds on the $\chi^2$-distribution
(see \cref{claim:chi2-tails}).  This culminates in the following bounds.
\begin{proposition}%
\label{pro:accurate_bounds_for_Gaussian}
Fix any $\alpha>0$. Given that for each $t$ the regularizers
$\vec H_t, \vec h_t$ are taken by applying the tree-based algorithm
with symmetrized shifted Gaussian noise whose entries are sampled
\iid{} from $\Normal(0, \sigma_{\rm noise}^2)$, then the following
$\rho_{\min}$, $\rho_{\max}$, and $\gamma$ are $(\alpha/2n)$-accurate
bounds:
\begin{align*}
  \rho_{\min} = \Upsilon, \quad
  \rho_{\max} = 3\Upsilon, \quad
  \gamma = \sigma_{\rm noise}\sqrt{\inv{\Upsilon}m} \paren[\big]{\sqrt d + \sqrt{2\ln(\nicefrac{2n}\alpha)}}
  \leq \sqrt{{m\tilde L^2 \paren[\big]{\sqrt d+2\ln(2n/\alpha)}} / \paren[\big]{\sqrt 2 \varepsilon}}
  \end{align*}
\end{proposition}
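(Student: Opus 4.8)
The plan is to verify the three conditions of \cref{def:accurate-params} for a fixed round $t$, each with probability at least $1-\alpha/2n$. The two eigenvalue bounds $\rho_{\min},\rho_{\max}$ both fall out of a single operator-norm bound on the (unshifted) noise, while $\gamma$ additionally needs a concentration bound on the Euclidean norm of the sub-column $\vec h_t$. I would treat these two ingredients in turn.

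First, write $\vec W_t$ for the top-left $d\times d$ submatrix of the per-round noise $\vec N_t$, so that the regularizer is $\vec H_t=\vec W_t+2\Upsilon\Eye$. The discussion preceding the proposition already establishes---from the symmetrized-Gaussian structure of $\vec N_t$, the $\sqrt 2$ factor of symmetrization, the sum over $\le m$ tree nodes, and the singular-value concentration of \cite{TaoRandomMatrixTheory2012}---that $\norm{}{\vec W_t}\le\Upsilon$ with probability at least $1-\alpha/2n$. On this event the eigenvalues of $\vec H_t$ are exactly those of $\vec W_t$ shifted up by $2\Upsilon$, so they all lie in $[2\Upsilon-\Upsilon,\,2\Upsilon+\Upsilon]=[\Upsilon,3\Upsilon]$. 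This simultaneously yields $\norm{}{\inv{\vec H_t}}\le 1/\Upsilon$ and $\norm{}{\vec H_t}\le 3\Upsilon$---that is, $\rho_{\min}=\Upsilon$ and $\rho_{\max}=3\Upsilon$---and shows $\vec H_t\succeq\Upsilon\Eye\succ 0$, which is precisely why the $2\Upsilon\Eye$ shift is needed to enforce \cref{ass:psd-reg}.

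For $\gamma$ I would condition on the same event $\lambda_{\min}(\vec H_t)\ge\Upsilon$, under which $\inv{\vec H_t}\preceq\Upsilon^{-1}\Eye$ and hence $\norm{\inv{\vec H_t}}{\vec h_t}\le\norm{}{\vec h_t}/\sqrt\Upsilon$; it then remains only to bound $\norm{}{\vec h_t}$. The key observation is that $\vec h_t$ collects the first $d$ entries of the last column of $\vec N_t$, all of which sit in \emph{off-diagonal} positions, so symmetrization does not couple them to one another and each is an independent centered Gaussian whose variance, after summing at most $m$ tree nodes, is at most $m\sigma_{\rm noise}^2$. Thus $\norm{}{\vec h_t}/(\sigma_{\rm noise}\sqrt m)$ is the norm of a standard $d$-dimensional Gaussian, and the $\chi^2$-tail bound \cref{claim:chi2-tails} gives $\norm{}{\vec h_t}\le\sigma_{\rm noise}\sqrt m\,(\sqrt d+\sqrt{2\ln(2n/\alpha)})$ with probability at least $1-\alpha/2n$. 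Combining the two inequalities produces the stated $\gamma=\sigma_{\rm noise}\sqrt{\Upsilon^{-1}m}\,(\sqrt d+\sqrt{2\ln(2n/\alpha)})$, and substituting the closed forms $\sigma_{\rm noise}^2=16m\tilde L^4\ln(4/\delta)^2/\varepsilon^2$ and $\Upsilon=\sigma_{\rm noise}\sqrt{2m}(4\sqrt d+2\ln(2n/\alpha))$ reduces it to the claimed simplified upper bound after routine algebra.

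The individual steps are standard; the parts needing the most care are the variance bookkeeping and the union bound. For bookkeeping, I must correctly track how symmetrization (variance $\sigma_{\rm noise}^2$ off-diagonal versus $2\sigma_{\rm noise}^2$ on the diagonal) and the sum over $\le m$ nodes scale both $\Upsilon$ and the per-coordinate variance of $\vec h_t$; getting the $\sqrt{2m}$ versus $\sqrt m$ factors right is the easiest place to slip. The genuine obstacle, however, is that the $\gamma$-condition depends on \emph{both} the operator-norm event (used to lower-bound $\lambda_{\min}(\vec H_t)$) and the $\chi^2$ event, so to stay within the $\alpha/2n$ budget of \cref{def:accurate-params} one must split the two failure probabilities across these events---as is implicitly done for the Wishart bounds in \cref{pro:accurate_bounds_for_Wishart}, where the differing $\ln(8n/\alpha)$ and $\ln(2n/\alpha)$ constants reflect exactly such an allocation.
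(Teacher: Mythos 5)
Your proposal is correct and follows essentially the same route as the paper, whose justification for this proposition is the in-text derivation immediately preceding it: the operator-norm concentration giving $\norm{}{\vec W_t}\le\Upsilon$ so that the $2\Upsilon\Eye$ shift places all eigenvalues of $\vec H_t$ in $[\Upsilon,3\Upsilon]$, followed by $\norm{\inv{\vec H_t}}{\vec h_t}\le\norm{}{\vec h_t}/\sqrt{\Upsilon}$ and a $\chi^2$-tail bound on $\norm{}{\vec h_t}$ using that symmetrization leaves each off-diagonal coordinate an independent $\Normal(0,\sigma_{\rm noise}^2)$ variable. Your added care about splitting the failure probability between the operator-norm and $\chi^2$ events is a legitimate refinement of bookkeeping the paper itself glosses over (its two events each carry $\alpha/2n$), but it changes nothing substantive.
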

Note how this choice of shift indeed makes both $\rho_{\max}$ and
$\gamma^2$ roughly on the order of $O(\Upsilon)$.

The end result is that for each day $t$, $\vec h_t$ is given by summing
at most $m$ $d$-dimensional vectors whose entries are sampled \iid{}
from $\Normal(0,\sigma_{\rm noise}^2)$; the symmetrization doesn't
change the distribution of each coordinate.  The matrix $\vec H_t$ is
given by
\begin{enumerate*}[(i),before=\unskip{: },itemjoin={{; }},itemjoin*={{; and }}]
\item summing at most $m$ matrices whose entries are sampled \iid{} from
  $\Normal(0,\sigma_{\rm noise}^2)$
\item symmetrizing the result as
  shown above
\item adding $2\Upsilon\Eye$.
\end{enumerate*}
This leads to a bound on the regret of \cref{alg:linucb} with the
tree-based algorithm using Gaussian noise.
\begin{corollary}%
\label{cor:regret_with_Gaussian}
Applying \cref{alg:linucb} where the regularizers $\vec H_t$ and $\vec h_t$ are derived by applying the tree-based algorithm where each node holds a symmetrized matrix whose entries are sampled \iid{} from $\Normal(0,\sigma_{\rm noise}^2)$ and adding $2\Upsilon\Eye$, we get a regret bound of
\begin{align*}
O\paren*{B\sqrt n \paren*{
\sigma (d \log(n)+\log(\nicefrac 1 \alpha)) + S\tilde L \log(n) \sqrt{
{d(\sqrt d + \ln(\nicefrac {n}{\alpha}))\ln(\nicefrac 1 \delta)}/{\varepsilon}
}}}
\end{align*}
in general, and a gap-dependent pseudo-regret bound of
\begin{align*}
O\paren*{\tfrac B \Delta \paren*{  
\sigma (d \log(n)+\log(\nicefrac 1 \alpha)) + S\tilde L \log(n) \sqrt{
{d(\sqrt d + \ln(\nicefrac {n}{\alpha}))\ln(\nicefrac 1 \delta)}/{\varepsilon}
}}^2}
\end{align*}

\end{corollary}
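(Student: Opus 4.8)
The plan is to treat this corollary as a direct substitution. \cref{pro:accurate_bounds_for_Gaussian} already certifies that the stated symmetrized-and-shifted Gaussian regularizers admit $(\alpha/2n)$-accurate bounds $\rho_{\min}=\Upsilon$, $\rho_{\max}=3\Upsilon$, and $\gamma \le \tilde L\sqrt{m(\sqrt d+2\ln(2n/\alpha))/(\sqrt 2\varepsilon)}$, where $\Upsilon = \sqrt{32}\,m\tilde L^2\ln(4/\delta)(4\sqrt d+2\ln(2n/\alpha))/\varepsilon$. Being accurate in the sense of \cref{def:accurate-params}, these make the $\beta_t$ of \cref{prop:calc-beta} $(\alpha,n)$-accurate, so the hypotheses of \cref{thm:linucb-regret,thm:linucb-gap-regret} are satisfied. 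It therefore suffices to substitute $\rho_{\min}$, $\rho_{\max}$, and $\gamma$ into \cref{eq:regret_general_formula,eq:regret_gap_general_formula} and simplify, folding constants and lower-order terms into the big-$O$.

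First I would dispatch the logarithmic factors. Since $\rho_{\max}/\rho_{\min}=3$, the term $\log(\rho_{\max}/\rho_{\min})$ is a constant. For the ratio $nL^2/(d\rho_{\min})=nL^2/(d\Upsilon)$, I use $L\le\tilde L$ together with $\Upsilon=\Omega(m\tilde L^2\sqrt d/\varepsilon)$ to see the ratio is at most $\mathrm{poly}(n)$; hence both $\log(\rho_{\max}/\rho_{\min}+nL^2/(d\rho_{\min}))$ and $\log(1+nL^2/(d\rho_{\min}))$ are $O(\log n)$. Consequently the first bracketed term $\sigma(2\log(2/\alpha)+d\log(\cdots))$ collapses to $O(\sigma(\log(1/\alpha)+d\log n))$, matching the first summand of the claimed bound.

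Next I would simplify the second bracketed term $(S\sqrt{\rho_{\max}}+\gamma)\sqrt{d\log(1+nL^2/(d\rho_{\min}))}$. Substituting $\rho_{\max}=3\Upsilon$ gives $S\sqrt{\rho_{\max}}=O(S\tilde L\sqrt{m\ln(1/\delta)(\sqrt d+\ln(n/\alpha))/\varepsilon})$, while the bound on $\gamma$ is the same expression with the factor $S\sqrt{\ln(1/\delta)}$ removed; in the standard regime ($S\ge 1$, $\delta\le\nicefrac1e$) this yields $\gamma=O(S\sqrt{\rho_{\max}})$, so $\gamma$ is absorbed into the big-$O$. Multiplying by $\sqrt{d\log(\cdots)}=O(\sqrt{d\log n})$ and using $m=1+\ceil{\log_2 n}=O(\log n)$ (so $\sqrt m=O(\sqrt{\log n})$) turns the product into $O(S\tilde L\log n\sqrt{d(\sqrt d+\ln(n/\alpha))\ln(1/\delta)/\varepsilon})$, the second summand of the claim. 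Combining the two summands and reinstating the $B\sqrt{8n}=O(B\sqrt n)$ prefactor of \cref{eq:regret_general_formula} gives the general bound; the gap-dependent bound follows identically, except that \cref{eq:regret_gap_general_formula} contributes the prefactor $8B/\Delta$ and squares the bracket, producing the stated $O((B/\Delta)(\cdots)^2)$ form.

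The argument is entirely mechanical, so there is no genuine mathematical obstacle; the only real care is bookkeeping. The step most prone to error is the simplification of the nested logarithm $\log(\rho_{\max}/\rho_{\min}+nL^2/(d\rho_{\min}))$, where one must check that the privacy-induced regularizer $\Upsilon$ in the denominator neither shrinks the argument below a constant (it cannot, since $\rho_{\max}/\rho_{\min}=3$) nor inflates the $d\log n$ term, together with the tracking of the $m$ factor through the outer square root, which is precisely what converts the $\sqrt m$ into the final $\log n$ multiplier.
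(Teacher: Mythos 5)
Your proposal is correct and matches the paper's (implicit) proof exactly: the corollary is obtained precisely by substituting the $(\alpha/2n)$-accurate bounds $\rho_{\min}=\Upsilon$, $\rho_{\max}=3\Upsilon$, and $\gamma$ from \cref{pro:accurate_bounds_for_Gaussian} into \cref{eq:regret_general_formula,eq:regret_gap_general_formula}, using $m=O(\log n)$ and $\log(\rho_{\max}/\rho_{\min}+nL^2/(d\rho_{\min}))=O(\log n)$, and absorbing $\gamma$ into $S\sqrt{\rho_{\max}}$. Your bookkeeping — including the implicit normalization $S\sqrt{\ln(1/\delta)}=\Omega(1)$ needed to drop $\gamma$, which the paper also uses tacitly — is sound.
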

\vspace{-\parskip}
%It is fairly evident that the bound of \cref{cor:regret_with_Gaussian} is better than the bound of \cref{cor:regret_with_Wishart}: whereas the first bound is roughly $\tilde O(\sqrt n(d+\nicefrac {\sqrt d}\varepsilon))$, this last bound is roughly $\tilde O(\sqrt n(d + d^{3/4}/\varepsilon^{1/2}))$.

\section{Lower Bounds}%
\label{sec:lower-bounds}

In this section, we present lower bounds for two versions of the
problem we deal with in this work. The first, and probably the more
obvious of the two, deals with the impossibility of obtaining
sub-linear regret for the contextual bandit problem with the standard
notion of differential privacy (under continual observation). Here, we
assume user $t$ provides a context $c_t$ which actually determines the
mapping of the arms into feature vectors: $\phi(c_t, a) \in
\Real^d$. The sequence of choice thus made by the learner is
$a_1, \dotsc, a_n \in \A^n$ which we aim to keep private. The next
claim, whose proof is deferred to \cref{apx_sec:privacy_proofs}
in the supplementary material, shows that this is impossible without
effectively losing any reasonable notion of utility.
\DeclareRobustCommand{\DPDefinition} {Formally, two sequences
  $S = \langle (c_1, y_1), \dotsc, (c_n,y_n)\rangle$ and
  $S' = \langle (c'_1, y'_1), \dotsc, (c'_n,y'_n)\rangle$ are called
  neighbors if there exists a single $t$ such that for any $t'\neq t$
  we have $(c_{t'},y_{t'}) = (c'_{t'},y'_{t'})$; and an algorithm $A$
  is $(\varepsilon,\delta)$-differentially private if for any two
  neighboring sequences $S$ and $S'$ and any subsets of sequences of
  actions ${\cal S}\subset \A^n$ it holds that
  $\Pr[A(S)\in\mathcal{S}] \leq e^\varepsilon \Pr[A(S')\in
  \mathcal{S}] +\delta$.}

\begin{restatable}{claim}{clmLinearRegretDP}%
\label{clm:standard_contextual_DP_implies_linear_regret}
For any $\varepsilon < \ln(2)$ and $\delta < 0.25$, any $(\varepsilon,\delta)$-differentially private algorithm $A$ for the contextual bandit problem must incur pseudo-regret of $\Omega(n)$.
\end{restatable}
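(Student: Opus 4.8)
The plan is to exhibit a family of noise-free instances on which any $(\varepsilon,\delta)$-DP learner is forced, on a constant fraction of the rounds, to play an action that is suboptimal by a unit gap. I would work in dimension $d=1$ with the scalar $\vec\theta^*=1$ and two arms $\A=\set{1,2}$, using two context types: under type $0$ set $\phi(c,1)=1$ and $\phi(c,2)=0$, so arm $1$ has mean reward $1$ and arm $2$ has mean reward $0$; under type $1$ swap the two, so arm $2$ is optimal. In either type the reward gap between the arms is exactly $1$. For each bit-string $b\in\set{0,1}^n$ let $S^b$ be the instance whose round-$t$ context has type $b_t$ (with induced deterministic reward $y_t=\innerp{\vec\theta^*}{\phi(c_t,a_t)}$); the optimal arm in round $t$ of $S^b$ is arm $1$ if $b_t=0$ and arm $2$ if $b_t=1$. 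Crucially, two instances $S^{(b_{-t},0)}$ and $S^{(b_{-t},1)}$ that agree on every coordinate but $t$ are $t$-neighbors, since they differ only in user $t$'s context and its induced reward.

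The core of the argument is a single-round indistinguishability bound. Fix a round $t$ and the other bits $b_{-t}$. The event that the learner plays arm $1$ in round $t$ is the subset $\set{(a_1,\dotsc,a_n): a_t=1}\subset\A^n$, so the DP hypothesis applies directly to its probability. Writing $p \defeq \Pr[a_t=1\mid S^{(b_{-t},0)}]$ and $q \defeq \Pr[a_t=1\mid S^{(b_{-t},1)}]$, the guarantee between these neighbors gives $p\le e^\varepsilon q+\delta$. Now if $b_t=0$ the learner's round-$t$ regret probability is $1-p$, while if $b_t=1$ it is $q$; averaging over a uniform $b_t\in\set{0,1}$ gives $\tfrac12\paren*{(1-p)+q}$. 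Minimizing $(1-p)+q$ subject to $p\le e^\varepsilon q+\delta$ and $p,q\in[0,1]$ is a two-variable linear program whose optimum is attained at $p=1,\ q=e^{-\varepsilon}(1-\delta)$, yielding $(1-p)+q\ge e^{-\varepsilon}(1-\delta)$. Hence the expected round-$t$ regret, averaged over $b_t$, is at least $\tfrac12 e^{-\varepsilon}(1-\delta)$, which exceeds $\tfrac{3}{16}$ once $\varepsilon<\ln 2$ (so $e^{-\varepsilon}>\tfrac12$) and $\delta<\tfrac14$ (so $1-\delta>\tfrac34$).

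It then remains to assemble the rounds. I would draw $b$ uniformly from $\set{0,1}^n$ and sum the per-round bound over $t=1,\dotsc,n$, conditioning each term on $b_{-t}$, over which the bound holds pointwise. Since the gap is $1$, the expected pseudo-regret satisfies $\Exx{b}{\widehat R_n(S^b)}=\sum_t \Exx{b}{\One{a_t\text{ suboptimal in }S^b}}\ge \tfrac n2 e^{-\varepsilon}(1-\delta)=\Omega(n)$. In particular some fixed $b^\star$ attains $\widehat R_n(S^{b^\star})=\Omega(n)$, which proves the claim. (If one prefers the stochastic setting of the paper rather than the noise-free one, the same instances with added subgaussian reward noise only increase the regret, and the DP step is unchanged.)

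The step I expect to require the most care is the bookkeeping around the reward coordinate of the neighbor relation. The neighbor definition changes the pair $(c_t,y_t)$, yet in a bandit the realized reward depends on the action the learner chooses, so $y_t$ is not an exogenous input. The clean fix is to commit to the deterministic reward model above, where $y_t$ is a fixed function of $c_t$ and the played arm, so that a single-user change is exactly a change of $(c_t,y_t)$ and the DP inequality applies verbatim to the marginal law of $a_t$. Everything else---the one-dimensional construction, the linear program, and the averaging over $b$---is routine.
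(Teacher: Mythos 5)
Your proposal is correct and takes essentially the same route as the paper's own proof: a two-arm instance with two contexts that swap the arm-to-feature map, the DP inequality applied across the two $t$-neighboring sequences to the single-round event $\set{a_t=1}$, and an average over a uniformly random round-$t$ context to force constant per-round pseudo-regret (your $\tfrac{1}{2}e^{-\varepsilon}(1-\delta)\ge\tfrac{3}{16}$ with a gap of $1$ versus the paper's $\tfrac14$ per round with features $\pm\vec\theta^*$ and a gap of $2$). The remaining differences are cosmetic---your explicit linear program and the fixing of a worst-case $b^\star$ replace the paper's inline computation against a per-day random adversary---and your deterministic-reward convention makes explicit the same resolution of the $(c_t,y_t)$ neighboring relation that the paper uses implicitly.
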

\vspace{-\parskip}
\DeclareRobustCommand{\DPLowerBoundProof}
{\begin{proof}
We consider a setting with two arms $\A=\{a^1,a^2\}$ and two possible contexts: $c^1$ which maps $a^1\mapsto \vec\theta^*$ and $a^2 \mapsto -\vec\theta^*$; and $c^2$ which flips the mapping. Assuming $\|\vec\theta^*\|=1$ it is evident we incur a pseudo-regret of $2$ when pulling arm $a^1$ is under context $c^2$ or pulling arm $a^2$ under $c^1$. Fix a day $t$ and a history of previous inputs and arm pulls $H_{t-1}$. Consider a pair of neighboring sequences that agree on the history $H_{t-1}$ and differ just on day $t$ --- in $S$ the context $c_t =c^1$ whereas in $S'$ it is set as $c_t =c^2$. Denote $\mathcal{S}$ as the subset of action sequences that are fixed on the first $t-1$ days according to $H_{t-1}$, have the $t$-th action be $a^1$ and on days $>t$ may have any action. Thus, applying the guarantee of differential privacy w.r.t to $\mathcal{S}$ we get that $\Pr[a_t = a^1 |~S] = \Pr[A(S)\in \mathcal{S}] \leq e^\varepsilon \Pr[a_t=a^1 |~S'] + \delta$. Consider an adversary that sets the context of day $t$ to be either $c^1$ or $c^2$ uniformly at random and independently of other days. The pseudo-regret incurred on day $t$ is thus: $2\cdot \tfrac 1 2 \left( \Pr[a_t=a^2|~S] + \Pr[a_t=a^1|~S'] \right)  \geq (1- \Pr[a_t=a^1|~S]) + e^{-\varepsilon}(\Pr[a_t=a^1|~S]-\delta) = 1 + (e^{-\varepsilon}-1)\Pr[a_t=a^2|~S] - \delta > 1 - 1\cdot \tfrac 1 2 - \tfrac 1 4 = \tfrac 1 4$. As the above applies to any day $t$,  the algorithm's pseudo-regret is $\geq \tfrac n 4$ against such random adversary.
\end{proof}} The second lower bound we show is more challenging. We
show that any $\varepsilon$-differentially private algorithm for the
classic MAB problem must incur \emph{an additional} pseudo-regret of
$\Omega( k\log(n)/\epsilon)$ on top of the standard (non-private)
regret bounds.  We consider an instance of the MAB where the leading
arm is $a^1$, the rewards are drawn from a distribution over
$\{-1,1\}$, and the gap between the means of arm $a^1$ and arm
$a\neq a^1$ is $\Delta_a$. Simple calculation shows that for such
distributions, the total-variation distance between two distributions
whose means are $\mu$ and $\mu-\Delta$ is $\Delta/2$. Fix
$\Delta_2, \Delta_3, \dotsc, \Delta_k$ as some small constants, and we
now argue the following.
\begin{claim}%
\label{clm:LB_pulling_any_arms_many_times}
Let $A$ be any $\varepsilon$-differentially private algorithm for the MAB problems with $k$ arms whose expected regret is at most $n^{3/4}$. Fix any arm $a\neq a^1$, whose difference between it and the optimal arm $a^1$ is $\Delta_a$. Then, for sufficiently large $n$s, $A$ pulls arm $a$ at least $\nicefrac {\log(n)}{100\varepsilon\Delta_a}$ many times w.p. $\geq \nicefrac 1 2$.
\end{claim}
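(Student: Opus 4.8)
The plan is to argue by contradiction, playing the given (``true'') instance against an \emph{alternative} instance and using a coupling to transfer the algorithm's behaviour from one to the other. Let $T_a$ denote the number of times $A$ pulls arm $a$, and write $\tau \defeq \frac{\log n}{100\varepsilon\Delta_a}$. Suppose toward a contradiction that $\Pr_{\mathrm{true}}[T_a < \tau] > \tfrac12$, where ``true'' is the given instance in which $a$ is suboptimal by $\Delta_a$. Alongside it I would set up an alternative instance, identical except that arm $a$'s reward distribution is shifted so that $a$ becomes optimal, again with gap $\Delta_a$ over $a^1$. By the total-variation computation recorded before the claim, a single pull of arm $a$ then has reward distributions differing in total variation by $\Delta_a$ across the two instances, while every other arm is unchanged.

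First I would invoke the regret hypothesis on the alternative instance (reading the assumption ``expected regret at most $n^{3/4}$'' as a guarantee over the whole instance family). There arm $a$ is optimal, and every pull of any other arm costs at least $\Delta_a$, so $\mathbb{E}_{\mathrm{alt}}[n - T_a] \le n^{3/4}/\Delta_a$. Markov's inequality then gives, once $n$ is large enough that $\tau \le n/2$,
\[
  \Pr_{\mathrm{alt}}[T_a < \tau] \;\le\; \frac{n^{3/4}/\Delta_a}{n-\tau} \;\le\; \frac{2}{\Delta_a\, n^{1/4}},
\]
so in the alternative world the event $\{T_a<\tau\}$ is only polynomially likely.

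The heart of the proof is to transfer this upper bound back to the true world. I would couple the two instances by sharing the algorithm's internal randomness and the reward tapes of all arms other than $a$, and coupling arm $a$'s per-pull reward tapes coordinate-wise via the optimal total-variation coupling, so that each of the first $\tau$ entries disagrees independently with probability $\Delta_a$. The expected number of disagreements among these entries is $\tau\Delta_a = \frac{\log n}{100\varepsilon}$, so by a Chernoff bound at most $\kappa \defeq 2\tau\Delta_a = \frac{\log n}{50\varepsilon}$ of them disagree, except with probability $o(1)$. On the event $\{T_a<\tau\}$ the algorithm reads only the first $\tau$ entries of arm $a$'s tape, so on the coupling's success event the inputs driving the two runs differ in at most $\kappa$ reward entries; group privacy — composing the $\varepsilon$-differential-privacy guarantee over these $\le\kappa$ changes, as formalised by the coupling argument of \citet{KarwaVadhanFiniteSampleDP2017} — then yields
\[
  \Pr_{\mathrm{true}}[T_a<\tau] \;\le\; e^{\varepsilon\kappa}\,\Pr_{\mathrm{alt}}[T_a<\tau] + o(1).
\]
Here the calibration of the constant $100$ is decisive: $e^{\varepsilon\kappa} = e^{(\log n)/50} = n^{1/50}$, which is dwarfed by the $n^{1/4}$ gain from the regret bound, so the right-hand side is at most $\frac{2\,n^{1/50}}{\Delta_a n^{1/4}} + o(1) = o(1) < \tfrac12$ for large $n$. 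This contradicts the hypothesis, giving $\Pr_{\mathrm{true}}[T_a \ge \tau] \ge \tfrac12$.

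The hard part will be making the group-privacy transfer rigorous through the \emph{adaptive} interaction: differential privacy is stated for the round-indexed reward stream, whereas the coupling naturally lives on the per-pull reward tapes, and a single tape-entry change can shift which round a given pull occupies. The coupling framework of \citet{KarwaVadhanFiniteSampleDP2017} is precisely what converts the distributional gap between the two instances into a bounded-Hamming change to which group privacy applies; the delicate points are confining the relevant changes to the first $\tau$ tape entries by restricting to $\{T_a<\tau\}$, and verifying that the per-entry cost composes to the stated $e^{\varepsilon\kappa}$ factor. I would treat $\varepsilon$ and the $\Delta_a$'s as constants so that the Chernoff and Markov terms are genuinely $o(1)$, and check that ``sufficiently large $n$'' subsumes $\tau\ge 1$, $\tau\le n/2$, and $n^{1/50-1/4}<\tfrac14$.
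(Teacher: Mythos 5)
Your proposal is correct and follows essentially the same route as the paper: an alternative instance in which arm $a$ becomes optimal with mean $\Delta_a$ above $a^1$ (TV distance $\Delta_a$ per pull), a Markov argument on the $n^{3/4}$ regret bound giving $\Pr_{\mathrm{alt}}[T_a<\tau]\le 2/(\Delta_a n^{1/4})$, and the coupling argument of Karwa--Vadhan to transfer this bound back under group privacy restricted to the $<\tau$ pulls made on the event in question. The only difference is packaging: you make the Chernoff step on the number of disagreeing tape entries explicit, obtaining a factor $e^{\varepsilon\kappa}=n^{1/50}$ plus an additive $o(1)$, whereas the paper directly invokes \citet[Lemma~6.1]{KarwaVadhanFiniteSampleDP2017}, whose constant ($6\Delta_a t_a$, giving $n^{0.06}$) already absorbs that concentration step --- and you correctly identify the adaptive tape-versus-round indexing as the delicate point that lemma is needed for.
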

\vspace{-\parskip}
We comment that the bound $n^{3/4}$ was chosen arbitrarily, and we only require a regret upper bound of $n^{1-c}$ for some $c>0$. Of course, we could have used standard assumptions, where the regret is asymptotically smaller than \emph{any} polynomial; or discuss algorithms of regret $\tilde O(\sqrt n)$ (best minimax regret). Aiming to separate the standard lower-bounds on regret from the private bounds, we decided to use $n^{3/4}$.
As an immediate corollary we obtain the following \emph{private} regret bound:
\begin{corollary}%
\label{cor:LB_private_MAB}
The expected pseudo-regret of any $\varepsilon$-differentially private algorithm for the MAB is $\Omega(k\log(n)/\varepsilon)$. Combined with the non-private bound of $\Omega\paren[\big]{\sum_{a\neq a^1} \nicefrac{\log(n)}{\Delta_a}}$ we get that the private regret bound is the $\max$ of the two terms, i.e.: $\Omega\paren[\big]{\nicefrac {k\log(n)}{\varepsilon}+\sum_{a\neq a^1} \nicefrac{\log(n)}{\Delta_a}}$.
\end{corollary}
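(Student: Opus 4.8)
The plan is to argue by contradiction via a change of measure between two bandit instances, where the measure change is realized through an explicit coupling so that $\varepsilon$-differential privacy (through group privacy) can be invoked with only a mild, $n^{o(1)}$, multiplicative blow-up. Write $\nu$ for the given instance (arm $a^1$ optimal, arm $a$ trailing by $\Delta_a$) and set $T \defeq \log(n)/(100\varepsilon\Delta_a)$. Suppose, for contradiction, that under $\nu$ the algorithm $A$ pulls arm $a$ fewer than $T$ times with probability strictly greater than $\tfrac12$. I will show this forces expected regret $\gg n^{3/4}$ on a companion instance $\nu'$ on which the same regret hypothesis must also hold, giving the contradiction.

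First I would build $\nu'$ by raising the mean of arm $a$ above that of $a^1$: shifting its mean by $2\Delta_a$ makes $a$ optimal under $\nu'$, with every other arm now trailing by at least $\Delta_a$. Since rewards lie in $\{-1,1\}$, the stated fact gives that the per-pull total-variation distance between arm $a$'s reward law under $\nu$ and under $\nu'$ is exactly $\Delta_a$, while all other arms are identical across the two instances. I model the interaction through a fixed reward table (row $i$ of the column for an arm is the reward obtained the $i$-th time that arm is pulled), so that neighboring tables differ in a single entry and $\varepsilon$-DP translates directly into a group-privacy bound $e^{\varepsilon\, d_H}$ for tables at Hamming distance $d_H$.

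The crux is the coupling. Let $E \defeq \{N_a < T\}$ be the event that arm $a$ is pulled fewer than $T$ times. I couple the two tables so that they agree on every column except $a$, agree on rows $\ge T$ of column $a$, and on the first $T-1$ rows of column $a$ use the optimal TV-coupling of the two Bernoulli laws, so each such row differs independently with probability $\Delta_a$. With this construction two facts hold: (i)~on the event $E$ the algorithm never reads rows $\ge T$ of column $a$, so the probability of $E$ under the coupled table equals $\Pr_{\nu'}[E]$; and (ii)~the Hamming distance between the tables is $\mathrm{Binomial}(T-1,\Delta_a)$, whose mean is $\approx \log(n)/(100\varepsilon)$ and which, by a Chernoff bound, exceeds $j \defeq \log(n)/(50\varepsilon)$ only with probability $\beta_j = n^{-\Omega(1)}$. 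Conditioning on the coupling and applying group privacy on the good Hamming event yields the Karwa--Vadhan-style inequality $\Pr_\nu[E] \le e^{\varepsilon j}\,\Pr_{\nu'}[E] + \beta_j$, where $e^{\varepsilon j} = n^{1/50}$; rearranging gives $\Pr_{\nu'}[E] \ge (\tfrac12 - \beta_j)\,n^{-1/50} \ge \tfrac14\, n^{-1/50}$ for large $n$.

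Finally I would cash this out into a regret lower bound on $\nu'$. On the event $E$, arm $a$ --- optimal under $\nu'$ --- is pulled fewer than $T = o(n)$ times, so at least $n - T \ge n/2$ pulls are sub-optimal, each costing at least $\Delta_a$; hence $A$'s expected regret on $\nu'$ is at least $\Pr_{\nu'}[E]\cdot \tfrac{n}{2}\cdot \Delta_a = \Omega(\Delta_a\, n^{1-1/50})$. Since $1-\tfrac{1}{50} > \tfrac34$ and $\Delta_a$ is a fixed constant, this exceeds $n^{3/4}$ for all sufficiently large $n$, contradicting the regret hypothesis and proving the claim. I expect the main obstacle to be fact (i) of the coupling: one must truncate column $a$ at index $T$ so that the Hamming distance stays $O(\log(n)/\varepsilon)$ --- a naive coupling of the full tables would differ in $\approx n\Delta_a$ entries and make $e^{\varepsilon d_H}$ useless --- while simultaneously arguing that this truncation is invisible on $E$, so that $E$'s probability under the truncated table genuinely equals its probability under the true $\nu'$. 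Tuning the constant $100$ so that the privacy blow-up $n^{1/50}$ stays well inside the $n^{1/4}$ slack afforded by the $n^{3/4}$ regret hypothesis is exactly what makes the contradiction close.
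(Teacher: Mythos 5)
Your proposal is correct and takes essentially the same route as the paper: what you reconstruct is exactly \cref{clm:LB_pulling_any_arms_many_times} (the per-arm pulling lower bound), with your explicit truncated-table coupling, the Chernoff bound on the $\mathrm{Binomial}(T-1,\Delta_a)$ Hamming distance, and pure-DP group privacy standing in for the black-box invocation of \citet[Lemma~6.1]{KarwaVadhanFiniteSampleDP2017}; your fact~(i) is handled correctly, since on $E$ the runs on the hybrid and true $\nu'$ tables coincide, and your contradiction framing is just the contrapositive of the paper's direct Markov argument. The only piece you omit is the corollary's one-line derivation from that claim --- sum over the $k-1$ suboptimal arms to get expected pseudo-regret at least $\sum_{a\neq a^1}\tfrac{1}{2}\cdot\tfrac{\log(n)}{100\varepsilon\Delta_a}\cdot\Delta_a=\tfrac{(k-1)\log(n)}{200\varepsilon}$ (the bound being trivial for algorithms whose regret exceeds $n^{3/4}$) --- which is precisely the paper's stated proof.
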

\vspace{-2.5\parskip}
\begin{proof}
  Based on \cref{clm:LB_pulling_any_arms_many_times}, the expected
  pseudo-regret is at least
  $\sum\limits_{a\neq a^1}\tfrac{\Delta_a\log(n)}{200
    \varepsilon\Delta_a} = \tfrac{(k-1)\log(n)}{200\varepsilon}$.
\end{proof}
\vspace{-2.5\parskip}
\begin{proof}[Proof of \Cref{clm:LB_pulling_any_arms_many_times}]
Fix arm $a$. Let $\bar P$ be the vector of the $k$-probability distributions associated with the $k$ arms. Denote $E$ as the event that arm $a$ is pulled $<\nicefrac{\log(n)}{100\varepsilon\Delta_a} := t_a$ many times. Our goal is to show that $\Pr_{A;~{\rm rewards}\sim \bar P}[E]<\nicefrac 1 2$.

To that end, we postulate a different distribution for the rewards of
arm $a$ --- a new distribution whose mean is \emph{greater} by
$\Delta_a$ than the mean reward of arm $a^1$. The total-variation
distance between the given distribution and the postulated
distribution is $\Delta_a$. Denote $\bar Q$ as the vector of
distributions of arm-rewards (where only $P_a \neq Q_a$). We now argue
that should the rewards be drawn from $\bar Q$, then the event $E$ is
very unlikely: $\Pr_{A;~{\rm rewards}\sim \bar Q}[E] \leq 2n^{-1/4}/\Delta_a$. Indeed, the argument is based on a standard Markov-like argument: the expected pseudo-regret of $A$ is at most $n^{3/4}$, yet it is at least $\Pr_{A;~{\rm rewards}\sim \bar Q}[E]\cdot (n-t_a) \Delta_a \geq  (n\Delta_a/2)\Pr_{A;~{\rm rewards}\sim \bar Q}[E]$, for sufficiently large $n$.

We now apply a beautiful result of~\citet[Lemma~6.1]{KarwaVadhanFiniteSampleDP2017}, stating that the ``effective'' group privacy between the case where the $n$ datums of the inputs are drawn \iid{} from either distribution $P$ or from distribution $Q$ is proportional to $\varepsilon n\cdot d_{\rm TV}(P,Q)$. In our case, the key point is that we only consider this change \emph{under the event $E$}, thus the number of samples we need to redraw from the distribution $P_a$ rather than $Q_a$ is strictly smaller than $t_a$, and the elegant coupling argument of~\cite{KarwaVadhanFiniteSampleDP2017} reduces it to $6\Delta_a \cdot t_a$. To better illustrate the argument, consider the coupling argument of~\cite{KarwaVadhanFiniteSampleDP2017} as an oracle $\mathcal{O}$. The oracle generates a collection of precisely $t_a$ \emph{pairs} of points, the left ones are \iid{} samples from $P_a$ and the right ones are \iid{} samples from $Q_a$, and, in expectation, in $(1-\Delta_a)$ fraction of the pairs the right- and the left-samples are identical. Whenever the learner $A$ pulls arm $a$ it makes an oracle call to $\mathcal{O}$, and depending on the environment (whether the distribution of rewards is $\bar P$ or $\bar Q$) $\mathcal{O}$ provides either a fresh left-sample or a right-sample. Moreover, suppose there exists a counter $\mathcal{C}$ that stands between $A$ and $\mathcal{O}$, and in case $\mathcal{O}$ runs out of examples then $\mathcal{C}$ routes $A$'s oracle calls to a different oracle. Now,~\citet[Lemma~6.1]{KarwaVadhanFiniteSampleDP2017} assures that the probability of the event ``$\mathcal{C}$ never re-routes the requests'' happens with similar probability under $P$ or under $Q$, different only up to a multiplicative factor of $\exp(\epsilon \Delta_a t_a)$. And seeing as the event ``$\mathcal{C}$ never re-routes the requests'' is quite unlikely when $\mathcal{O}$ only provides right-samples (from $\bar Q$), it is also fairly unlikely when $\mathcal{O}$ only provides left-samples (from $\bar P$).

Formally, we conclude the proof by applying the result
of~\cite{KarwaVadhanFiniteSampleDP2017} to infer that $\Pr_{A;~{\rm
    rewards}\sim \bar P}[E] \leq \exp(6\varepsilon\Delta_a
t_a)\Pr_{A;~{\rm rewards}\sim \bar Q}[E] \leq \exp(0.06 \log(n)) \cdot
\tfrac 2{\Delta_a}n^{-1/4} = n^{-0.19}\tfrac 2{\Delta_a} \leq \nicefrac 1
2$ for sufficiently large $n$s, proving the required claim.
\end{proof}

% \section{Conclusions and Future Work}
% \label{sec:conclusions}

% \subsection{Future Work}
% \label{sec:future-work}

% \todo[inline]{Consider non-contextual bandits; only the rewards are
%   private.  Does this help with the counter-example against optimism
%   in Tor's paper?}

\subsubsection*{Acknowledgements}

We gratefully acknowledge the Natural Sciences and Engineering
Research Council of Canada (NSERC) for supporting R.S. with the
Alexander Graham Bell Canada Graduate Scholarship and O.S. with grant
\#2017--06701.  R.S. was also supported by Alberta Innovates and
O.S. is also an unpaid collaborator on NSF grant \#1565387.

%\newpage
\bibliographystyle{plainnat}
{\small\bibliography{references,zotero-references}}

\begin{thebibliography}{36}
\providecommand{\natexlab}[1]{#1}
\providecommand{\url}[1]{\texttt{#1}}
\expandafter\ifx\csname urlstyle\endcsname\relax
  \providecommand{\doi}[1]{doi: #1}\else
  \providecommand{\doi}{doi: \begingroup \urlstyle{rm}\Url}\fi

\bibitem[Abbasi-Yadkori et~al.(2011)Abbasi-Yadkori, P{\'a}l, and
  Szepesv{\'a}ri]{AbbasiYadkoriImprovedAlgorithmsLinear2011}
Yasin Abbasi-Yadkori, D{\'a}vid P{\'a}l, and Csaba Szepesv{\'a}ri.
\newblock Improved algorithms for linear stochastic bandits.
\newblock In J.~Shawe-Taylor, R.~S. Zemel, P.~L. Bartlett, F.~Pereira, and
  K.~Q. Weinberger, editors, \emph{Advances in {{Neural Information Processing
  Systems}}}, volume~24, pages 2312--2320. {Curran Associates, Inc.}, 2011.

\bibitem[Abbasi{-}Yadkori et~al.(2012)Abbasi{-}Yadkori, P{\'{a}}l, and
  Szepesv{\'{a}}ri]{Abbasi-YadkoriPS12}
Yasin Abbasi{-}Yadkori, D{\'{a}}vid P{\'{a}}l, and Csaba Szepesv{\'{a}}ri.
\newblock Online-to-confidence-set conversions and application to sparse
  stochastic bandits.
\newblock In \emph{{AISTATS}}, pages 1--9, 2012.

\bibitem[Abe et~al.(2003)Abe, Biermann, and Long]{Abe2003}
Naoki Abe, Alan~W. Biermann, and Philip~M. Long.
\newblock Reinforcement learning with immediate rewards and linear hypotheses.
\newblock \emph{Algorithmica}, 37\penalty0 (4):\penalty0 263--293, 2003.

\bibitem[Agrawal(1995)]{Agrawal95}
Rajeev Agrawal.
\newblock \emph{Sample mean based index policies with O(log n) regret for the
  multi-armed bandit problem.}, volume~27, pages 1054--1078.
\newblock Applied Probability Trust, 1995.

\bibitem[Auer(2003)]{Auer2003UCB}
Peter Auer.
\newblock Using confidence bounds for exploitation-exploration trade-offs.
\newblock \emph{{JMLR}}, 3:\penalty0 397--422, 2003.

\bibitem[Auer et~al.(2002)Auer, Cesa-Bianchi, and Fischer]{Auer2002}
Peter Auer, Nicol\`{o} Cesa-Bianchi, and Paul Fischer.
\newblock Finite-time analysis of the multiarmed bandit problem.
\newblock \emph{{JMLR}}, 47\penalty0 (2-3):\penalty0 235--256, 2002.

\bibitem[Bassily et~al.(2014)Bassily, Smith, and
  Thakurta]{BassilyPrivateEmpiricalRisk2014}
Raef Bassily, Adam Smith, and Abhradeep Thakurta.
\newblock Private empirical risk minimization: Efficient algorithms and tight
  error bounds.
\newblock In \emph{Proceedings of the 2014 {{IEEE}} 55th {{Annual Symposium}}
  on {{Foundations}} of {{Computer Science}}}, FOCS '14, pages 464--473,
  Washington, DC, USA, 2014. {IEEE Computer Society}.
\newblock ISBN 978-1-4799-6517-5.
\newblock \doi{10.1109/FOCS.2014.56}.

\bibitem[Berry and Fristedt(1985)]{BanditBook85}
Donald~A Berry and Bert Fristedt.
\newblock \emph{Bandit problems: sequential allocation of experiments}.
\newblock London ; New York : Chapman and Hall, 1985.

\bibitem[Bun and Steinke(2016)]{BunConcentratedDifferentialPrivacy2016}
Mark Bun and Thomas Steinke.
\newblock Concentrated differential privacy: Simplifications, extensions, and
  lower bounds.
\newblock In \emph{Theory of {{Cryptography}}}, Lecture Notes in Computer
  Science, pages 635--658. {Springer, Berlin, Heidelberg}, November 2016.
\newblock ISBN 978-3-662-53640-7 978-3-662-53641-4.
\newblock \doi{10.1007/978-3-662-53641-4_24}.

\bibitem[Chan et~al.(2010)Chan, Shi, and Song]{ChanPrivateContinualRelease2010}
T.-H.~Hubert Chan, Elaine Shi, and Dawn Song.
\newblock Private and continual release of statistics.
\newblock In \emph{Automata, {{Languages}} and {{Programming}}}, Lecture Notes
  in Computer Science, pages 405--417. {Springer, Berlin, Heidelberg}, July
  2010.
\newblock ISBN 978-3-642-14161-4 978-3-642-14162-1.
\newblock \doi{10.1007/978-3-642-14162-1_34}.

\bibitem[Chaudhuri et~al.(2011)Chaudhuri, Monteleoni, and
  Sarwate]{ChaudhuriDPERM2011}
Kamalika Chaudhuri, Claire Monteleoni, and Anand~D. Sarwate.
\newblock Differentially private empirical risk minimization.
\newblock \emph{J. Mach. Learn. Res.}, 12:\penalty0 1069--1109, July 2011.
\newblock ISSN 1532-4435.

\bibitem[Chu et~al.(2011)Chu, Li, Reyzin, and Schapire]{ChuLRS11}
Wei Chu, Lihong Li, Lev Reyzin, and Robert~E. Schapire.
\newblock Contextual bandits with linear payoff functions.
\newblock In \emph{{AISTATS}}, volume~15 of \emph{JMLR Proceedings}, pages
  208--214, 2011.

\bibitem[Dani et~al.(2008)Dani, Hayes, and
  Kakade]{DaniStochasticLinearOptimization2008}
Varsha Dani, Thomas Hayes, and Sham Kakade.
\newblock Stochastic linear optimization under bandit feedback.
\newblock In \emph{21st {{Annual Conference}} on {{Learning Theory}}}, pages
  355--366, January 2008.

\bibitem[Dwork et~al.(2010{\natexlab{a}})Dwork, Rothblum, and
  Vadhan]{DworkBoosting2010}
C.~Dwork, G.~N. Rothblum, and S.~Vadhan.
\newblock Boosting and differential privacy.
\newblock In \emph{2010 {{IEEE}} 51st {{Annual Symposium}} on {{Foundations}}
  of {{Computer Science}}}, pages 51--60, October 2010{\natexlab{a}}.
\newblock \doi{10.1109/FOCS.2010.12}.

\bibitem[Dwork and Roth(2014)]{DworkAlgorithmicFoundationsDifferential2014}
Cynthia Dwork and Aaron Roth.
\newblock The algorithmic foundations of differential privacy.
\newblock \emph{Foundations and Trends\textregistered{} in Theoretical Computer
  Science}, 9\penalty0 (3\textendash{}4):\penalty0 211--407, August 2014.
\newblock ISSN 1551-305X, 1551-3068.
\newblock \doi{10.1561/0400000042}.

\bibitem[Dwork et~al.(2006{\natexlab{a}})Dwork, Kenthapadi, McSherry, Mironov,
  and Naor]{DworkOurData2006}
Cynthia Dwork, Krishnaram Kenthapadi, Frank McSherry, Ilya Mironov, and Moni
  Naor.
\newblock Our data, ourselves: Privacy via distributed noise generation.
\newblock In \emph{Advances in {{Cryptology}} - {{EUROCRYPT}} 2006}, Lecture
  Notes in Computer Science, pages 486--503. {Springer, Berlin, Heidelberg},
  May 2006{\natexlab{a}}.
\newblock ISBN 978-3-540-34546-6 978-3-540-34547-3.
\newblock \doi{10.1007/11761679_29}.

\bibitem[Dwork et~al.(2006{\natexlab{b}})Dwork, McSherry, Nissim, and
  Smith]{DworkCalibratingNoiseSensitivity2006}
Cynthia Dwork, Frank McSherry, Kobbi Nissim, and Adam Smith.
\newblock Calibrating noise to sensitivity in private data analysis.
\newblock In \emph{Theory of {{Cryptography}}}, Lecture Notes in Computer
  Science, pages 265--284. {Springer, Berlin, Heidelberg}, March
  2006{\natexlab{b}}.
\newblock ISBN 978-3-540-32731-8 978-3-540-32732-5.
\newblock \doi{10.1007/11681878_14}.

\bibitem[Dwork et~al.(2010{\natexlab{b}})Dwork, Naor, Pitassi, and
  Rothblum]{DworkContinualObservation2010}
Cynthia Dwork, Moni Naor, Toniann Pitassi, and Guy~N. Rothblum.
\newblock Differential privacy under continual observation.
\newblock In \emph{Proceedings of the {{Forty}}-Second {{ACM Symposium}} on
  {{Theory}} of {{Computing}}}, STOC '10, pages 715--724, New York, NY, USA,
  2010{\natexlab{b}}. {ACM}.
\newblock ISBN 978-1-4503-0050-6.
\newblock \doi{10.1145/1806689.1806787}.

\bibitem[Dwork et~al.(2014)Dwork, Talwar, Thakurta, and
  Zhang]{DworkAnalyzeGauss2014}
Cynthia Dwork, Kunal Talwar, Abhradeep Thakurta, and Li~Zhang.
\newblock Analyze {{Gauss}}: Optimal bounds for privacy-preserving principal
  component analysis.
\newblock In \emph{Proceedings of the {{Forty}}-Sixth {{Annual ACM Symposium}}
  on {{Theory}} of {{Computing}}}, STOC '14, pages 11--20, New York, NY, USA,
  2014. {ACM}.
\newblock ISBN 978-1-4503-2710-7.
\newblock \doi{10.1145/2591796.2591883}.

\bibitem[Hardt and Talwar(2010)]{HardtTalwarGeometryDP2010}
Moritz Hardt and Kunal Talwar.
\newblock On the geometry of differential privacy.
\newblock In \emph{Proceedings of the {{Forty}}-Second {{ACM Symposium}} on
  {{Theory}} of {{Computing}}}, STOC '10, pages 705--714, New York, NY, USA,
  2010. {ACM}.
\newblock ISBN 978-1-4503-0050-6.
\newblock \doi{10.1145/1806689.1806786}.

\bibitem[Jain et~al.(2012)Jain, Kothari, and
  Thakurta]{JainDPOnlineLearning2012}
Prateek Jain, Pravesh Kothari, and Abhradeep Thakurta.
\newblock Differentially private online learning.
\newblock In \emph{Conference on {{Learning Theory}}}, pages 24.1--24.34, June
  2012.

\bibitem[Karwa and Vadhan(2017)]{KarwaVadhanFiniteSampleDP2017}
Vishesh Karwa and Salil Vadhan.
\newblock Finite sample differentially private confidence intervals.
\newblock \emph{arXiv:1711.03908 [cs, math, stat]}, November 2017.

\bibitem[Kearns et~al.(2014)Kearns, Pai, Roth, and
  Ullman]{KearnsMechanismDesign2014}
Michael Kearns, Mallesh Pai, Aaron Roth, and Jonathan Ullman.
\newblock Mechanism design in large games: Incentives and privacy.
\newblock pages 403--410. {ACM Press}, 2014.
\newblock ISBN 978-1-4503-2698-8.
\newblock \doi{10.1145/2554797.2554834}.

\bibitem[Lattimore and Szepesv{\'{a}}ri(2017)]{LattimoreS17}
Tor Lattimore and Csaba Szepesv{\'{a}}ri.
\newblock The end of optimism? an asymptotic analysis of finite-armed linear
  bandits.
\newblock In \emph{{AISTATS}}, pages 728--737, 2017.

\bibitem[Laurent and Massart(2000)]{LaurentAdaptiveEstimation2000}
B.~Laurent and P.~Massart.
\newblock Adaptive estimation of a quadratic functional by model selection.
\newblock \emph{The Annals of Statistics}, 28\penalty0 (5):\penalty0
  1302--1338, October 2000.
\newblock ISSN 0090-5364, 2168-8966.
\newblock \doi{10.1214/aos/1015957395}.

\bibitem[Mishra and Thakurta(2015)]{MishraNearlyOptimalDPBandits2015}
Nikita Mishra and Abhradeep Thakurta.
\newblock ({{Nearly}}) optimal differentially private stochastic multi-arm
  bandits.
\newblock In \emph{Proceedings of the {{Thirty}}-{{First Conference}} on
  {{Uncertainty}} in {{Artificial Intelligence}}}, UAI'15, pages 592--601,
  Arlington, Virginia, United States, 2015. {AUAI Press}.
\newblock ISBN 978-0-9966431-0-8.

\bibitem[Neel and Roth(2018)]{NeelR18}
Seth Neel and Aaron Roth.
\newblock Mitigating bias in adaptive data gathering via differential privacy.
\newblock In \emph{{ICML}}, pages 3717--3726, 2018.

\bibitem[Robbins(1952)]{robbins1952}
Herbert Robbins.
\newblock Some aspects of the sequential design of experiments.
\newblock \emph{Bull. Amer. Math. Soc.}, 58\penalty0 (5):\penalty0 527--535, 09
  1952.

\bibitem[Rusmevichientong and
  Tsitsiklis(2010)]{RusmevichientongLinearlyParameterizedBandits2010}
Paat Rusmevichientong and John~N. Tsitsiklis.
\newblock Linearly parameterized bandits.
\newblock \emph{Mathematics of Operations Research}, 35\penalty0 (2):\penalty0
  395--411, April 2010.
\newblock ISSN 0364-765X.
\newblock \doi{10.1287/moor.1100.0446}.

\bibitem[Sheffet(2015)]{SheffetPrivateApproxRegression2015}
Or~Sheffet.
\newblock Private approximations of the 2nd-moment matrix using existing
  techniques in linear regression.
\newblock \emph{arXiv:1507.00056 [cs]}, June 2015.

\bibitem[Smith and Thakurta(2013)]{SmithThakurtaPrivateOnlineLearning2013}
Adam Smith and Abhradeep Thakurta.
\newblock ({{Nearly}}) optimal algorithms for private online learning in
  full-information and bandit settings.
\newblock In C.~J.~C. Burges, L.~Bottou, M.~Welling, Z.~Ghahramani, and K.~Q.
  Weinberger, editors, \emph{Advances in {{Neural Information Processing
  Systems}} 26}, pages 2733--2741. {Curran Associates, Inc.}, 2013.

\bibitem[Tao(2012)]{TaoRandomMatrixTheory2012}
Terence Tao.
\newblock \emph{Topics in Random Matrix Theory}, volume 132.
\newblock {American Mathematical Society Providence, RI}, 2012.

\bibitem[Tossou and Dimitrakakis(2016)]{TossouAlgDPBandits2016}
Aristide C.~Y. Tossou and Christos Dimitrakakis.
\newblock Algorithms for differentially private multi-armed bandits.
\newblock In \emph{Thirtieth {{AAAI Conference}} on {{Artificial
  Intelligence}}}, March 2016.

\bibitem[Tossou and Dimitrakakis(2017)]{TossouAchievingPrivacyAdversarial2017}
Aristide C.~Y. Tossou and Christos Dimitrakakis.
\newblock Achieving privacy in the adversarial multi-armed bandit.
\newblock \emph{arXiv:1701.04222 [cs]}, January 2017.

\bibitem[Vershynin(2010)]{VershyninRandomMatrices2010}
Roman Vershynin.
\newblock Introduction to the non-asymptotic analysis of random matrices.
\newblock \emph{arXiv:1011.3027 [cs, math]}, November 2010.

\bibitem[Zhang(2011)]{ZhangMatrixTheory2011}
Fuzhen Zhang.
\newblock \emph{Matrix Theory: Basic Results and Techniques}.
\newblock Universitext. {Springer}, New York, 2nd edition, 2011.
\newblock ISBN 978-1-4614-1098-0.

\end{thebibliography}

\cleardoublepage{}
\appendix
\phantomsection{}
\addcontentsline{toc}{chapter}{Supplementary Material}
\begin{center}
  \LARGE\bf Supplementary Material
\end{center}

\section{Additional Background Information}%
\label[section]{apx_sec:more_background}

\subsection{Differential Privacy}%
\label[subsection]{apx_subsec:DP}

In the offline setting, a dataset $D$ is a $n$-tuple of elements from
some universe $\mathcal{U}$. Two datasets are called neighbors if they
differ just on a single element. An algorithm $A$ is said to be
$(\varepsilon,\delta)$-differentially private if for any pair of
neighboring datasets $D$ and $D'$ and any subset of possible outputs
$\mathcal{S}$ we have that
$\Pr[A(D)\in\mathcal{S}]\leq e^\varepsilon\Pr[A(D')\in\mathcal{S}] +
\delta$. A common technique~\cite{DworkOurData2006} for approximating
the value of a query $f$ on dataset $D$ is to first find its
$L_2$-sensitivity,
$GS_2 :=\max_{D,D'\text{\ neighboring}} \|f(D)-f(D')\|_2$, and then add
Gaussian noise of $0$-mean and variance
$\tfrac{2GS_2^2\ln(\nicefrac 2 \delta)}{\varepsilon^2}$.

\subsection{The Tree-Based Mechanism}

Assume for simplicity that $n=2^i$ for some positive integer $i$. Let
$T$ be a complete binary tree with its leaf nodes being $l_1, \dotsc,l_n$. Each internal
node $x\in T$ stores the sum of all the leaf nodes in the tree
rooted at $x$. First notice that one can compute any partial sum $\sum_{j=1}^i l_i$ using at most $m:=\lceil\log(n)+1\rceil$ nodes of $T$. Second, notice that for any two neighbor-
ing data sequences $D$ and $D'$ the partial sums stored at no more than $m$ nodes in $T$ are different. Thus, if the count in each node preserves $(\varepsilon_0,\delta_0)$-differential privacy, using the advanced composition of~\cite{DworkBoosting2010} we get that the entire algorithm is $\left(O(m\varepsilon_0^2+\varepsilon_0\sqrt{2m\ln(\nicefrac 1{\delta'})}), m\delta_0 + \delta'\right)$-differentially private. Alternatively, to make sure the entire tree is $(\varepsilon,\delta)$-differentially private, it suffices to set $\varepsilon_0 = \varepsilon / \sqrt{8m\ln(\nicefrac 2 \delta)}$ and $\delta_0= \tfrac \delta{2m}$ (with $\delta'=\nicefrac \delta 2$).

\subsection{Useful Facts.}%
\label[subsection]{apx_subsec:facts}

In this work, we repeatedly apply the following facts about PSD matrices, the Gaussian distribution, the $\chi^2$-distribution and the Wishart-distribution.

\begin{claim}[{\citealp[Theorem~7.8]{ZhangMatrixTheory2011}}]%
  \label{claim:psd-matrix-props}%
  If $\vec A \succeq \vec B \succeq 0$, then
  \begin{enumerate}[nolistsep]
  \item $\rank(\vec A) \ge \rank(\vec B)$
  \item $\det \vec A \ge \det \vec B$
  \item $\inv{\vec B} \succeq \inv{\vec A}$ if $\vec A$ and $\vec B$ are nonsingular.
  \end{enumerate}
\end{claim}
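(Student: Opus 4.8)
The plan is to derive all three items from one unified tool~--- the fact that the Loewner order controls eigenvalues~--- supplemented by a kernel inclusion for the rank statement. Concretely, if $\vec A \succeq \vec B$ then the Courant--Fischer min-max characterization gives $\lambda_i(\vec A) \ge \lambda_i(\vec B)$ for every $i$, where the eigenvalues of each matrix are listed in decreasing order. Throughout I would write $\vec C \defeq \vec A - \vec B \succeq 0$ and use that every matrix in play is symmetric PSD.

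For item~1 I would show the kernel inclusion $\ker\vec A \subseteq \ker\vec B$. If $\vec A\vec x = \vec 0$ then $\transp{\vec x}\vec A\vec x = 0$, and since $0 \le \transp{\vec x}\vec B\vec x \le \transp{\vec x}\vec A\vec x = 0$ we get $\transp{\vec x}\vec B\vec x = 0$; writing $\vec B = \transp{\vec D}\vec D$ this reads $\norm{}{\vec D\vec x}^2 = 0$, so $\vec D\vec x = \vec 0$ and hence $\vec B\vec x = \transp{\vec D}\vec D\vec x = \vec 0$. Thus $\dim\ker\vec A \le \dim\ker\vec B$, and rank--nullity yields $\rank\vec A \ge \rank\vec B$. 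Item~2 is then immediate from eigenvalue monotonicity: since $\det = \prod_i\lambda_i$ and every $\lambda_i(\vec B) \ge 0$, we have $\det\vec A = \prod_i\lambda_i(\vec A) \ge \prod_i\lambda_i(\vec B) = \det\vec B$.

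The main obstacle is item~3, because inversion \emph{reverses} the order and so does not follow from any direct monotonicity statement; it instead needs a congruence argument. Since $\vec A,\vec B$ are nonsingular PSD they are positive definite, and $\vec B$ admits a symmetric positive definite (hence invertible) square root, so $\vec B^{-1/2}$ exists. Conjugating $\vec A \succeq \vec B$ by the invertible $\vec B^{-1/2}$ preserves the Loewner order and yields $\vec M \defeq \vec B^{-1/2}\vec A\vec B^{-1/2} \succeq \Eye$. Every eigenvalue of the symmetric matrix $\vec M$ is therefore at least $1$, so every eigenvalue of $\inv{\vec M} = \vec B^{1/2}\inv{\vec A}\vec B^{1/2}$ is at most $1$, i.e.\ $\vec B^{1/2}\inv{\vec A}\vec B^{1/2} \preceq \Eye$. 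Conjugating once more by $\vec B^{-1/2}$ gives $\inv{\vec A} \preceq \inv{\vec B}$, which is the claimed $\inv{\vec B} \succeq \inv{\vec A}$. The only points needing care are that congruence by an invertible matrix preserves $\succeq$ (immediate from the definition of the quadratic form) and that the passage from $\vec M \succeq \Eye$ to $\inv{\vec M} \preceq \Eye$ is justified by diagonalizing $\vec M$ and inverting its eigenvalues.
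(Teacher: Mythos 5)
The paper offers no proof of this claim at all: it is imported verbatim from Zhang's matrix-theory textbook (the cited Theorem~7.8) and used as a black box, so there is no in-paper argument to compare yours against. Your blind proof is correct and self-contained, and it follows the standard textbook route. The eigenvalue monotonicity $\lambda_i(\vec A) \ge \lambda_i(\vec B)$ from Courant--Fischer is valid and immediately yields item~2, since all eigenvalues of $\vec B$ are nonnegative so the product inequality is legitimate. Your kernel-inclusion argument for item~1 is sound (the factorization $\vec B = \transp{\vec D}\vec D$ correctly upgrades $\transp{\vec x}\vec B\vec x = 0$ to $\vec B\vec x = \vec 0$), though note it is slightly redundant given the tool you already set up: since $\rank$ of a PSD matrix equals its number of positive eigenvalues, $\lambda_i(\vec A) \ge \lambda_i(\vec B) \ge 0$ gives item~1 directly. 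For item~3, your congruence argument is the classical one and is complete; one small refinement is that congruence by \emph{any} matrix $\vec C$ preserves $\succeq$ (as $\transp{(\vec C\vec x)}(\vec A - \vec B)(\vec C\vec x) \ge 0$ needs no invertibility), so the invertibility of $\vec B^{-1/2}$ matters only for its existence and for identifying $\inv{\vec M} = \vec B^{1/2}\inv{\vec A}\vec B^{1/2}$, not for the order-preservation steps. In short: no gaps, and your proof would serve as a correct replacement for the paper's citation.
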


\begin{claim}[{\citealp[Corollary to
    Lemma~1,][p.~1325]{LaurentAdaptiveEstimation2000}}]%
  \label{claim:chi2-tails}
  If $U\sim\chi^2(d)$ and $\alpha\in(0,1)$,
  \begin{align*}
    \Prob[\bigg]{U \ge d + 2\sqrt{d\ln\tfrac{1}{\alpha}} + 2\ln\tfrac{1}{\alpha}} &\le \alpha,
    & \Prob[\bigg]{U \le d - 2\sqrt{d\ln\tfrac{1}{\alpha}}} &\le \alpha.
  \end{align*}
\end{claim}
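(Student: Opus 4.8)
The plan is to derive both tails from the classical Chernoff/Cram\'er argument applied to the moment generating function of $U$, handling the upper and lower tails separately since they display different (sub-gamma versus sub-Gaussian) behavior. I would begin by writing $U = \sum_{i=1}^d X_i^2$ with the $X_i$ drawn \iid{} from $\Normal(0,1)$, and recording the cumulant generating function of a single centered summand,
\[
  \psi(\lambda) \defeq \ln \Ex{\exp(\lambda(X_i^2 - 1))} = -\lambda - \tfrac12\ln(1-2\lambda), \qquad \lambda < \tfrac12,
\]
so that by independence the CGF of $U - d$ is exactly $d\,\psi(\lambda)$.

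For the upper tail, the key analytic step is the inequality $\psi(\lambda) \le \lambda^2/(1-2\lambda)$ on $[0,\tfrac12)$, which I would verify by setting $f(\lambda) \defeq \lambda^2/(1-2\lambda) - \psi(\lambda)$, checking $f(0) = 0$, and showing that after simplification $f'(\lambda) = 2\lambda^2/(1-2\lambda)^2 \ge 0$. Summing over the $d$ coordinates gives $d\,\psi(\lambda) \le (2d)\lambda^2 / \bigl(2(1-2\lambda)\bigr)$, exhibiting $U - d$ as sub-gamma with variance factor $v = 2d$ and scale $c = 2$. The Markov step $\Pr[U - d \ge t] \le \exp(d\,\psi(\lambda) - \lambda t)$, optimized over $\lambda \in [0,\tfrac12)$, then yields the sub-gamma deviation bound $\Pr[U - d \ge 2\sqrt{dx} + 2x] \le e^{-x}$; substituting $x = \ln(\nicefrac1\alpha)$ produces the first displayed inequality.

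For the lower tail I would repeat the argument with $\lambda < 0$, writing $\lambda = -\mu$ with $\mu > 0$ so that the relevant CGF is $\mu - \tfrac12\ln(1+2\mu)$. Here the controlling inequality $\mu - \tfrac12\ln(1+2\mu) \le \mu^2$ follows immediately from $\ln(1+u) \ge u - \tfrac12 u^2$; summing again gives variance factor $2d$ but now with a vanishing scale parameter, i.e.\ sub-Gaussian behavior. The corresponding Chernoff optimization yields $\Pr[d - U \ge 2\sqrt{dx}] \le e^{-x}$, and $x = \ln(\nicefrac1\alpha)$ gives the second inequality.

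I expect the only genuine obstacle to be the two elementary log-inequalities that certify the sub-gamma (upper) and sub-Gaussian (lower) structure; everything else is a routine Chernoff optimization. Both reduce to single-variable calculus~--- a sign check on a derivative, respectively a second-order Taylor lower bound for $\ln(1+u)$~--- so the difficulty is bookkeeping rather than conceptual. Since this is precisely Lemma~1 of \citet{LaurentAdaptiveEstimation2000}, one may alternatively just invoke it directly; the reparametrization $x = \ln(\nicefrac1\alpha)$ is then the only adaptation needed to match the statement as worded.
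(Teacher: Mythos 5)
Your proposal is correct. Note that the paper offers no proof of this claim at all~--- it imports the result verbatim as a corollary to Lemma~1 of \citet{LaurentAdaptiveEstimation2000}, so the only ``proof'' in the paper is the citation itself. What you have written is precisely the standard Chernoff--Cram\'er derivation behind that cited lemma: the CGF computation $\psi(\lambda) = -\lambda - \tfrac12\ln(1-2\lambda)$ is right, your derivative check $f'(\lambda) = 2\lambda^2/(1-2\lambda)^2$ for the sub-gamma bound $\psi(\lambda)\le\lambda^2/(1-2\lambda)$ is exact, the lower-tail inequality follows correctly from $\ln(1+u)\ge u-\tfrac12 u^2$ (itself verified by the same one-line sign check on a derivative), and the final substitutions ($v=2d$, $c=2$ giving $2\sqrt{dx}+2x$; $\mu=t/2d$ giving $e^{-t^2/4d}$; then $x=\ln(\nicefrac1\alpha)$) all land on the stated inequalities. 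So your argument is a faithful, self-contained reconstruction of the source's proof rather than a divergence from anything in this paper, and your closing observation~--- that one may simply invoke the lemma with the reparametrization $x=\ln(\nicefrac1\alpha)$~--- is exactly what the authors do.
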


\begin{claim}[{\citealp[Adaptation
    of][Corollary~5.35]{VershyninRandomMatrices2010}}]%
  \label{claim:gaussian-matrix-tails}
  Let $\vec A$ be an $n\times d$ matrix whose entries are independent
  standard normal variables.  Then for every $\alpha\in(0,1)$, with
  probability at least $1-\alpha$ it holds that
  \begin{align*}
    \sigma_{\min}(\vec A), \sigma_{\max}(\vec A) &\in \sqrt{n} \pm (\sqrt{d} + \sqrt{2\ln(2/\alpha)})
  \end{align*}
  with $\sigma_{\min}(\vec A)$ and $\sigma_{\max}(\vec A)$ denoting the smallest- and largest singular values of $\vec A$ resp.
\end{claim}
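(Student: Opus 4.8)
The plan is to obtain the statement as a direct specialization of \citet[Corollary~5.35]{VershyninRandomMatrices2010}, which asserts that for every $t\ge 0$, with probability at least $1-2\exp(-t^2/2)$ one has $\sqrt n - \sqrt d - t \le \sigma_{\min}(\vec A) \le \sigma_{\max}(\vec A) \le \sqrt n + \sqrt d + t$. I would simply set $t = \sqrt{2\ln(2/\alpha)}$, so that the failure probability becomes $2\exp(-\ln(2/\alpha)) = \alpha$, and then rewrite the two-sided inequality as the single containment $\sigma_{\min}(\vec A),\sigma_{\max}(\vec A)\in \sqrt n \pm(\sqrt d + \sqrt{2\ln(2/\alpha)})$, using $\sigma_{\min}\le\sigma_{\max}$ to place both values inside the interval.

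For a self-contained argument, I would reconstruct the three ingredients underlying Vershynin's corollary. First, both extreme singular values are $1$-Lipschitz functions of the $nd$ entries of $\vec A$, viewed as a point of $\Real^{nd}$ with the Euclidean (Frobenius) metric: Weyl's perturbation bound keeps $\abs{\sigma_{\max}(\vec A) - \sigma_{\max}(\vec B)}$ and $\abs{\sigma_{\min}(\vec A) - \sigma_{\min}(\vec B)}$ at most the operator norm $\norm{}{\vec A - \vec B}$, which is in turn dominated by $\norm{F}{\vec A - \vec B}$. Second, since the entries are \iid{} $\Normal(0,1)$, Gaussian concentration of measure yields that each of these $1$-Lipschitz functions exceeds its mean by more than $t$ with probability at most $\exp(-t^2/2)$, and symmetrically falls below its mean with the same probability. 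Third, the expectations are controlled by $\Ex{\sigma_{\max}(\vec A)} \le \sqrt n + \sqrt d$ and $\Ex{\sigma_{\min}(\vec A)} \ge \sqrt n - \sqrt d$, which follow from Gordon's Gaussian min--max comparison inequality applied to the bilinear form $\transp{\vec u}\vec A\vec v$ ranging over the unit spheres of $\Real^n$ and $\Real^d$.

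Combining these, a union bound over the upper-tail event for $\sigma_{\max}$ and the lower-tail event for $\sigma_{\min}$ (each of probability at most $\exp(-t^2/2)$) places $\sigma_{\max}(\vec A)\le \sqrt n + \sqrt d + t$ and $\sigma_{\min}(\vec A)\ge \sqrt n - \sqrt d - t$ with probability at least $1-2\exp(-t^2/2)$, and substituting $t=\sqrt{2\ln(2/\alpha)}$ finishes the argument. The only genuinely nontrivial step is the pair of expectation bounds in the third ingredient, since the Lipschitz property and the Gaussian concentration inequality are both routine; the comparison-inequality estimate is where all of the geometry enters. Because the claim is explicitly an \emph{adaptation} of a known corollary, in practice the whole proof collapses to the parameter substitution of the first paragraph, and I expect the main ``work'' to be nothing more than verifying that $t=\sqrt{2\ln(2/\alpha)}$ calibrates the failure probability to exactly $\alpha$ and that the rearrangement into the symmetric $\pm$ form is valid.
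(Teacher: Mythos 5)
Your proposal is correct and matches the paper exactly: the paper offers no proof of this claim beyond the citation, and the intended ``adaptation'' of \citet[Corollary~5.35]{VershyninRandomMatrices2010} is precisely your substitution $t=\sqrt{2\ln(2/\alpha)}$, which calibrates the failure probability $2\exp(-t^2/2)$ to $\alpha$. Your self-contained reconstruction via Lipschitzness, Gaussian concentration, and Gordon's comparison inequality is also the standard proof of Vershynin's corollary, though it is more than the paper requires.
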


\begin{claim}[{\citealp[Lemma~A.3]{SheffetPrivateApproxRegression2015}}]%
  \label{claim:wishart-tails}
  Fix $\alpha\in(0,1/e)$ and let $\vec W\sim\Wishart_d(\vec V, k)$ with $\sqrt{m}
  > \sqrt{d} + \sqrt{2\ln(2/\alpha)}$.  Then, denoting the $j$-t largest eigenvalue of $\vec W$ as $\sigma_j(\vec W)$, with probability at least
  $1-\alpha$ it holds that for every $j = 1,2,\dotsc,d$:
  \begin{align*}
    \sigma_j(\vec W) &\in \paren*{\sqrt{m} \pm \paren[\Big]{\sqrt{d} + \sqrt{2\ln(2/\alpha)}}}^2 \sigma_j(\vec V).
  \end{align*}
\end{claim}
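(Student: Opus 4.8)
The plan is to reduce the claim to the singular-value concentration of a standard Gaussian matrix (\cref{claim:gaussian-matrix-tails}) through a whitening-plus-congruence argument, so that the only probabilistic input is that black-boxed concentration bound. First I would write the Wishart sample as the Gram matrix $\vec W = \sum_{i=1}^{m} \vec g_i \transp{\vec g_i}$ of its $m$ independent constituent draws $\vec g_i \sim \Normal(\vec 0, \vec V)$ (here $m$ denotes the Wishart's degrees-of-freedom count, matching the $\sqrt m$ in the displayed bound). Using the PSD square root of $\vec V$, each $\vec g_i = \vec V^{1/2}\vec z_i$ with $\vec z_i \sim \Normal(\vec 0, \Eye)$, so that $\vec W = \vec V^{1/2}\vec G\vec V^{1/2}$, where $\vec G \defeq \sum_{i=1}^{m}\vec z_i\transp{\vec z_i} = \XtX{\vec Z}$ and $\vec Z \in \Real^{m\times d}$ has i.i.d.\ standard normal entries. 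This isolates all randomness into the \emph{standard} Wishart matrix $\vec G$, whose entire spectrum I can control directly.

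Second I would apply \cref{claim:gaussian-matrix-tails} to $\vec Z$ (with its ``$n$'' set to $m$): with probability at least $1-\alpha$, both $\sigma_{\min}(\vec Z)$ and $\sigma_{\max}(\vec Z)$, and hence \emph{every} singular value of $\vec Z$, lie in the interval $\sqrt m \pm c$, where $c \defeq \sqrt d + \sqrt{2\ln(2/\alpha)}$. The hypothesis $\sqrt m > c$ guarantees the lower endpoint $\sqrt m - c$ is strictly positive, so $\vec Z$ has full column rank. Squaring, every eigenvalue of $\vec G = \XtX{\vec Z}$ lies in $[(\sqrt m - c)^2, (\sqrt m + c)^2]$; equivalently, on this high-probability event $(\sqrt m - c)^2\Eye \preceq \vec G \preceq (\sqrt m + c)^2\Eye$.

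Third I would push these two-sided operator bounds through the congruence by $\vec V^{1/2}$. Since congruence preserves the Loewner order, conjugating $(\sqrt m - c)^2\Eye \preceq \vec G \preceq (\sqrt m + c)^2\Eye$ by $\vec V^{1/2}$ on both sides yields $(\sqrt m - c)^2\vec V \preceq \vec W \preceq (\sqrt m + c)^2\vec V$. Finally, eigenvalue monotonicity under the Loewner order (an immediate consequence of the Courant--Fischer min--max characterization: $\vec A \preceq \vec B$ implies $\sigma_j(\vec A) \le \sigma_j(\vec B)$ for every $j$) applied to both inequalities gives, simultaneously for all $j = 1,\dotsc,d$, $(\sqrt m - c)^2\sigma_j(\vec V) \le \sigma_j(\vec W) \le (\sqrt m + c)^2\sigma_j(\vec V)$, which is exactly the asserted $\sigma_j(\vec W) \in (\sqrt m \pm c)^2\,\sigma_j(\vec V)$.

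The concentration step is entirely delegated to \cref{claim:gaussian-matrix-tails}, so the substantive content is the reduction itself. The point I would be most careful about is the third step: a \emph{uniform} two-sided bound on the singular values of $\vec Z$ must be made to control \emph{each} eigenvalue of $\vec W$ individually, matched against the corresponding $\sigma_j(\vec V)$. This is exactly where the congruence structure $\vec W = \vec V^{1/2}\vec G\vec V^{1/2}$ is essential --- one cannot simply multiply spectra of $\vec G$ and $\vec V$ --- and it is cleanest to argue through the PSD sandwich $(\sqrt m - c)^2\vec V \preceq \vec W \preceq (\sqrt m + c)^2\vec V$ followed by min--max monotonicity. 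I would also note that the argument covers singular $\vec V$ with no extra work: there $\sigma_j(\vec V) = 0$ forces $\sigma_j(\vec W) = 0$, and the bound holds trivially.
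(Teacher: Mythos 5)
Your proof is correct: the whitening $\vec W = \vec V^{1/2}(\XtX{\vec Z})\vec V^{1/2}$, the two-sided singular-value bound of \cref{claim:gaussian-matrix-tails} applied to $\vec Z$ (where the hypothesis $\sqrt m > \sqrt d + \sqrt{2\ln(2/\alpha)}$ keeps the lower endpoint positive), preservation of the Loewner order under congruence, and Courant--Fischer eigenvalue monotonicity combine to give exactly the stated per-eigenvalue sandwich, with the singular-$\vec V$ case handled as you note. The paper itself offers no proof of this claim --- it is imported as Lemma~A.3 of the cited reference --- and your argument is precisely the standard derivation behind that lemma, assembled from the one concentration fact the paper already records (\cref{claim:gaussian-matrix-tails}); you also correctly resolved the statement's $k$-versus-$m$ notational slip by reading the degrees of freedom as $m$.
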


\begin{claim}\label{claim:shifted-matrix-norm}
  For any matrix $\vec H \succeq \rho\Eye \succ \vec 0$, vector
  $\vec v$, and constant $c \ge 0$ satisfying $\vec H - c\Eye \succ 0$,
  \begin{align*}
    \norm{\inv{(\vec H-c\Eye)}}{\vec v}
    &\le \norm{\inv{\vec H}}{\vec v} \sqrt{\frac{\rho}{\rho-c}}.
  \end{align*}

  \begin{proof}
    Since $\vec 0 \prec \rho\Eye \preceq \vec H$, an application of
    \cref{claim:psd-matrix-props} gives
    \begin{align*}
      c\Eye &\preceq (c/\rho)\vec H &\text{multiplying by } c/\rho \ge 0 \\
      \vec H - c\Eye &\succeq \vec H - (c/\rho) \vec H = \paren*{\frac{\rho-c}{\rho}} \vec H \\
      \inv{(\vec H-c\Eye)} &\preceq \paren*{\frac{\rho}{\rho-c}}\inv{\vec H}.
    \end{align*}
    The result follows from the definition of $\norm{\inv{\vec H}}{\vec v}$.
  \end{proof}
\end{claim}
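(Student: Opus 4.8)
The plan is to reduce the claimed norm inequality to a single Loewner-order comparison between the two inverse matrices, and then invoke the order-reversing property of matrix inversion from \cref{claim:psd-matrix-props}. Recalling that $\norm{\vec M}{\vec x}^2 = \transp{\vec x}\vec M\vec x$, it suffices (after squaring both sides) to establish the matrix inequality $\inv{(\vec H - c\Eye)} \preceq \tfrac{\rho}{\rho-c}\inv{\vec H}$: sandwiching this between $\vec v$ on each side gives $\norm{\inv{(\vec H-c\Eye)}}{\vec v}^2 \le \tfrac{\rho}{\rho-c}\norm{\inv{\vec H}}{\vec v}^2$, and taking square roots yields the statement. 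Note that for the right-hand side $\sqrt{\rho/(\rho-c)}$ even to be real we must have $c < \rho$, a condition I will use freely throughout.

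To obtain this inverse comparison, I would first lower-bound $\vec H - c\Eye$ in the Loewner order by a scalar multiple of $\vec H$ itself. Starting from the hypothesis $\vec H \succeq \rho\Eye$ and scaling both sides by the nonnegative constant $c/\rho$ — which preserves the PSD ordering — gives $c\Eye \preceq (c/\rho)\vec H$. Subtracting this from $\vec H$ then yields $\vec H - c\Eye \succeq \vec H - (c/\rho)\vec H = \tfrac{\rho-c}{\rho}\vec H$. Since $c < \rho$, the matrix $\tfrac{\rho-c}{\rho}\vec H$ is positive definite, and $\vec H - c\Eye$ is positive definite by assumption, so both are nonsingular. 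Applying item~3 of \cref{claim:psd-matrix-props} to the comparison $\vec H - c\Eye \succeq \tfrac{\rho-c}{\rho}\vec H \succ 0$ reverses the order under inversion, giving $\inv{(\vec H - c\Eye)} \preceq \inv{\paren*{\tfrac{\rho-c}{\rho}\vec H}} = \tfrac{\rho}{\rho-c}\inv{\vec H}$, exactly the inequality required.

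The argument is short, and I do not expect any deep obstacle; the only genuine subtlety is tracking the positive-definiteness that licenses the use of \cref{claim:psd-matrix-props}(3). The scaling step is valid because $c/\rho \ge 0$, and the scaled matrix $\tfrac{\rho-c}{\rho}\vec H$ is positive definite precisely when $c < \rho$ — a condition already forced by the well-definedness of $\sqrt{\rho/(\rho-c)}$ in the statement. Keeping these sign and definiteness conditions straight is the one place to be careful, but otherwise the chain of implications is routine.
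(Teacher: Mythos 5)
Your proposal is correct and follows essentially the same route as the paper's own proof: scale $\vec H \succeq \rho\Eye$ by $c/\rho$ to get $\vec H - c\Eye \succeq \tfrac{\rho-c}{\rho}\vec H$, invert via \cref{claim:psd-matrix-props}(3), and sandwich with $\vec v$. Your explicit tracking of the condition $c < \rho$ (needed both for $\sqrt{\rho/(\rho-c)}$ to be real and for $\tfrac{\rho-c}{\rho}\vec H \succ 0$ before inverting) is in fact slightly more careful than the paper, which leaves this implicit.
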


\section{Discussion and Proofs from Section~\ref{sec:LinUCB}}%
\label[section]{sec:more-linucb-discussion}

The proofs in this section are based on those of
\citet{AbbasiYadkoriImprovedAlgorithmsLinear2011}, who analyze the
LinUCB algorithm with a constant regularizer.  The main difference
from that work is that, in our case, quantities involving the
regularizer must be bounded above or below (as appropriate) by the
constants $\rho_{\max}$ and $\rho_{\min}$, respectively.  We will make
extensive use of \cref{claim:psd-matrix-props}, which shows that for
any two matrices $\vec 0 \preceq \vec V \preceq \vec U$, we have
$\det\vec V \le \det\vec U$ and $\inv{\vec V} \succeq \inv{\vec U}$.
We start by proving the following proposition about the sizes of the
confidence ellipsoids, which illustrates this general idea.

\CalcBeta*

\begin{proof}
  By definition, $\tilde{\vec\theta}_t = \inv{\vec V_t} \tilde{\vec u}_t$,
  $\tilde{\vec u}_t = \vec u_t + \vec h_t$, and $\vec u_t = \transp{\vec{X}_{<t}} \vec y_{<t}$, so that
  \begin{align*}
    \vec\theta^* - \tilde{\vec\theta}_t
    &= \vec\theta^* - \inv{\vec V_t}(\transp{\vec X_{<t}} \vec y_{<t} + \vec h_t) \\
    &= \vec\theta^* - \inv{\vec V_t}(\XtX{\vec X_{<t}} \vec\theta^*
      + \transp{\vec X_{<t}} \vec\eta_{t-1} + \vec h_t)
    &\text{since } \vec y_{<t} = \vec X_{<t} \vec\theta^* + \vec\eta_{t-1} \\
    &= \vec\theta^* - \inv{\vec V_t}(\vec V_t \vec\theta^* - \vec H_t \vec\theta^* + \vec z_t + \vec h_t)
    &\text{defining } \vec z_t \defeq \transp{\vec X_{<t}} \vec\eta_{t-1} \\
    &= \inv{\vec V_t}(\vec H_t\vec\theta^* - \vec z_t - \vec h_t) \\
    \shortintertext{Multiplying both sides by $\vec V_t^{1/2}$ gives}
    \vec V_t^{1/2}(\vec\theta^* - \tilde{\vec\theta}_t)
    &= \vec V_t^{-1/2}(\vec H_t\vec\theta^* - \vec z_t - \vec h_t) \\
    \norm{\vec V_t}{\vec\theta^* - \tilde{\vec\theta}_t}
    &= \norm{\inv{\vec V_t}}{\vec H_t\vec\theta^* - \vec z_t - \vec h_t}
    &\text{applying $\norm{}{\wildcard}$ to both sides}\\
    &\le \norm{\inv{\vec V_t}}{\vec H_t\vec\theta^*} + \norm{\inv{\vec V_t}}{z_t}
      + \norm{\inv{\vec V_t}}{\vec h_t} &\text{triangle inequality} \\
    &\le \norm{\inv{\vec V_t}}{\vec z_t} + \norm{\inv{\vec H_t}}{\vec H_t\vec\theta^*}
      + \norm{\inv{\vec H_t}}{\vec h_t}
    &\text{by \cref{claim:psd-matrix-props} since } \vec V_t \succeq \vec H_t \\
    &= \norm{\inv{(\vec G_t+\rho_{\min}\Eye)}}{\vec z_t} + \norm{\vec H_t}{\vec\theta^*} +
      \norm{\inv{\vec H_t}}{\vec h_t},
    &\text{since } \vec V_t \succeq \vec G_t + \rho_{\min}\Eye.
  \end{align*}
  We use a union bound over all $n$ rounds to bound
  $\norm{\vec H_t}{\vec\theta^*} \le
  \sqrt{\norm{}{\vec H_t}}\norm{}{\vec\theta^*} \le S\sqrt{\rho_{\max}}$
  and $\norm{\inv{\vec H_t}}{\vec h_t} \le \gamma$ with probability at
  least $1-\alpha/2$.  Finally, by the ``self-normalized bound for
  vector-valued martingales'' of
  \citet[Theorem~1]{AbbasiYadkoriImprovedAlgorithmsLinear2011}, with
  probability $1-\alpha/2$ for all rounds simultaneously
  \begin{align*}
    \norm{\inv{(\vec G_t + \rho_{\min}\Eye)}}{\vec z_t}
    &\le \sigma\sqrt{2\log\frac{2}{\alpha} + \log\frac{\det(\vec G_t+\rho_{\min}\Eye)}{\det\rho_{\min}\Eye}}
      \le \sigma\sqrt{2\log\frac{2}{\alpha} + \log\det \vec V_t - d\log\rho_{\min}}.
  \end{align*}
  It only remains to show the upper-bound on each $\beta_t$.  By
  \cref{claim:psd-matrix-props}, we have
  $\det \vec V_t = \det(\vec G_t+\vec H_t) \le \det(\vec G_t+\rho_{\max}\Eye)$ and
  \begin{align*}
    \log\det \vec V_t
    &\le \log\det(\vec G_t + \rho_{\max}\Eye)
    \le d\log(\rho_{\max} + tL^2/d).
  \end{align*}
  using the trace-determinant inequality as in the proof of
  \cref{lemma:elliptical-potential}.  All the $\beta_t$ are therefore
  bounded by the constants
  \begin{align*}
    \bar\beta_t &\defeq \sigma\sqrt{2\log\frac{2}{\alpha} + d\log\paren*{\frac{\rho_{\max}}{\rho_{\min}}
                 + \frac{tL^2}{d\rho_{\min}}}} + S\sqrt{\rho_{\max}} + \gamma. \qedhere
  \end{align*}
\end{proof}

We now take our first steps towards a regret bound by giving a
``generic'' version that depends only on LinUCB taking ``optimistic''
actions, the sizes of the confidence sets, and the rewards being
bounded.  We rely upon the upper bound for each $\beta_t$ shown in the
previous proposition.  We use the Cauchy-Schwarz inequality to bound
the sum of per-round regrets $r_t$ by $\sum r_t^2$; this results in
the leading $O(\sqrt n)$ factor in the regret bound.  Our
gap-dependent analysis later avoids this, but has other trade-offs.

\begin{lemma}[Generic LinUCB Regret]\label{lemma:linucb-regret}
  Suppose \cref{ass:meanreward-bound,ass:psd-reg} hold (i.e.\
  $\abs{\innerp{\vec\theta^*}{\vec x}} \le B$ and all $\vec H_t \succeq 0$)
  and $\bar\beta_n \ge \max\set{\beta_1,\dotsc,\beta_n,1}$; also assume
  that $B = 1$.  If all
  the confidence sets $\E_t$ contain $\vec\theta^*$ (i.e.,
  $\norm{\vec V_t}{\vec\theta^* - \tilde{\vec\theta}_t} \le
  \beta_t$), then the pseudo-regret of \cref{alg:linucb} is bounded by
  \begin{align*}
    \widehat{R}_n &\le \bar\beta_n\sqrt{4n \sum_{t=1}^n \min\set{1,
                   \norm{\inv{\vec V_t}}{\vec x_t}^2}}.
  \end{align*}
\end{lemma}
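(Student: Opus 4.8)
The plan is to bound the per-round pseudo-regret $r_t \defeq \max_{\vec x\in\D_t}\innerp{\vec\theta^*}{\vec x} - \innerp{\vec\theta^*}{\vec x_t}$ by something proportional to $\norm{\inv{\vec V_t}}{\vec x_t}$, and then sum across rounds using Cauchy--Schwarz to produce the leading $\sqrt n$ factor. Writing $\vec x_t^*$ for a maximizer of $\innerp{\vec\theta^*}{\wildcard}$ over $\D_t$, the first ingredient is optimism: since $\vec\theta^*\in\E_t$ we have $\UCB_t(\vec x_t^*) = \max_{\vec\theta\in\E_t}\innerp{\vec\theta}{\vec x_t^*} \ge \innerp{\vec\theta^*}{\vec x_t^*}$, and since the algorithm selects $\vec x_t$ to maximize $\UCB_t$, also $\UCB_t(\vec x_t)\ge\UCB_t(\vec x_t^*)$. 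Chaining these gives $r_t = \innerp{\vec\theta^*}{\vec x_t^* - \vec x_t} \le \UCB_t(\vec x_t) - \innerp{\vec\theta^*}{\vec x_t}$.

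Next I would expand $\UCB_t(\vec x_t) = \innerp{\tilde{\vec\theta}_t}{\vec x_t} + \beta_t\norm{\inv{\vec V_t}}{\vec x_t}$ from \cref{eq:def-ellip} and write $\innerp{\vec\theta^*}{\vec x_t} = \innerp{\tilde{\vec\theta}_t}{\vec x_t} + \innerp{\vec\theta^* - \tilde{\vec\theta}_t}{\vec x_t}$, so the $\innerp{\tilde{\vec\theta}_t}{\vec x_t}$ terms cancel and $r_t \le \beta_t\norm{\inv{\vec V_t}}{\vec x_t} - \innerp{\vec\theta^* - \tilde{\vec\theta}_t}{\vec x_t}$. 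A generalized Cauchy--Schwarz step in the $\vec V_t$-geometry, $\abs{\innerp{\vec\theta^* - \tilde{\vec\theta}_t}{\vec x_t}} \le \norm{\vec V_t}{\vec\theta^* - \tilde{\vec\theta}_t}\,\norm{\inv{\vec V_t}}{\vec x_t}$, together with the confidence-set hypothesis $\norm{\vec V_t}{\vec\theta^* - \tilde{\vec\theta}_t}\le\beta_t$, yields $r_t \le 2\beta_t\norm{\inv{\vec V_t}}{\vec x_t}\le 2\bar\beta_n\norm{\inv{\vec V_t}}{\vec x_t}$.

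The one place that genuinely needs care is when $\norm{\inv{\vec V_t}}{\vec x_t}$ is large, where the bound above is loose (and where, without a lower bound on the eigenvalues of $\vec V_t$, it cannot be controlled directly). Here I would fall back on the crude deterministic bound from \cref{ass:meanreward-bound}: $r_t = \innerp{\vec\theta^*}{\vec x_t^* - \vec x_t}\le 2B = 2 \le 2\bar\beta_n$, using $B=1$ and $\bar\beta_n\ge1$. Combining the two estimates gives, for every round, $r_t \le 2\bar\beta_n\min\set{1,\norm{\inv{\vec V_t}}{\vec x_t}}$. I expect this truncation to be the main (and really the only nontrivial) idea: it lets the bounded-mean-reward fallback absorb exactly the rounds where the variance-like quantity $\norm{\inv{\vec V_t}}{\vec x_t}$ exceeds one.

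Finally I would sum over $t$ and apply Cauchy--Schwarz over the $n$ rounds: $\widehat R_n = \sum_{t=1}^n r_t \le 2\bar\beta_n\sum_{t=1}^n\min\set{1,\norm{\inv{\vec V_t}}{\vec x_t}} \le 2\bar\beta_n\sqrt{n\sum_{t=1}^n\min\set{1,\norm{\inv{\vec V_t}}{\vec x_t}}^2}$. Since $\min\set{1,a}^2 = \min\set{1,a^2}$ for every $a\ge0$, I can rewrite the squared terms and pull the factor $4$ under the root to obtain exactly $\widehat R_n \le \bar\beta_n\sqrt{4n\sum_{t=1}^n\min\set{1,\norm{\inv{\vec V_t}}{\vec x_t}^2}}$, as claimed. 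The remaining arithmetic is entirely routine.
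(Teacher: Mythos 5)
Your proposal is correct and follows essentially the same route as the paper's proof: optimism to reduce the per-round regret to $2\beta_t\norm{\inv{\vec V_t}}{\vec x_t}$, truncation at $2\bar\beta_n$ via the bounded mean reward and $\bar\beta_n\ge 1$, and a final Cauchy--Schwarz over rounds. The only cosmetic difference is that you obtain the $2\beta_t$ factor by expanding the closed-form $\UCB_t$ and cancelling $\innerp{\tilde{\vec\theta}_t}{\vec x_t}$, whereas the paper introduces the optimistic parameter $\bar{\vec\theta}_t\in\E_t$ and applies the triangle inequality through $\tilde{\vec\theta}_t$ --- the two are equivalent for the ellipsoidal confidence sets used here.
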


\begin{remark}[On the quantity $B$ appearing in
  \cref{ass:meanreward-bound}]%
  \label{remark:meanreward-bound}
  For the following proofs, we assume (as in this lemma) that
  \cref{ass:meanreward-bound} holds with $B=1$.  Eventually, however,
  our regret bounds end up with a factor $B$; we now explain how.
  First note that $B$ is trivially at most $LS$ by Cauchy-Schwarz:
  $\abs{\innerp{\vec\theta^*}{\vec x_t}} \le
  \norm{}{\vec\theta^*}\norm{}{\vec x_t} \le LS$.  The case where $B<1$ yet is some constant is trivial: clearly we can take $B=1$ without violating the assumption.  The case where $B=o(1)$ is actually quite intricate and somewhat ``unnatural'': while a-priori  we know the mean-reward can be as large as $LS$, it is in fact \emph{much} smaller. This means we have to scale down actions, and shrink the entire problem by a sub-constant; and as a result the noise $\sigma$ is actually now \emph{far} larger (it is like $\sigma/B$ in the original setting). While this can be a mere technicality in general, since our leading application is privacy this also means that the bounds on the actual reward we use in \cref{sec:alg-dp} needs to be scaled by a very large factor. Thus, allowing for ridiculously small $B$ turns into an unnecessary nuisance, and we simply assume that our upper-bound $B$ is not tiny~--- namely, we assume $B\geq 1$. 
  
  It
  remains to deal with the situation where $B>1$.
  In this case, we can pre-process the rewards to the algorithm,
  scaling them down by a factor of $B$.  If we also scale down all the
  actions $\vec x\in\D_t$, then the rest of the assumptions remain
  inviolate and the regret bound for $B=1$ applies.  However, this
  bounds the regret in the scaled problem: in the original problem the
  rewards and hence regret must be scaled up by a factor of $B$.  Note
  that by only scaling the $\D_t$, we are not modifying any
  quantities used in the actual algorithm, just the regret bound; this
  would not be true if we scaled $\vec\theta^*$ (whose maximum norm
  appears in the $\beta_t$), which is the other
  possibility to maintain the linearity of rewards.

  Indeed, in the scaled-down problem the regret is somewhat lower than
  the bound because both $L$ and $\sigma$ can be scaled down by $B$
  (the noise variance scales proportionally to the reward).  For
  simplicity, however, we refrain from replacing $L$ and $\sigma$ in
  the upper bound with $L/B$ and $\sigma/B$, respectively.
\end{remark}

\begin{proof}[Proof of \cref{lemma:linucb-regret}]
  At every round $t$, \cref{alg:linucb} selects an ``optimistic''
  action $\vec x_t$ satisfying
  \begin{align}\label{eq:optimistic-action}
    (\vec x_t,\bar{\vec\theta}_t)
    &\in \argmax_{(\vec x, \vec\theta) \in\D_t\times\E_t} \innerp{\vec\theta}{\vec x}.
  \end{align}
  Let $\vec x_t^* \in \argmax_{\vec x\in\D_t}\innerp{\vec\theta^*}{\vec x}$ be an optimal
  action and $r_t = \innerp{\vec\theta^*}{\vec x_t^* - \vec x_t}$ be the immediate
  pseudo-regret suffered for round $t$:
  \begin{align*}
    r_t &= \innerp{\vec\theta^*}{\vec x_t^*} - \innerp{\vec\theta^*}{\vec x_t} \\
        &\le \innerp{\bar{\vec\theta}_t}{\vec x_t} - \innerp{\vec\theta^*}{\vec x_t}
        &\text{from~\eqref{eq:optimistic-action} since }
          (\vec x_t^*,\vec\theta^*) \in \D_t\times\E_t \\
        &= \innerp{\bar{\vec\theta}_t - \vec\theta^*}{\vec x_t} \\
       &= \innerp{\vec V_t^{1/2}(\bar{\vec\theta}_t - \vec\theta^*)}{\vec V_t^{-1/2} \vec x_t}
        &\text{since } \vec V_t \succeq \vec H_t \succeq 0 \\
        &\le \norm{\vec V_t}{\bar{\vec\theta}_t - \vec\theta^*}\norm{\inv{\vec V_t}}{\vec x_t}
        &\text{by Cauchy-Schwarz}\\
        &\le \paren[\big]{\norm{\vec V_t}{\bar{\vec\theta}_t - \tilde{\vec\theta}_t}
          + \norm{\vec V_t}{\vec\theta^* - \tilde{\vec\theta}_t}}
          \norm{\inv{\vec V_t}}{\vec x_t}
        &\text{by the triangle inequality}\\
        &\le 2\beta_t\norm{\inv{\vec V_t}}{\vec x_t}
        &\text{since } \bar{\vec\theta}_t, \vec\theta^* \in \E_t \\
        &\le 2\bar\beta_n\norm{\inv{\vec V_t}}{\vec x_t}
        &\text{since } \bar\beta_n \ge \beta_t.
  \end{align*}
  From our assumptions that the mean absolute reward is at most $1$
  and $\bar\beta_n \ge 1$, we also get that $r_t \le 2 \le 2\bar\beta_n$.
  Putting these together,
  \begin{align}\label{eq:regret-bound-oneround}
    r_t &\le 2\bar\beta_n \min\set{1, \norm{\inv{\vec V_t}}{\vec x_t}}
  \end{align}
  Now we apply the Cauchy-Schwarz inequality, since $\widehat{R}_n =
  \innerp{\vec 1_n}{\vec r/n}$, where $\vec 1_n$ is the all-ones vector
  and $\vec r$ is the vector of per-round regrets:
  \begin{align*}
    \widehat R_n^2 &= n^2 \paren[\Big]{\sum_{t=1}^n \frac{r_t}{n}}^2
                   \le n^2 \sum_{t=1}^n \frac{r_t^2}{n} = n\sum_{t=1}^n {r_t}^2
                   \le 4n\bar\beta_n^2\sum_{t=1}^n\min\set{1,\norm{\inv{\vec V_t}}{\vec x_t}^2}.
  \end{align*}
  Taking square roots completes the proof.
\end{proof}

The following technical lemma relates the quantity from the previous
result to the volume (i.e.\ determinant) of the $\vec V_n$ matrix.
We will see shortly that the $\vec U_t$ are all lower bounds on the
$\vec V_t$.

\begin{lemma}[Elliptical Potential]\label{lemma:elliptical-potential}
  Let $\vec x_1,\dotsc,\vec x_n \in \Real^d$ be vectors with each
  $\norm{}{\vec x_t} \le L$.  Given a positive definite matrix
  $U_1\in\Real^{d\times d}$, define
  $U_{t+1} \defeq U_t + \vec x_t \transp{\vec x_t}$ for all $t$.  Then
  \begin{align*}
    \sum_{t=1}^n\min\set{1, \norm{\inv{U_t}}{\vec x_t}^2}
    &\le 2\log\frac{\det U_{n+1}}{\det U_1}
      \le 2d\log\frac{\tr U_1+nL^2}{d\det^{1/d} U_1}.
  \end{align*}
\end{lemma}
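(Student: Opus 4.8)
The plan is to prove the two displayed inequalities separately. For the left inequality I would exploit the rank-one structure of the updates $U_{t+1} = U_t + \vec x_t \transp{\vec x_t}$. Since $U_1\succ 0$ and each $\vec x_t\transp{\vec x_t}\succeq 0$, every $U_t$ is positive definite, so its square root and inverse are well defined. Writing $U_{t+1} = U_t^{1/2}(\Eye + U_t^{-1/2}\vec x_t\transp{\vec x_t}U_t^{-1/2})U_t^{1/2}$ and taking determinants, the rank-one matrix $U_t^{-1/2}\vec x_t\transp{\vec x_t}U_t^{-1/2}$ has its single nonzero eigenvalue equal to $\transp{\vec x_t}\inv{U_t}\vec x_t = \norm{\inv{U_t}}{\vec x_t}^2$, which yields the recursion $\det U_{t+1} = \det U_t \cdot (1 + \norm{\inv{U_t}}{\vec x_t}^2)$ (equivalently, an application of the matrix determinant lemma).

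Taking logarithms and telescoping then gives $\log\frac{\det U_{n+1}}{\det U_1} = \sum_{t=1}^n \log(1 + \norm{\inv{U_t}}{\vec x_t}^2)$. It remains to bound each summand from below, i.e.\ to verify the elementary scalar inequality $\min\set{1,y} \le 2\log(1+y)$ for all $y\ge 0$. For $y\ge 1$ this holds since $2\log(1+y)\ge 2\log 2 \ge 1$; for $y\in[0,1]$ it follows because $f(y)\defeq 2\log(1+y)-y$ satisfies $f(0)=0$ and $f'(y) = \tfrac{1-y}{1+y}\ge 0$. Applying this bound with $y = \norm{\inv{U_t}}{\vec x_t}^2$ and summing over $t$ produces the first inequality.

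For the second inequality I would invoke the trace--determinant (AM--GM) inequality: for the positive definite $U_{n+1}$ with eigenvalues $\lambda_1,\dots,\lambda_d$, we have $\det U_{n+1} = \prod_i \lambda_i \le (\tfrac1d\sum_i\lambda_i)^d = (\tr U_{n+1}/d)^d$. The trace telescopes additively, so $\tr U_{n+1} = \tr U_1 + \sum_{t=1}^n \tr(\vec x_t\transp{\vec x_t}) = \tr U_1 + \sum_{t=1}^n\norm{}{\vec x_t}^2 \le \tr U_1 + nL^2$ using the hypothesis $\norm{}{\vec x_t}\le L$. Combining these two bounds and subtracting $\log\det U_1 = d\log\det^{1/d}U_1$ yields $\log\frac{\det U_{n+1}}{\det U_1} \le d\log\frac{\tr U_1 + nL^2}{d\det^{1/d}U_1}$, as required.

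There is no deep obstacle here; this is the classical elliptical-potential lemma. The only points requiring care are (i) confirming positive-definiteness of every $U_t$, so that the determinant recursion, the matrix square roots, and the final division by $\det U_1$ are all legitimate, and (ii) the scalar inequality $\min\set{1,y}\le 2\log(1+y)$, which is precisely where the leading constant $2$ and the $\min\set{1,\wildcard}$ truncation in the statement originate.
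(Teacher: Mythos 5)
Your proof is correct and follows essentially the same route as the paper's: the factorization $U_{t+1} = U_t^{1/2}(\Eye + U_t^{-1/2}\vec x_t\transp{\vec x_t}U_t^{-1/2})U_t^{1/2}$ with the rank-one determinant identity, the scalar bound $\min\set{1,y}\le 2\log(1+y)$, telescoping, and the trace--determinant (AM--GM) inequality for the second bound. The only cosmetic differences are that you cite the matrix determinant lemma where the paper computes the eigenvalues of $\Eye + \vec y\transp{\vec y}$ explicitly, and you spell out the verification of the scalar inequality, which the paper takes as known.
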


\begin{proof}
  We use the fact that $\min\set{1, u} \le 2\log(1+u)$ for any
  $u \ge 0$:
  \begin{align*}
    \sum_{t=1}^n\min\set{1, \norm{\inv{U_t}}{\vec x_t}^2}
    &\le 2 \sum_{t=1}^n \log(1 + \norm{\inv{U_t}}{\vec x_t}^2).
  \end{align*}
  We will show that this last summation is $2\log(\det U_{n+1}/\det
  U_n)$.  For all $t$, we have
  \begin{align*}
    U_{t+1} &= U_t + \vec x_t \transp{\vec x_t}
             = U_t^{1/2}
             \paren[\big]{I + U_t^{-1/2} \vec x_t \transp{\vec x_t} U_t^{-1/2}}
             U_t^{1/2} \\
    \det U_{t+1} &=\det U_t
                  \det\paren[\big]{I +
                  U_t^{-1/2} \vec x_t \transp{\vec x_t}
                  U_t^{-1/2}}.
  \end{align*}
  Consider the eigenvectors of the matrix $I + \vec y \transp{\vec y}$
  for an arbitrary vector $\vec y \in \Real^d$.  We know that $\vec y$
  itself is an eigenvector with eigenvalue $1+\norm{}{\vec y}^2$:
  \begin{align*}
    (I + \vec y \transp{\vec y}) \vec y
    &= \vec y + \vec y \innerp{\vec y}{\vec y} = (1+\norm{}{\vec y}^2)\vec y.
  \end{align*}
  Moreover, since $I + \vec y \transp{\vec y}$ is symmetric, every
  other eigenvector $\vec u$ is orthogonal to $\vec y$, so that
  \begin{align*}
    (I + \vec y \transp{\vec y}) \vec u
    &= \vec u + \vec u \innerp{\vec y}{\vec u} = \vec u.
  \end{align*}
  Therefore the only eigenvalues of $I + \vec y \transp{\vec y}$ are
  $1+\norm{}{\vec y}^2$ (with eigenvector $\vec y$) and 1.  In our
  case $\vec y = U_t^{-1/2} \vec x_t$ and
  $\norm{}{\vec y}^2 = \transp{\vec x_t} \inv{U_t} \vec x_t =
  \norm{\inv{U_t}}{\vec x_t}^2$, so we get our first inequality:
  \begin{align*}
    \det U_{n+1} &= \det U_1 \prod_{t=1}^n(1 + \norm{\inv{U_t}}{\vec x_t}^2) \\
    2\log\frac{\det U_{n+1}}{\det U_1}
            &= 2\sum_{t=1}^n\log(1+\norm{\inv{U_t}}{\vec x_t}^2).
  \end{align*}
  To get the second inequality, we apply the arithmetic-geometric
  mean inequality to the eigenvalues $\lambda_i$ of $U_n$:
  \begin{align*}
    \det U_n &= \prod_{i=1}^d \lambda_i
              \le \paren[\Big]{\frac{1}{d} \sum_{i=1}^d \lambda_i}^d
              = \paren{(1/d)\tr U_n}^d
              \le \paren{(\tr U_1 + nL^2)/d}^d \\
    2\log\frac{\det U_n}{\det U_1}
            &\le 2d \log\frac{\tr U_1 + nL^2}{d\det^{1/d}U_1}
              \qedhere
  \end{align*}
\end{proof}

We are finally in a position to prove the main regret theorem.  The
proof is straightforward and essentially comes down to plugging in our
preceding results.

\ThmLinUCBRegret*

\begin{proof}
  We restrict ourselves to the event that all the confidence
  ellipsoids contain $\vec\theta^*$ and all
  $\rho_{\min}\Eye \preceq \vec H_t \preceq \rho_{\max}\Eye$.
  \Cref{prop:calc-beta} assures us that this happens with probability at least
  $1-\alpha$, and furthermore gives us the bound $\beta_t \le \bar\beta_n$:
  \begin{align*}
    \bar\beta_n &\defeq \sigma\sqrt{2\log\frac{2}{\alpha} + d\log\paren*{\frac{\rho_{\max}}{\rho_{\min}}
                 + \frac{nL^2}{d\rho_{\min}}}}
                 + S\sqrt{\rho_{\max}} + \gamma.
  \end{align*}
  Next, we have
  $\norm{\inv{\vec V_t}}{\vec x_t} \le
  \norm{\inv{(\vec G_t+\rho_{\min}\Eye)}}{\vec x_t}$, which applied to the
  result of \cref{lemma:linucb-regret} gives, using
  \cref{lemma:elliptical-potential}
  \begin{align*}
    \widehat{R}_n
    &\le \bar\beta_n\sqrt{8dn\log\paren*{1+\frac{nL^2}{d\rho_{\min}}}} \\
    &\le \sqrt{8n}\brck*{\sigma\paren*{2\log\frac{2}{\alpha}
      + d\log\paren*{\frac{\rho_{\max}}{\rho_{\min}} + \frac{nL^2}{d\rho_{\min}}}}
      + (S\sqrt{\rho_{\max}}+\gamma)\sqrt{d\log\paren*{1+\frac{nL^2}{d\rho_{\min}}}}}.
  \end{align*}
  The argument outlined in \cref{remark:meanreward-bound} preceding
  the proof of \cref{lemma:linucb-regret} tells us how to reintroduce
  the missing factor of $B$ in this regret bound.
\end{proof}

The proof of the gap-dependent regret bound diverges from the previous
proof in only one major way: the gap is used to bound each $r_t$ by
$r_t^2/\Delta$.  Then the sum of $r_t^2$ is bounded as before; this
avoids the $\sqrt n$ factor introduced by the use of the
Cauchy-Schwarz inequality.

\ThmLinUCBGapRegret*

\begin{proof}
  Because of the gap assumption, for every round $t$ if the per-round
  pseudo-regret $r_t \neq 0$ then $r_t \ge \Delta$.  We use this fact
  to decompose the regret in a different way than we did in
  \cref{lemma:linucb-regret}.  The rest of the proof is similar to
  that of \cref{thm:linucb-regret}.  As before, see
  \cref{remark:meanreward-bound} preceding the proof of
  \cref{lemma:linucb-regret} to introduce the missing $B$ factor.
  \begin{align*}
    \widehat R_n
    &= \sum_{t\in B_n} r_t \le \sum_{t\in B_n} \frac{r_t^2}{\Delta} \\
    &\le \frac{4}{\Delta} \bar\beta_n^2 \sum_{t\in B_n} \min\set{1, \norm{\inv{\vec V_t}}{\vec x_t}^2}
    &\text{from~\eqref{eq:regret-bound-oneround}} \\
    &\le \frac{8}{\Delta} \bar\beta_n^2d\log\paren*{1+\frac{nL^2}{d\rho_{\min}}} \\
    &\le \frac{8}{\Delta} \brck*{\sigma\paren*{2\log\frac{2}{\alpha}
      + d\log\paren*{\frac{\rho_{\max}}{\rho_{\min}} + \frac{nL^2}{d\rho_{\min}}}}
      + (S\sqrt{\rho_{\max}}+\gamma)\sqrt{d\log\paren*{1+\frac{nL^2}{d\rho_{\min}}}}}^2
    &&\qedhere
  \end{align*}
\end{proof}

\subsection{Regret Bounds Open Problem}%
\label[subsection]{sec:regret-discussion}

The first conclusion from these regret bounds is that allowing
changing regularizers does not incur significant additional regret, as
long as they are bounded both above and below.  Broadly speaking,
these bounds for contextual linear bandits match those for standard
MAB algorithms in terms of their dependence on $n$ and $\Delta$ ---
just like with UCB, for example, the minimax bound is $O(\sqrt n)$ and
the gap-dependent bound is $O(\log(n)/\Delta)$.  However, the
dependence on $d$ (which corresponds to the number of arms for the
MAB) is much worse, with $O(d)$ in the minimax case and $O(d^2)$ in
the gap-dependent case.

It is an interesting open question whether the $O(d^2)$ dependence on
$d$ is necessary to achieve $O(\log n)$ gap-dependent regret bounds.
As we were unable to prove a lower bound of $\Omega(d^2)$, we resorted
to empirically checking the performance of the (non-private) LinUCB on
such $\Delta$-gap instances; the results can be found in
\cref{sec:expt-regret-vs-d}.

\section{Privacy Proofs}%
\label[section]{apx_sec:privacy_proofs}

\if0
Before giving the omitted proofs from the body of the paper, we wish
to have a short comparison between the regret bounds we get with the
two different techniques to introduce differential privacy. In
particular, omitting dependencies on any parameter but $n,d$ and
$\varepsilon$, we have that the pseudo-regret bound of the Wishart
noise (\cref{cor:regret_with_Wishart}) is $\tilde O(\sqrt n(d+\tfrac{\sqrt d}\varepsilon))$; and that pseudo-regret bound of the Gaussian noise (\cref{cor:regret_with_Gaussian}) is $\tilde O(\sqrt nd/\varepsilon)$. Before explaining how it came to be that the former has a better dependency on $\varepsilon$ than the latter, we first emphasize that (1) these bounds are only asymptotic (Wishart noise has far worst constants than the constants in the Gaussian bound), and that (2) this is an over simplification, as the many parameters of the problem may affect the regret bounds in favor of the Gaussian noise.

So, how come with Gaussian noise we get a worse dependency on $\varepsilon$ than in the Wishart case? The reason lies in the two factors. First, the degrees of freedom $k$ in the Wishart noise are of the form $k=d+O(\varepsilon^{-2})$, making the top eigenvalue of the added noise $\|\vec H_t\|$ proportional to $O(\sqrt{\log(n)k}+\sqrt d) = O(\sqrt{d\log(n)}+ \tfrac{\sqrt{\log(n)}}{\varepsilon})$, so we get an additive relation rather than a multiplicative relation. In contrast, sampling a Gaussian noise, even though the variance in each coordinate is $O(\log(n)\cdot \varepsilon^{-2})$ and independent of $d$, the bound on the noise in this case is greater by a factor of $\sqrt{d}$, resulting in the bound $\|\vec H_t\| = O(\sqrt{d\log(n)}/\varepsilon)$. Second, the bound on $\gamma$ is different in the two cases. For Wishart noise, we are able to use the special structure of $\vec H_t$ being the Gram matrix of multivariate Gaussian samples to infer the $\|\vec h_t\|_{\inv{\vec H_t}}$ is basically a projection of a sample from a spherical Gaussian onto a $d$-dimensional space~--- making the bound $\gamma$ \emph{independent} of $mk$ (the degrees of freedom in the Wishart noise) and depends solely on $d$. As a result, $\gamma \ll \rho_{\max}$ in the Wishart noise case, and we have already established that $\rho_{\max}$ has additive dependencies between $\sqrt d$ and $\varepsilon^{-1}$. In contrast, we are unable to leverage on any structure in the Gaussian noise case, making our $\gamma$ bound very similar to $\rho_{\max}$ which is in this case $O(\sqrt d/\varepsilon)$.

Admittedly, we do not know if this is merely an artifact of our
analysis, or rather truly a difference in the bound. It is possible
that slightly revising the proof of \cref{thm:linucb-regret} could
leads to different bounds, which are identical between the two
privacy-preserving techniques. To attempt to test this empirically
requires ridiculously large $d$ and $1/\varepsilon$, which is far
beyond what we could handle.\fi

We now provide the missing privacy
proofs from the main body of the paper.  First, we give the omitted
proof from \cref{sec:dp-wishart}.

\PropWishartTails*
\begin{proof}
  Seeing as $\vec H_t\sim\Wishart_d(\tilde L^2 \Eye, mk)$,
  straight-forward bounds on the eigenvalues of the Wishart
  distribution (e.g.~\cite{SheffetPrivateApproxRegression2015},
  Lemma~A.3) give that w.p.\ $\geq 1- \alpha/2n$ all of the
  eigenvalues of $\vec H_t$ lie in the interval
  $\tilde L^2 \paren[\big]{\sqrt{mk} \pm \paren[\big]{\sqrt{d} +
      \sqrt{2\ln(8n/\alpha)}}}^2$. To bound
  $\norm{\inv{\vec H_t}}{\vec h_t}$ we draw back to the definition of
  the Wishart distribution as the Gram matrix of samples from a
  multivariate Gaussian $\Normal(\vec 0, \tilde L^2\Eye)$. Denote this
  matrix of Gaussians as $[\vec Z ; \vec z]$ where
  $\vec Z\in \Real^{mk\times d}$ and $\vec z \in \Real^{mk}$, and we
  have that $\vec H_t = \transp{\vec Z} \vec Z$ and
  $\vec h_t = \transp{\vec Z} \vec z$, thus
  $\norm{\inv{\vec H_t}}{\vec h_t} = \sqrt{ \transp{\vec z} \vec Z
    \inv{(\XtX{\vec Z})} \transp{\vec Z} \vec z }$. The
  matrix $\vec Z \inv{(\XtX{\vec Z})} \transp{\vec Z}$ is a
  projection matrix onto a $d$-dimensional space, and projecting the
  spherical Gaussian $\vec z$ onto this subspace results in a
  $d$-dimensional spherical Gaussian. Using concentration bounds on
  the $\chi^2$-distribution (\cref{claim:chi2-tails}) we have that
  w.p.\ $\geq 1- \alpha/2n$ it holds that
  $\norm{\inv{\vec H_t}}{\vec h_t} \le \gamma \defeq \tilde L\paren[\big]{\sqrt{d} +
    \sqrt{2\ln(2n/\alpha)}}$.
  
  It is straightforward to modify these bounds for the shifted
  regularizer matrix $\vec H_t' \defeq \vec H_t - c\Eye$; the minimum
  and maximum eigenvalues are bounded as
  $\rho_{\min}' = \rho_{\min} - c$ and
  $\rho_{\max}' = \rho_{\max} - c$, respectively.  The value of $c$ in
  \cref{eq:c} is chosen so that
  $\rho_{\min}' = \rho_{\min}-c = \rho_{\max} - \rho_{\min} = 4\tilde
  L^2\sqrt{mk}(\sqrt d + \sqrt{2\ln(\nicefrac{8n}{\alpha})})$.  It
  follows that
  $\rho_{\max}' = \rho_{\max} - c = \rho_{\min}' + \rho_{\max} -
  \rho_{\min} = 2\rho_{\min}'$.  Finally,
  \cref{claim:shifted-matrix-norm} gives
  \begin{align*}
    \norm{\inv{\vec H_t'}}{\vec h_t}
    &\le \norm{\inv{\vec H_t}}{\vec
    h_t}\sqrt{\rho_{\min}/\rho_{\min}'}
    \le \gamma \sqrt{\rho_{\min}/\rho_{\min}'} \\
    &= \tilde L\paren[\big]{\sqrt{d} + \sqrt{2\ln(2n/\alpha)}}
      \sqrt{\frac{
      \tilde L^2\paren[\big]{\sqrt{mk} - \sqrt{d} - \sqrt{2\ln(8n/\alpha)}}^2
      }{
      4\tilde L^2\sqrt{mk}\paren[\big]{\sqrt d + \sqrt{2\ln(8n/\alpha)}}
      }}\\
    &\leq \tilde L \frac{\paren[\big]{\sqrt{mk} - \sqrt{d} - \sqrt{2\ln(8n/\alpha)}}\sqrt{\sqrt{d} + \sqrt{2\ln(2n/\alpha)}}} {4(mk)^{1/4}} \\
    &\leq \tilde L \sqrt{\sqrt{mk} \paren[\big]{\sqrt{d} + \sqrt{2\ln(2n/\alpha)}}} \eqdef \gamma'
      \qedhere
  \end{align*}
\end{proof}

\begin{theorem}[{\citealp[Theorem~4.1]{SheffetPrivateApproxRegression2015}}]%
  \label{thm:wishart-dp}%
  Fix $\varepsilon\in(0,1)$ and $\delta\in(0,1/e)$.  Let
  $A\in\Real^{n\times p}$ be a matrix whose rows have $l_2$-norm
  bounded by $\tilde L$.  Let $W$ be a matrix sampled from the
  $d$-dimensional Wishart distribution with $k$ degrees of freedom
  using the scale matrix $\tilde L^2\Eye{p}$ (i.e.\
  $W \sim \Wishart_p(\tilde L^2\Eye{p}, k)$) for
  $k \ge p + \floor[\big]{\frac{14}{\varepsilon^2}\cdot 2\log(4/\delta)}$.
  Then outputting $\XtX{A} + N$ is
  $(\varepsilon,\delta)$-differentially private with respect to
  changing a single row of $A$.
\end{theorem}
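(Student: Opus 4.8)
The plan is to compute the privacy loss directly from the probability density of the released matrix and to control it with high probability over the added noise. Since releasing $\XtX{\vec A} + \vec W$ with $\vec W \sim \Wishart_p(\tilde L^2\Eye, k)$ is pure post-processing of the random matrix $\vec M = \XtX{\vec A} + \vec W$, and $\vec W$ is invertible almost surely (because $k > p$), the density of $\vec M$ under input $\vec A$ is the Wishart density evaluated at $\vec W = \vec M - \XtX{\vec A}$. I would fix a pair of neighbors $\vec A,\vec A'$ differing only in a single row ($\vec a$ versus $\vec a'$, each of norm at most $\tilde L$), set $\vec\Delta \defeq \XtX{\vec A}-\XtX{\vec A'} = \vec a\transp{\vec a}-\vec a'\transp{\vec a'}$ so that $\vec M - \XtX{\vec A'} = \vec W + \vec\Delta$, and use $f_{\Wishart}(\vec W)\propto{\det(\vec W)}^{(k-p-1)/2}\exp(-\tfrac{1}{2\tilde L^2}\tr\vec W)$ to write the privacy-loss random variable in the clean form
\[
  \mathcal{L}(\vec M) = -\frac{k-p-1}{2}\ln\det(\Eye+\inv{\vec W}\vec\Delta) + \frac{1}{2\tilde L^2}\tr\vec\Delta .
\]
It then suffices to show $\abs{\mathcal L}\le\varepsilon$ with probability at least $1-\delta$ over $\vec W$, which yields $(\varepsilon,\delta)$-differential privacy.

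The next step is to exploit that $\vec\Delta$ has rank at most two. Writing $\vec\Delta = \vec U\vec C\transp{\vec U}$ with $\vec U = [\vec a\ \vec a']$ and $\vec C = \operatorname{diag}(1,-1)$, the Sylvester determinant identity collapses the $p$-dimensional determinant to a $2\times2$ one,
\[
  \det(\Eye+\inv{\vec W}\vec\Delta) = (1+\alpha)(1-\gamma)+\beta^2 ,
\]
where $\alpha = \norm{\inv{\vec W}}{\vec a}^2$, $\gamma = \norm{\inv{\vec W}}{\vec a'}^2$, and $\beta = \transp{\vec a}\inv{\vec W}\vec a'$; by Cauchy--Schwarz $\beta^2\le\alpha\gamma$, so this determinant lies in $[1-\gamma,\,1+\alpha]$. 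Thus $\mathcal L$ becomes a function of the three scalar quadratic forms $\alpha,\beta,\gamma$ of the inverse matrix $\inv{\vec W}$, together with the deterministic term $\tfrac{1}{2\tilde L^2}\tr\vec\Delta = \tfrac{1}{2\tilde L^2}(\norm{}{\vec a}^2-\norm{}{\vec a'}^2)\in[-\tfrac12,\tfrac12]$.

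The crucial observation I would make explicit is that the leading contribution cancels. Expanding $\ln\det(\Eye+\inv{\vec W}\vec\Delta)\approx\alpha-\gamma = \tr(\inv{\vec W}\vec\Delta)$ and using $\Ex{\inv{\vec W}} = \tfrac{1}{(k-p-1)\tilde L^2}\Eye$ for the inverse-Wishart matrix, the term $-\tfrac{k-p-1}{2}(\alpha-\gamma)$ matches $-\tfrac{1}{2\tilde L^2}\tr\vec\Delta$ in expectation and exactly cancels the deterministic trace term. Consequently $\mathcal L$ is governed by the \emph{fluctuations} of $\alpha,\beta,\gamma$ about their means rather than by their (order $1/k$) magnitudes. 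I would bound these using the Wishart eigenvalue tail bound (\cref{claim:wishart-tails}): on the event that all eigenvalues of $\vec W$ lie in $\tilde L^2(\sqrt k\pm(\sqrt p+\sqrt{2\ln(2/\delta)}))^2$, which holds with probability at least $1-\delta$ once $\sqrt k > \sqrt p+\sqrt{2\ln(2/\delta)}$, one gets $\alpha,\gamma\le\tilde L^2/\lambda_{\min}(\vec W)$ with matching lower bounds, and $\abs{\mathcal L}\le\varepsilon$ follows whenever $k-p$ exceeds the stated $\lfloor\tfrac{28}{\varepsilon^2}\ln(4/\delta)\rfloor$. The failure event of the tail bound contributes precisely the additive $\delta$ in the guarantee.

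The main obstacle I anticipate is the residual bound after the cancellation: obtaining a fluctuation estimate that scales as $\ln(1/\delta)/\varepsilon^2$ \emph{without} an extra factor of $p$. A crude operator-norm bound on $\inv{\vec W}-\Ex{\inv{\vec W}}$ introduces a spurious $\sqrt p$; to land on the clean constant one must treat $\alpha,\gamma,\beta$ as scalar quadratic forms and invoke scalar ($\chi^2$-type) concentration (\cref{claim:chi2-tails}) restricted to the two-dimensional subspace $\operatorname{span}(\vec a,\vec a')$. This reduction is legitimate because the isotropic scale matrix $\tilde L^2\Eye$ makes $\Wishart_p(\tilde L^2\Eye,k)$ rotationally invariant, so the analysis depends only on the restriction of $\vec W$ to that subspace. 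Chasing the absolute constants---and checking that the hypotheses $\varepsilon<1$ and $\delta<1/e$ keep the logarithmic corrections controlled---to arrive exactly at $k\ge p+\lfloor\tfrac{28}{\varepsilon^2}\ln(4/\delta)\rfloor$ is the remaining bookkeeping.
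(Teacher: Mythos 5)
Note first that the paper does not prove this statement at all: it is imported verbatim from \citet[Theorem~4.1]{SheffetPrivateApproxRegression2015}, so your proposal can only be compared with the cited source. Your skeleton matches that source's actual strategy: express the privacy loss as a log-ratio of Wishart densities, collapse the rank-two perturbation $\vec\Delta = \vec a\transp{\vec a} - \vec a'\transp{\vec a'}$ via the Sylvester identity to a $2\times 2$ determinant in $\alpha = \transp{\vec a}\inv{\vec W}\vec a$, $\beta = \transp{\vec a}\inv{\vec W}\vec a'$, $\gamma = \transp{\vec a'}\inv{\vec W}\vec a'$, recognize that the deterministic term $\tr\vec\Delta/(2\tilde L^2)$ (which alone can be as large as $\nicefrac12 > \varepsilon$) must be cancelled by the mean of the log-determinant term, and then control fluctuations at scale $\sqrt{\ln(1/\delta)/(k-p)} \le \varepsilon$. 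You are also right that a crude operator-norm treatment of $\inv{\vec W}$ costs a spurious $\sqrt p$.

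The genuine gap is your justification of the dimension-free reduction, and it is not mere bookkeeping. You claim rotational invariance lets the analysis ``depend only on the restriction of $\vec W$'' to $\operatorname{span}(\vec a,\vec a')$. But the quantity appearing in the loss is $\transp{\vec U}\inv{\vec W}\vec U$ (with $\vec U = [\vec a\ \ \vec a']$), which is \emph{not} the inverse of the compression $\transp{\vec U}\vec W\vec U$: the compression is $\Wishart_2$ with $k$ degrees of freedom, whereas the fact you need is the Schur-complement (inverse-Wishart marginal) property, $(\transp{\vec U}\inv{\vec W}\vec U)^{-1} \sim \Wishart_2\paren[\big]{\tilde L^2(\XtX{\vec U})^{-1},\, k-p+2}$, equivalently that $\transp{\vec a}\inv{\vec W}\vec a$ is distributed as $(\norm{}{\vec a}^2/\tilde L^2)/\chi^2(k-p+1)$. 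The drop from $k$ to $k-p$ degrees of freedom is exactly the content of the hypothesis $k \ge p + \floor[\big]{28\ln(4/\delta)/\varepsilon^2}$. Had you used the naive restriction (with $k$ degrees of freedom), $\alpha$ would concentrate around $\norm{}{\vec a}^2/(k\tilde L^2)$ rather than $\norm{}{\vec a}^2/((k-p-1)\tilde L^2)$, the cancellation against $\tr\vec\Delta/(2\tilde L^2)$ would fail by an uncancelled residue of order $p/k$, and you would be forced to require $k \gtrsim p/\varepsilon$ --- a different theorem. With the correct marginal law in hand, your plan (exact cancellation in expectation, then $\chi^2$-type concentration as in \cref{claim:chi2-tails}, plus second-order terms of size $(k-p)\cdot(\alpha^2+\gamma^2+\beta^2) = O(1/(k-p))$) does close with the stated constants; but as written, the middle of your argument leans on the global eigenvalue event of \cref{claim:wishart-tails}, which, as you yourself concede in the final paragraph, cannot deliver the dimension-free fluctuation bound and should be bypassed entirely for that step.
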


We now give the proof of the lower bound of any private algorithm
under the standard notion of differential privacy under continual
observation, as discussed in \cref{sec:lower-bounds}. First, of
course, we need to define this notion. \DPDefinition{} We now prove
the following.

\clmLinearRegretDP*

\DPLowerBoundProof

\section{Experiments}
\label[section]{sec:experiments}

We performed some experiments with synthetic data to characterize the
performance of the algorithms in this paper.

\paragraph{Setting.} We first describe the
common setting used for all the experiments:
Given a dimension $d$, we first select $\vec\theta^*$ to be a random
unit vector in $\Real^d$ (distributed uniformly on the hyper-sphere,
so that $S = 1$).  Then we construct decision sets of size $K$
($K=d^2$ in our experiments), consisting of one \emph{optimal
  action} and $K-1$ \emph{suboptimal actions}, all of unit length (so
$L = 1$).  The optimal action is chosen uniformly at random from the
$(d-2)$-dimensional set
$\set{\vec x\in\Real^d\given \norm{}{\vec x} = 1, \innerp{\vec x}{\vec
    \theta^*} = 0.75}$.  The suboptimal actions are chosen
independently and uniformly at random from the $(d-1)$-dimensional set
$\set{\vec x\in\Real^d\given \norm{}{\vec x} = 1, \innerp{\vec
    x}{\vec\theta^*} \in [-0.75, 0.65]}$.  This results in a
suboptimality gap of $\Delta=0.1$, since the optimal arm has mean
reward $0.75$ and the suboptimal arms have mean rewards in the
$[-0.75, 0.65]$ interval.  To simulate the contextual bandit setting,
a new decision set is sampled before each round.
\begin{figure}[h]
  \centering
  \includegraphics[width=0.5\columnwidth]{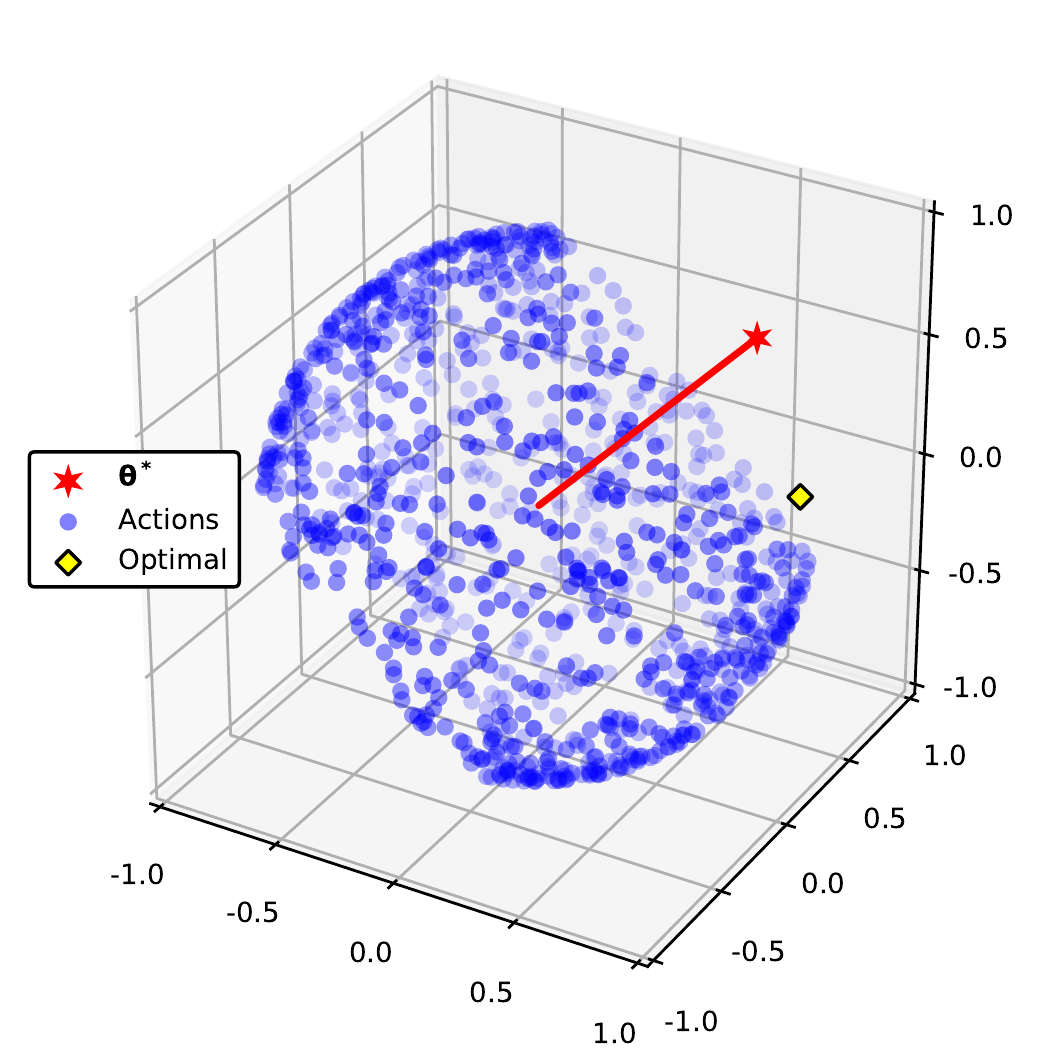}
  \caption{An example decision set in $\Real^3$ with $K=1000$ actions.}
  \label{fig:actions}
\end{figure}

The rewards are either $-1$ or $+1$, with probabilities chosen so that
$\Ex{y_t} = \innerp{\vec x_t}{\vec\theta^*}$.  Therefore $\tilde B=1$
and, being bounded in the $[-1, 1]$ interval, the reward distribution
is subgaussian with $\sigma^2=1$.

The experiments below measure the expected regret in each case; the
confidence parameter is $\alpha=1/n$, which is the usual choice when
one wishes to minimize expected regret.

\subsection{The Dependency of the Pseudo-Regret on the Dimension for Gap Instances}
\label[subsection]{sec:expt-regret-vs-d}

The first experiment was aimed at the open question of
\cref{sec:regret-discussion}, namely whether the gap-dependent regret
is $\Omega(d^2)$ in the dimension of the problem. Thus privacy wasn't
a concern in this particular setting; rather, our goal was to
determine the performance of our general recipe algorithm in a
contextual setting with a clear-cut gap.  We measured the
pseudo-regret of the non-private LinUCB algorithm as a function of the
dimension over $n=10^5$ rounds with the regularizer
$\rho=\vec I_{d\times d}$ and $K=d^2$ arms.  The values of $d$ were
logarithmically spaced in the interval $[4, 64]$.  The results of the
experiment are plotted in \cref{fig:expt1-regret-vs-time}.  The two
sub-experiments differ only in the reward noise distribution used.  In
the first, the reward noise is truly a Gaussian with $\sigma^2=1$,
whereas in the second the reward is $\pm 1$ as described above
(subgaussian with $\sigma^2=1$).  In the latter case, the actual
variance in the reward depends on its expectation, and is somewhat
lower than 1.  This is perhaps why the regret is somewhat lower than
with gaussian reward noise.

\begin{figure}
  \centering
\includegraphics[width=\columnwidth]{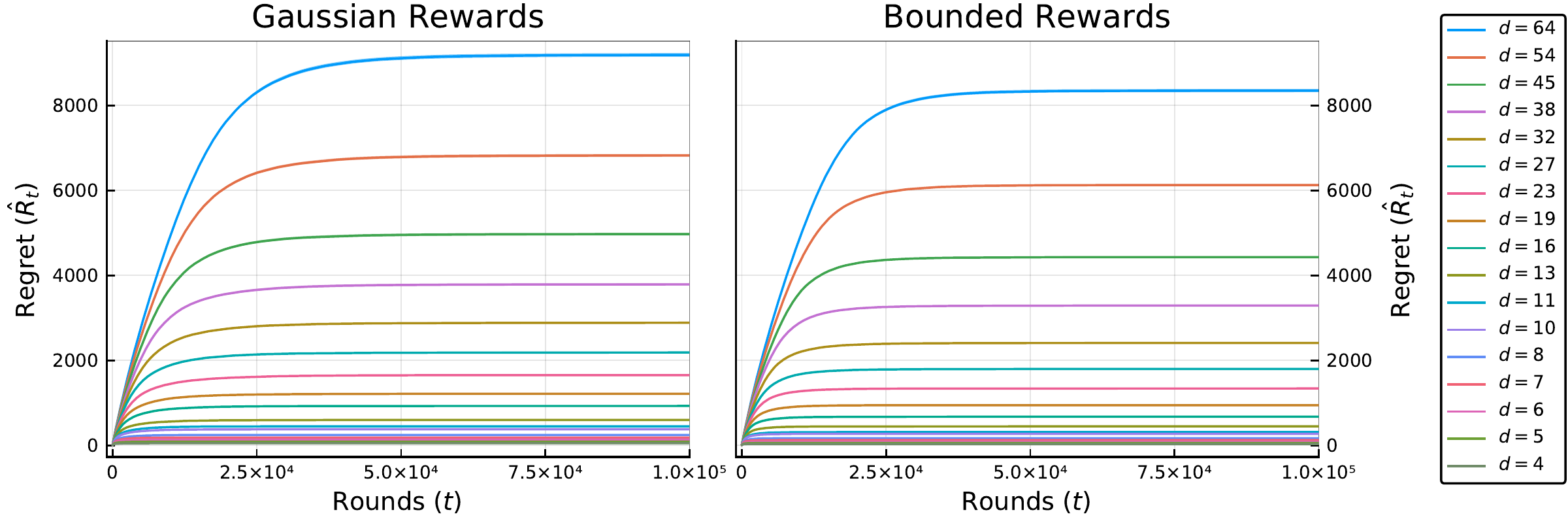}
  \caption{Experiment 1 --- Regret over time for varying dimensions.}%
  \label{fig:expt1-regret-vs-time}
\end{figure}

\Cref{fig:expt1-regret-vs-d} shows the same results with total
accumulated regret plotted against dimension using a log--log scale.
The best-fit line on this plot has a slope of roughly 2, clearly
pointing to a super-linear dependency on $d$.  We conjecture that, in
general, the dependency on $d$ is indeed quadratic.

\begin{figure}
  \centering
  \includegraphics[width=0.5\columnwidth]{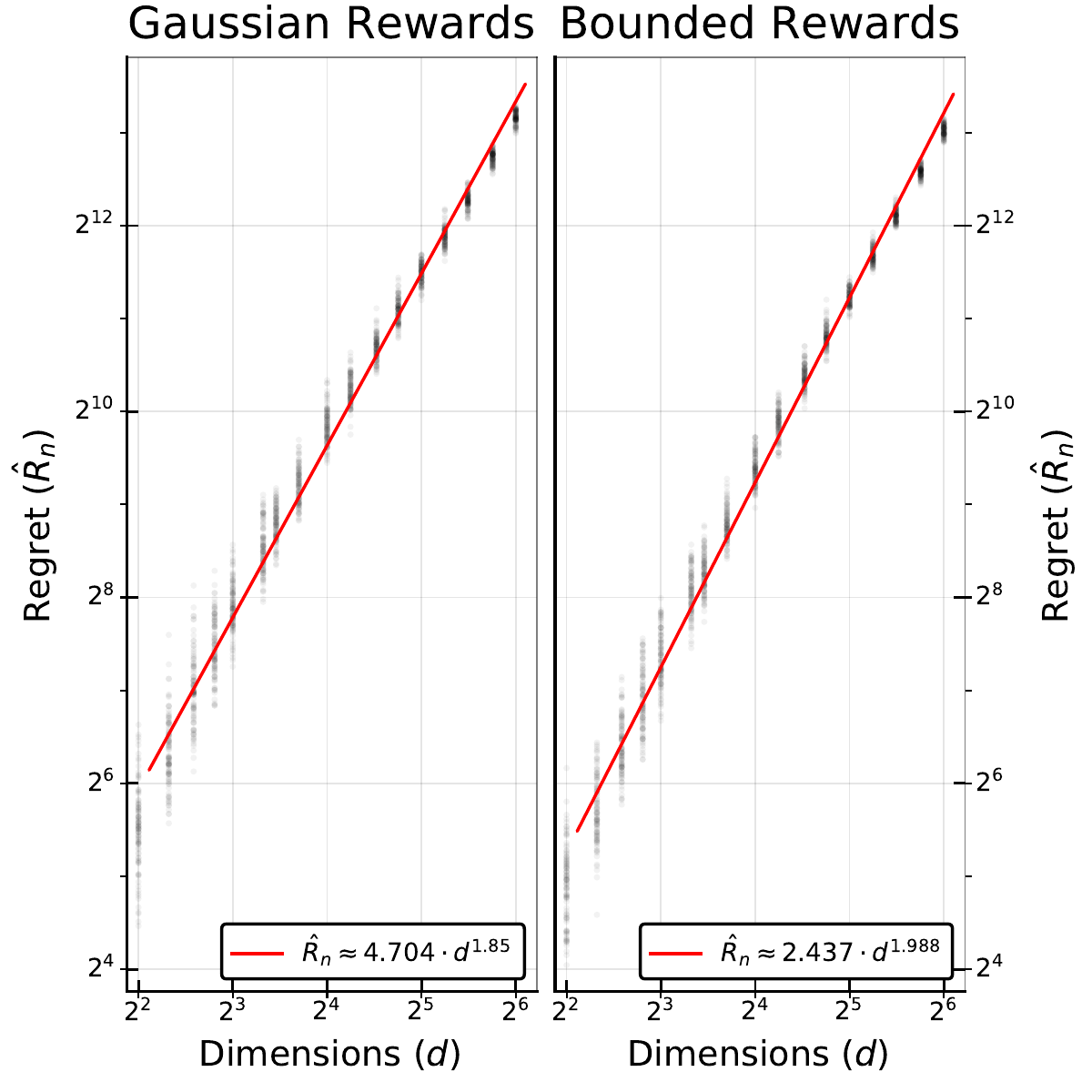}
  \caption{Experiment 1 --- Regret vs.\ dimension with log--log axes
    and best-fit line.}\label{fig:expt1-regret-vs-d}
\end{figure}

\subsection{Empirical Performance of the Privacy-Preserving Algorithms over Time}
\label{sec:expt-regret-time}

This experiment compares the expected regret of the various algorithm
variants presented in this paper.  The two major privacy-preserving
algorithms are based on Wishart noise (\cref{sec:dp-wishart}) and
Gaussian noise (\cref{sec:dp-gauss}); both were run with privacy
parameters $\varepsilon = 1.0$ and $\delta = 0.1$ over a horizon of
$n = 5\times 10^7$ rounds and dimension $d=5$ and $K=d^2=25$.  The
results are shown in \cref{fig:expt2-regret-time}; the curves are
truncated after $2\times 10^7$ rounds because they are essentially
flat after this point.

The sub-figures of \cref{fig:expt2-regret-time} show two settings that
differ in the sub-optimality gap $\Delta$ between the rewards of the
optimal and sub-optimal arms.  In the left sub-figure, the algorithms
are run in a setting without a structured gap ($\Delta = 0$), where we
have not forced all arms to be strictly separated from the optimal arm
by a large reward gap.  Here, all sub-optimal arms are distributed
uniformly on the set
$\set{\vec x\in\Real^d\given \norm{}{\vec x} = 1, \innerp{\vec
    x}{\vec\theta^*} \in [-0.75, 0.75]}$ (and \emph{not} from
$[-0.75,0.65]$ as in the previous experiment).  Note that while we
cannot guarantee that in \emph{all} rounds there exists a gap between
the optimal and sub-optimal arms, it is still true that \emph{in
  expectation} we should observe a gap of $\Theta(\nicefrac 1 K)$
between the optimal arm and the second-best arm (and as $K=25$ this
expected gap is, still, a constant in comparison to $n$).  In the
right sub-figure, however, the sub-optimal arms are indeed separated
by a gap of $\Delta=0.1$ from the optimal arms; their rewards lie in
the interval $[-0.75, 0.65]$ as in the previous experiment.  In both
cases there is always an optimal arm with reward $0.75$.

\begin{figure}
  \centering
  \includegraphics[width=\columnwidth]{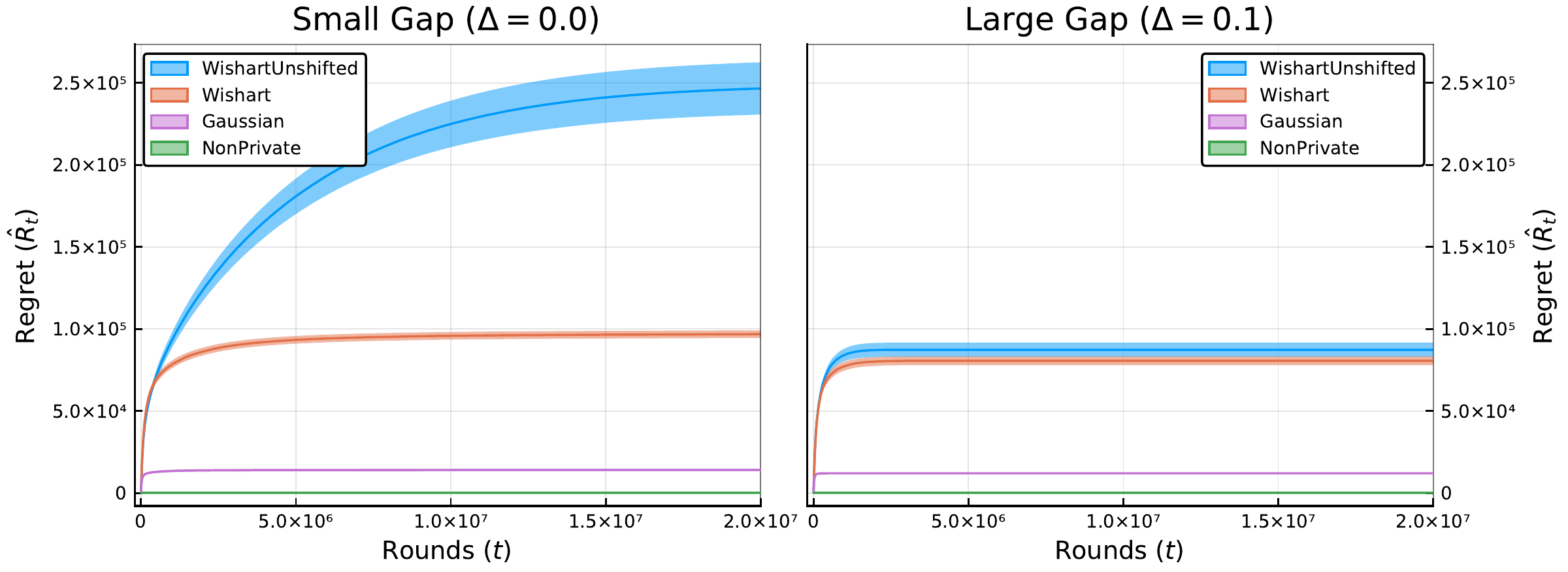}
  \caption{Experiment 2 --- Regret over time, with and without forced
    sub-optimality gaps.}
  \label{fig:expt2-regret-time}
\end{figure}

The figures show the following algorithm variants:
\begin{itemize}
\item The \texttt{NonPrivate} algorithm is LinUCB with regularizer
  $\rho=1.0$.  Its regret is too small to be distinguished from the
  x-axis in this plot.
\item The \texttt{Gaussian} variant is described in
  \cref{sec:dp-gauss}.
\item The \texttt{Wishart} variant is described in
  \cref{sec:dp-wishart} with the shift given in \cref{eq:c}.
\item The \texttt{WishartUnshifted} variant is that of
  \cref{sec:dp-wishart} but with no shift.
\end{itemize}

\paragraph{Results.}

It is apparent that, at least for this setting, the Gaussian noise
algorithm outperforms Wishart noise.  This shows that while the
asymptotic performance of the two algorithms is fairly close, the
constants in the Gaussian version of the algorithm are far better than
the ones in the Wishart-noise based algorithm.

Furthermore, the performance of the \texttt{WishartUnshifted} variant
changes significantly between the two cases --- it has the worst
regret in the no-gap setting ($\Delta=0$) but, surprisingly, it is
statistically indistinguishable from the shifted \texttt{Wishart} variant
in the large gap instance ($\Delta=0.1$).  We investigate this
relationship between the sub-optimality gap and shifted regularizers
in the next experiment.

% Secondly, we comment on the similarity of the curves between the two
% settings (explicit gap vs.\ gap in expectation). This leads us to
% conjecture that one should be able to extend our analysis to the case
% where a gap between the leading arm in each day to the remaining arms
% is large on average. Furthermore, we suspect that bucketing (of days
% according to gaps) should be a promising direction to obtain such an
% analysis, yet this is outside the scope of this work.

\subsection{Empirical Performance of Shifted Regularizers for
  Different Suboptimality Gaps}
\label{sec:expt-gap-shift}

In both the Wishart and Gaussian variants of our algorithm, we use a
\emph{shifted} regularization matrix $\vec H_t \pm c\Eye$, choosing
the shift parameter $c$ to approximately optimize our regret bound in
each case.  This optimal shift parameter turns out not to depend on
the sub-optimality gap $\Delta$ of the problem instance.  The previous
experiment showed, however, that in practice the relative performance
of the shifted and unshifted Wishart variants changes drastically
depending on the gap.  In this experiment, we investigate the impact
of varying the shift parameter for the Wishart and Gaussian mechanisms
under different sub-optimality gaps $\Delta$.

All the parameters are the same as the previous experiment --- the
only difference is the shift parameter; the results are shown in
\cref{fig:expt3-gap-shift}.  The two sub-figures show the performance
of the Wishart and Gaussian variants, respectively.  The x-axis is a
logarithmic scale indicating $\rho_{\min}$, the high-probability lower
bound on the minimum eigenvalue of the shifted regularizer matrix.
$\rho_{\min}$ serves as a good proxy for the shift parameter because
changing one has the effect of shifting the other by the same amount;
it has the added benefit of being meaningfully comparable amongst the
different algorithm variants.  The vertical dotted lines indicate the
$\rho_{\min}$ values corresponding to the algorithms from
\cref{sec:dp-wishart,sec:dp-gauss} for the Wishart and Gaussian
variants, respectively; these are also the algorithms examined in the
previous experiment.  The Gaussian mechanism does not have an
unshifted variant.

\paragraph{Results.}

Tuning the shift parameter appears to significantly affect the
performance only for problem instances with relatively small or zero
sub-optimality gaps.  In the large-gap settings, on the other hand,
having too much regularization does not seem to increase regret
appreciably.  The small-gap settings are exactly those in which
exploration is crucial, so we conjecture that large regularizers
inhibit exploration and thereby incur increased regret.

\begin{figure}
  \centering
  \includegraphics[width=\columnwidth]{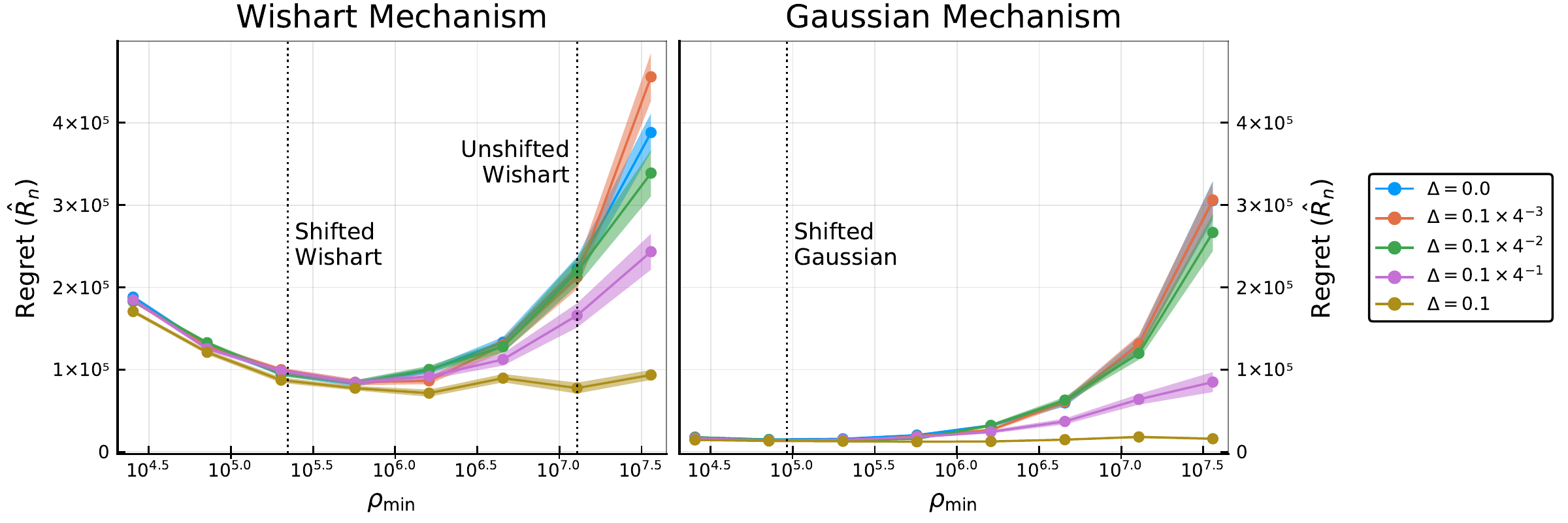}
  \caption{Experiment 3 --- Varying shift parameters with
    different sub-optimality gaps.}%
  \label{fig:expt3-gap-shift}
\end{figure}

\end{document}